\definecolor{Blue}{RGB}{0, 0, 255}
\definecolor{Aquamarine}{RGB}{127, 255, 212}
\definecolor{Sepia}{RGB}{112, 66, 20}
\definecolor{BrickRed}{RGB}{203, 65, 84}
\colorlet{my-red}{BrickRed!90!Sepia}
\colorlet{my-blue}{Aquamarine!30!Blue}
\definecolor{C3}{rgb}{0.839216, 0.152941, 0.156863}
\DeclareMathOperator*{\argmax}{arg\,max} 
\newcommand{\vect}[1]{\boldsymbol{\mathbf{#1}}}
\newcommand{\vectit}[1]{\bm{#1}}
\newcommand{\Iv}{\vectit I}
\crefname{section}{Sec.}{Secs.}
\Crefname{section}{Section}{Sections}
\Crefname{table}{Table}{Tables}
\crefname{equation}{Eq.}{Eqs.}
\theoremstyle{plain}
\newtheorem{theorem}{Theorem}[section]
\newtheorem{lemma}[theorem]{Lemma}
\theoremstyle{definition}
\newtheorem{assumption}[theorem]{Assumption}
\theoremstyle{remark}
\newtheorem{remark}[theorem]{Remark}
\crefname{section}{Sec.}{Secs.}
\Crefname{section}{Section}{Sections}
\Crefname{table}{Table}{Tables}
\crefname{table}{Table}{Tables}
\newcommand{\cmark}{\ding{51}}%
\newcommand{\xmark}{\ding{55}}%
\title{Diffusion Models are Certifiably Robust Classifiers}
\author{%
  Huanran Chen, Yinpeng Dong, Shitong Shao, Zhongkai Hao, Xiao Yang, Hang Su, Jun Zhu\\
  $^1$Dept. of Comp. Sci. and Tech., Institute for AI, Tsinghua-Bosch Joint ML Center, THBI Lab \\
  BNRist Center, Tsinghua University, Beijing, 100084, China \;  $^2$RealAI\\
  \texttt{huanran.chen@outlook.com} \; \texttt{\{dongyinpeng, dcszj\}@mail.tsinghua.edu.cn}\\
}
\begin{document}

\maketitle







\begin{abstract}
Generative learning, recognized for its effective modeling of data distributions, offers inherent advantages in handling out-of-distribution instances, especially for enhancing robustness to adversarial attacks. Among these, diffusion classifiers, utilizing powerful diffusion models, have demonstrated superior empirical robustness.
However, a comprehensive theoretical understanding of their robustness is still lacking, raising concerns about their vulnerability to stronger future attacks. In this study, we prove that diffusion classifiers possess $O(1)$ Lipschitzness, and establish their certified robustness, demonstrating their inherent resilience. To achieve non-constant Lipschitzness, thereby obtaining much tighter certified robustness, we generalize diffusion classifiers to classify Gaussian-corrupted data. This involves deriving the evidence lower bounds (ELBOs) for these distributions, approximating the likelihood using the ELBO, and calculating classification probabilities via Bayes' theorem. Experimental results show the superior certified robustness of these Noised Diffusion Classifiers (NDCs). Notably, we achieve over 80\% and 70\% certified robustness on CIFAR-10 under adversarial perturbations with \(\ell_2\) norms less than 0.25 and 0.5, respectively, using a single off-the-shelf diffusion model without any additional data.
\end{abstract}


\section{Introduction}

Despite the unprecedented success of discriminative learning~\citep{ng2001discriminative_vs_generative, lecun2015deep}, they are vulnerable to adversarial examples, which are generated by imposing human-imperceptible perturbations on natural examples but can mislead target models into making erroneous predictions~\citep{szegedy2013intriguing, goodfellow2014explaining}. 
To improve the robustness of discriminative learning, numerous defense techniques have been developed~\citep{pgd, zhang2019theoretically, rebuffi2021fixing_data_aug_improve_at, liao2018defense, nie2022diffpure}. However, since discriminative models are directly trained for specific tasks, they often find shortcuts in the objective function, exhibiting non-robust nature~\citep{geirhos2020shortcut, robinson2021can}. For example, adversarial training exhibits poor generalization against unseen threat models~\citep{tramer2019adversarial, nie2022diffpure}, and purification-based methods typically cannot completely remove adversarial perturbations, leaving subsequent discriminative classifiers still affected by these perturbations~\citep{athalye2018obfuscated_gradient, lee2023robust, chen2023robust, kang2024diffattack}.

On the contrary, generative learning is tasked with modeling the entire data distribution, which offers a degree of inherent robustness without any adversarial training~\citep{zimmermann2021score_based_generative_classifier, grathwohl2019your_classifier_secret_ebm}. As the current state-of-the-art generative approach, diffusion models provide a more accurate estimation of the score function across the entire data space. Thus, they have been effectively utilized as generative classifiers for robust classification, known as diffusion classifiers \cite{li2023your, chen2023robust, clark2023text}.
Specifically, they calculate the classification probability \( p(y|\vect{x}) \propto p(\vect{x}|y) p(y) \) through Bayes' theorem and approximate the log likelihood \( \log p(\vect{x}|y) \) via the evidence lower bound (ELBO). 
This method establishes a connection between robust classification and the fast-growing field of pre-trained generative models. Although promising, there is still a lack of rigorous theoretical analysis, raising questions about whether their robustness is overestimated and whether they will be vulnerable to (potentially) stronger future adaptive attacks. In this work, we use theoretical tools to derive the certified robustness of diffusion classifiers, fundamentally address these concerns, and gain a deeper understanding of their robustness.

We begin by analyzing the smoothness of diffusion classifiers through the derivation of their Lipschitzness. We prove that diffusion classifiers possess an \(O(1)\) Lipschitz constant, demonstrating their inherent robustness. This allows us to certify the robust radius of diffusion classifiers by dividing the gap between predictions on the correct class and the incorrect class by their Lipschitz constant. Although we obtain a non-trivial certified radius, it could be much tighter if we could derive non-constant Lipschitzness (i.e., the Lipschitzness at each point). Randomized smoothing~\citep{cohen2019certified, salman2019provably}, a well-researched technique, allows us to obtain tighter Lipschitzness based on the output at each point. However, randomized smoothing requires the base classifier (e.g., diffusion classifiers) to process Gaussian-corrupted data \(\vect{x}_\tau\), where \(\tau\) is the noise level. To address this, we generalize diffusion classifiers to calculate \(p(y|\vect{x}_\tau)\) by estimating \(\log p(\vect{x}_\tau|y)\) using its ELBO and then calculating \(p(y|\vect{x}_\tau)\) using Bayes' theorem. We named these generalized diffusion classifiers as \textbf{Noised Diffusion Classifiers}. Hence, the core problem becomes deriving the ELBO for noisy data.

Naturally, we conceive to generalize the ELBO in \citet{sohl2015deep} and \citet{kingma2021variational_diffusion} to \(\tau \neq 0\), naming the corresponding diffusion classifier the Exact Posterior Noised Diffusion Classifier (EPNDC). EPNDC achieves state-of-the-art certified robustness among methods that do not use extra data. Surprisingly, we discover that one can calculate the expectation or ensemble of this ELBO without any additional computational overhead. This finding allows us to design a new diffusion classifier that functions as an ensemble of EPNDC but does not require extra computational cost. We refer to this enhanced diffusion classifier as the Approximated Posterior Noised Diffusion Classifier (APNDC). 
Towards the end of this paper, we reduce the time complexity of diffusion classifiers by significantly decreasing variance through the use of the same noisy samples for all classes and by proposing a search algorithm to narrow down the candidate classes for the diffusion classifier.

Experimental results substantiate the superior performance of our methods. Notably, we achieve 82.2\%, 70.7\%, and 54.5\% at $\ell_2$ radii of 0.25, 0.5, and 0.75, respectively, on the CIFAR-10 dataset. These results surpass the previous state-of-the-art \cite{xiao2022densepure} by absolute margins of 5.6\%, 6.1\%, and 4.1\% in the corresponding categories. Additionally, our approach registers a clean accuracy of 91.2\%, outperforming \citet{xiao2022densepure} by 3.6\%. Moreover, our time complexity reduction techniques decrease the computational burden by a factor of 10 on CIFAR-10 and by a factor of 1000 on ImageNet, without compromising certified robustness. Furthermore, our comparative analysis with heuristic methods not only highlights the tangible benefits of our theoretical advancements but also provides valuable insights into several evidence lower bounds and the inherent robustness of diffusion classifiers.

The contributions of this paper are summarized as follows:

\begin{itemize}
\item We derive the Lipschitz constant and certified lower bound for diffusion classifiers, demonstrating their inherent provable robustness.
\item We generalize diffusion classifiers to classify noisy data, enabling us to derive non-constant Lipschitzness and state-of-the-art certified robustness.
\item We propose a variance reduction technique that greatly reduces time complexity without compromising certified robustness.
\end{itemize}

\vspace{-1ex}
\section{Background}
\subsection{Diffusion Models}

For simplicity in derivation, we introduce a general formulation that covers various diffusion models. In \cref{appendix:unify_diffusion_definition}, we show that common models, such as \citet{ddpm}, \citet{song2020score_diffusion_sde}, \citet{kingma2021variational_diffusion} and \citet{karras2022elucidating}, can be transformed to align with our definition.

Given $\vect{x}:=\vect{x}_0 \in [0,1]^{D}$ with a data distribution $q(\vect{x}_0)$, the forward diffusion process incrementally introduces Gaussian noise to the data distribution, resulting in a continuous sequence of distributions $\{q(\vect{x}_t):=q_t(\vect{x}_t)\}_{t=1}^T$ by:
\begin{equation}
    q(\vect{x}_t)=\int q(\vect{x}_0)q(\vect{x}_t|\vect{x}_0) d\vect{x}_0,
\end{equation}
where $q(\vect{x}_t|\vect{x}_0) = \mathcal{N}(\vect{x}_t;\vect{x}_0, \sigma_t^2 \vect{I})$, i.e., $\vect{x}_t=\vect{x}_0 + \sigma_t \vect{\epsilon}, \vect{\epsilon} \sim \mathcal{N}(\vect{0}, \vect{I})$. 
Typically, \(\sigma_t\) monotonically increases with \(t\), establishing one-to-one mappings \(t(\sigma)\) from \(\sigma\) to \(t\) and \(\sigma(t)\) from \(t\) to \(\sigma\). Additionally, \(\sigma_T\) is large enough that \(q(\vect{x}_T)\) is approximately an isotropic Gaussian distribution.
Given $p:=p_\theta$ as the parameterized reverse distribution with prior \(p(\vect{x}_{T}) = \mathcal{N}(\vect{x}_{T};\vect{0}, \sigma_T^2\Iv)\), the diffusion process used to synthesize real data is defined as a Markov chain with learned Gaussian distributions \citep{ddpm, song2020score_diffusion_sde}:
\begin{equation}
    p(\vect{x}_{0:T}) = p(\vect{x}_{T}) \prod_{t=1}^T p(\vect{x}_{t-1}|\vect{x}_t).
\end{equation}
In this work, we parameterize the reverse Gaussian distribution $p(\vect{x}_{t-1}|\vect{x}_{t})$ using a neural network $\vect{h}_\theta(\vect{x}_{t},t)$ as
\begin{equation}
    \begin{aligned}
        &p(\vect{x}_{t-1}|\vect{x}_{t})= \mathcal{N}(\vect{x}_{t-1};\vect{\mu}_{\theta}(\vect{x}_t, t), \frac{\sigma_t^2 (\sigma_{t+1}^2-\sigma_t^2)}{\sigma_{t+1}^2}\mathbf{I}), \\
        &\vect{\mu}_{\theta}(\vect{x}_t, t)= \frac{
(\sigma_{t}^2-\sigma_{t-1}^2)\vect{h}_{\theta}(\vect{x}_{t}, \sigma_{t})
+
\sigma_{t-1}^2\vect{x}_{t}}
{\sigma_{t}^2}.
    \end{aligned}
\end{equation}
The parameter $\theta$ is usually trained by optimizing the evidence lower bound~(ELBO) on the log likelihood~\cite{sohl2015deep,ddpm,kingma2021variational_diffusion}:
\begin{equation}
\label{equation:elbo_uncondition}
    \begin{aligned}
        \log p(\vect{x}_0)
        \geq    -  \sum_{t=1}^{T}\mathbb{E}_{\vect{\epsilon}} \left[ w_t \|\vect{h}_{\theta}(\vect{x}_{t}, \sigma_t) - \vect{x}_0\|_2^2 \right]  + C_1,
    \end{aligned}
\end{equation}
where \(w_t=\frac{\sigma_{t+1-\sigma_t}}{\sigma_{t+1}^3}\) is the weight of the loss at time step $t$ and $C_1$ is a constant. Similarly, the conditional diffusion model $p(\vect{x}_{t-1}|\vect{x}_{t},y)$ is parameterized by $\vect{h}_{\theta}(\vect{x}_{t}, \sigma_t, y)$. A similar lower bound on conditional log likelihood is
\begin{equation}
\label{equation:elbo_condition}
    \begin{aligned}
        \log p(\vect{x}_0| y)
        \geq -  \sum_{t=1}^{T}\mathbb{E}_{\vect{\epsilon}} \left[ w_t \|\vect{h}_{\theta}(\vect{x}_{t}, \sigma_t, y) - \vect{x}_0\|_2^2 \right]  + C,
    \end{aligned}
\end{equation}
where \(C\) is another constant.

\begin{figure*}[t]
    \centering
    \includegraphics[width=0.99\linewidth,trim={0cm 0cm 0cm 0cm},clip]{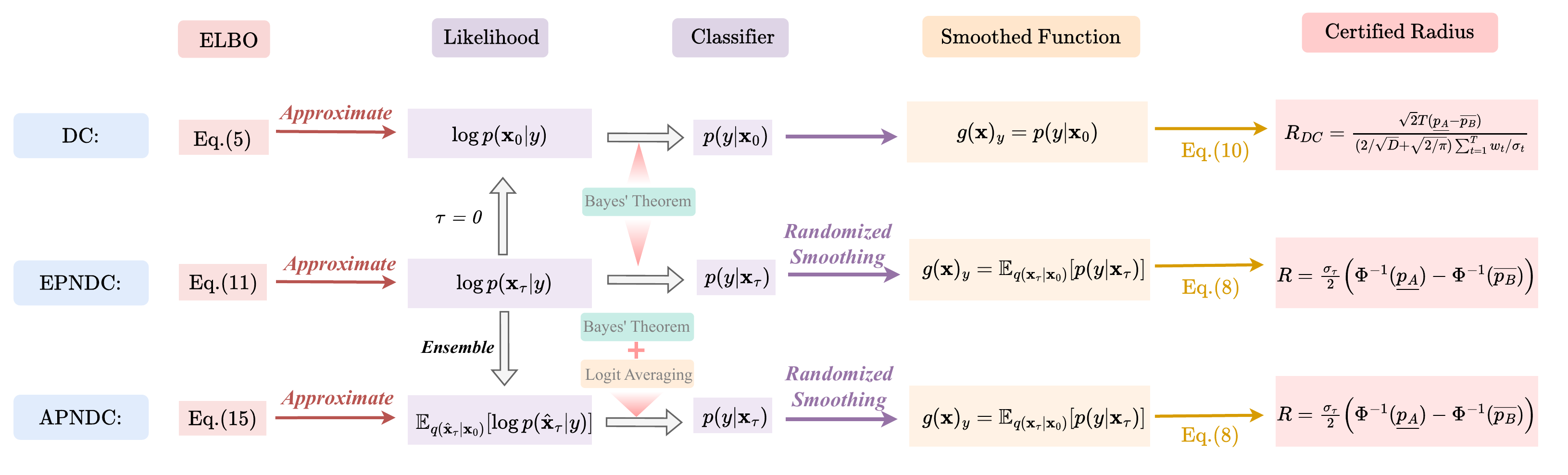}
    \caption{Illustration of our theoretical contributions. We derive the Lipschitz constant and the corresponding certified radius for diffusion classifiers~\citep{chen2023robust}. Additionally, we introduce two novel evidence lower bounds, which are used to approximate the log likelihood. These lower bounds are then employed to construct classifiers based on Bayes' theorem. By applying randomized smoothing to these classifiers, we derive their certified robust radii.}  
    \label{fig:theoretical_contribution}
\end{figure*}

\subsection{Diffusion Classifiers}
Diffusion classifier \cite{chen2023robust,clark2023text,li2023your} \( \textsc{DC}(\cdot): [0,1]^D \rightarrow \mathbb{R}^K \) is a generative classifier that uses a single off-the-shelf diffusion model for robust classification. It first approximates the conditional likelihood \(\log p(y|\vect{x}_0)\) via conditional ELBO (i.e., using ELBO as logit), and then calculates the class probability $p(y|\vect{x}_0) \propto  p(\vect{x}_0|y)$ through Bayes' theorem, with the assumption that \(p(y)\) is a uniform prior: 
\begin{equation}
\label{eq:DC}
    \begin{aligned}
        \textsc{DC}(\vect{x}_0)_y&:=\frac{\exp(- \frac{1}{DT} \sum_{t=1}^{T} \mathbb{E}_{\vect{\epsilon}} \left[ w_t \|\vect{h}_{\theta}(\vect{x}_{t}, \sigma_t, y) - \vect{x}_0\|_2^2 \right] )}{\sum_{\hat{y}}\exp{(-  \frac{1}{DT} \sum_{t=1}^{T} \mathbb{E}_{\vect{\epsilon}} \left[ w_t \|\vect{h}_{\theta}(\vect{x}_{t}, \sigma_t, \hat{y}) - \vect{x}_0\|_2^2 \right] )}}  \\
        &\approx 
        \frac{\exp(\log p(\vect{x}_0|y))}{\sum_{\hat{y}}\exp{(\log p(\vect{x}_0|\hat{y}))}} 
         = \frac{ p(\vect{x}_0|y)p(y)}{\sum_{\hat{y}} p(\vect{x}_0|\hat{y})p(\hat{y})}
        \triangleq p(y|\vect{x}_0).  \\
    \end{aligned}
\end{equation}
In other words, it utilizes the ELBO of each conditional likelihood \(\log p(y|\vect{x}_0)\) as the logit of each class. This classifier achieves state-of-the-art empirical robustness across several types of threat models and can generalize to unseen attacks as it does not require training on adversarial examples~\citep{chen2023robust}. However, there is still lacking a rigorous theoretical analysis, leaving questions about whether they will be vulnerable to (potentially) future stronger adaptive
attacks.


\subsection{Randomized Smoothing}
\label{sec:background:rs}

Randomized smoothing~\citep{lecuyer2019certified, cohen2019certified, yang2020randomized, kumar2020curse} is a model-agnostic technique designed to establish a lower bound of robustness against adversarial examples. It is scalable to large networks and datasets and achieves state-of-the-art performance in certified robustness~\citep{yang2020randomized}. This approach constructs a smoothed classifier by averaging the output of a base classifier over Gaussian noise. Owing to the Lipschitz continuity of this classifier, it remains stable within a certain perturbation range, thereby ensuring certified robustness.

Formally, given a classifier \( f: [0,1]^D \rightarrow \mathbb{R}^K \) that takes a \( D \)-dimensional input $\vect{x}_0$ and predicts class probabilities over \( K \) classes, the $y$-th output of the smoothed classifier \( g \) is:
\begin{equation}
    g(\vect{x}_0)_y = P(\argmax_{\hat{y}\in\{1,...,K\}} f(\vect{x}_0 + \sigma_\tau \cdot \vect{\epsilon})_{\hat{y}} = y),
\label{eq:certify_smoothed_function}
\end{equation}
where 
\(\vect{\epsilon} \sim \mathcal{N}(\vect{0}, \vect{I})\) is a Gaussian noise and $\sigma_\tau$ is the noise level. Let \(\Phi^{-1}\) denote the inverse function of the standard Gaussian CDF. \citet{salman2019provably} prove that \(\Phi^{-1}(g(\vect{x}_0)_y)\) is \(\frac{1}{\sigma_\tau}\)-Lipschitz. However, the exact computation of \( g(\vect{x}_0) \) is infeasible due to the challenge of calculating the expectation in a high-dimensional space. Practically, one usually estimates a lower bound \(\underline{p_A}\) of \( g(\vect{x}_0)_y \) and an upper bound \(\overline{p_B}\) of \(\max_{\hat{y} \neq y} g(\vect{x}_0)_{\hat{y}}\) using the Clopper-Pearson lemma, and then calculates the lower bound of the certified robust radius \( R \) for class \( y \) as
\begin{equation}
    R = \frac{\sigma_\tau}{2} \left(\Phi^{-1}(\underline{p_A}) - \Phi^{-1}(\overline{p_B})\right).
    \label{eq:radius_of_rs}
\end{equation}
Typically, the existing classifiers are trained to classify images in clean distribution $q(\vect{x}_0)$. However, the input distribution in \cref{eq:certify_smoothed_function} is $q(\vect{x}_{\tau})=\int q(\vect{x}_0) q(\vect{x}_{\tau}|\vect{x}_0)d\vect{x}_0$. Due to the distribution discrepancy, $g(\vect{x}_0)$ constructed by classifiers trained on clean distribution $q(\vect{x}_0)$ exhibits low accuracy on $q(\vect{x}_{\tau})$. Due to this issue, we cannot directly incorporate the diffusion classifier \cite{chen2023robust} with randomized smoothing. In this paper, we propose a new category of diffusion classifiers, that can directly calculate \(p(y|\vect{x}_\tau)\) via an off-the-shelf diffusion model.

To handle Gaussian-corrupted data, early work \cite{cohen2019certified,salman2019provably} trains new classifiers on $q(\vect{x}_{\tau})$ but is not applicable to pre-trained models.
Orthogonal to our work, there is also some recent work on using denoiser for certified robustness~\cite{salman2020denoised, wu2022denoising}, and some of them choose diffusion model as denoiser~\cite{carlini2022certified_diffpure_free,xiao2022densepure,zhang2023diffsmooth}.
They first denoise $\vect{x}_{\tau} \sim q(\vect{x}_{\tau})$,  followed by an off-the-shelf discriminative classifier for classifying the denoised image. However, the efficacy of such an algorithm is largely constrained by the performance of the discriminative classifier. 

\section{Methodology}

In this section, we first derive an upper bound of the Lipschitz constant in \cref{sec3:lipschitz}. Due to the difficulty in deriving a tighter Lipschitzness for such a mathematical form, we propose two variants of Noised Diffusion Classifiers (NDCs), and integrate them with randomized smoothing to obtain a tighter robust radius, as detailed in \cref{sec3:epndc} and \cref{,sec3:apndc}. Finally, we propose several techniques in \cref{sec3:accelerate} to reduce time complexity and enhance scalability for large datasets.

\subsection{The Lipschitzness of Diffusion Classifiers}
\label{sec3:lipschitz}
We observe that the logits \(-\frac{1}{DT}\sum_{t=1}^{T}w_t \mathbb{E}_{\vect{\epsilon}} \left[  \|\vect{h}_{\theta}(\vect{x}_{t}, \sigma_t) - \vect{x}_0\|_2^2 \right]\) of diffusion classifier in \cref{eq:DC} can be decomposed as
\begin{equation}
    \begin{aligned}
       -\frac{1}{DT} \sum_{t=1}^{T} w_t\left( \mathbb{E}_{\vect{\epsilon}} \left[  \|\vect{h}_{\theta}(\vect{x}_{t}, \sigma_t)\|_2^2\right]  + \|\vect{x}_0\|_2^2 - 2 \mathbb{E}_{\vect{\epsilon}}\left[\vect{h}_{\theta}(\vect{x}_{t}, \sigma_t)^\top\right]\vect{x}_0 \right) .
    \end{aligned}
\end{equation}
Given that $\mathbb{E}_{\vect{\epsilon}} \left[  \|\vect{h}_{\theta}(\vect{x}_{t}, \sigma_t)\|_2^2\right]$ and $\mathbb{E}_{\vect{\epsilon}}\left[\vect{h}_{\theta}(\vect{x}_{t}, \sigma_t)\right]$ are smoothed by Gaussian noise and lie within the range \([0,1]^D\), they satisfy the Lipschitz condition \citep{salman2019provably}. Consequently, the logits of diffusion classifiers should satisfy a Lipschitz condition. Thus, it can be inferred that the entire diffusion classifier is robust and possesses a certain robust radius.

We derive an upper bound for the Lipchitz constant of diffusion classifiers in the following theorem:
\begin{theorem}
\label{theorem:lipscitzness_of_dc}
     The upper bound of Lipschitz constant of diffusion classifier is given by:
    \begin{equation}
        |\textsc{DC}(\vect{x}_0+\vect{\delta})_y - \textsc{DC}(\vect{x}_0)_y| \leq \frac{1}{2\sqrt{2}} \sum_{t=1}^T \frac{w_t}{\sigma_t T}(\sqrt{\frac{2}{\pi}} + \frac{2}{\sqrt{D}})\|\vect{\delta}\|_2.
    \end{equation}
    If one can get a lower bound $\underline{p_A}$ for $\text{DC}(\vect{x}_0)_y$ and a upper bound $\overline{p_B}$ for $\max_{\hat{y} \neq y}\text{DC}(\vect{x}_0)_{\hat{y}}$ (e.g., probabilistic bound by Bernstein inequality \citep{maurer2009empiricalbernstein}), the lower bound of certified radius for diffusion classifier can be obtained:
    \begin{equation}
    \label{eq:dc_r}
        R_{DC} = \frac{\sqrt{2}T(\underline{p_A}-\overline{p_B})}{(2/\sqrt{D}+\sqrt{2/\pi})\sum_{t=1}^T w_t/\sigma_t}.
    \end{equation}
\end{theorem}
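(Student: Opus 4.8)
The plan is to bound the Lipschitz constant of each softmax output $\textsc{DC}(\vect{x}_0)_y$ by first controlling the underlying logits, and then to read off the certified radius from the standard ``gap over twice the Lipschitz constant'' argument. Starting from the decomposition already displayed, I write the class-$y$ logit as $\ell_y(\vect{x}_0) = -\frac{1}{DT}\sum_t w_t\big(A_{t,y}(\vect{x}_0) - 2\vect{B}_{t,y}(\vect{x}_0)^\top\vect{x}_0 + \|\vect{x}_0\|_2^2\big)$, where $A_{t,y}(\vect{x}_0) = \mathbb{E}_{\vect{\epsilon}}[\|\vect{h}_\theta(\vect{x}_0+\sigma_t\vect{\epsilon},\sigma_t,y)\|_2^2]$ and $\vect{B}_{t,y}(\vect{x}_0) = \mathbb{E}_{\vect{\epsilon}}[\vect{h}_\theta(\vect{x}_0+\sigma_t\vect{\epsilon},\sigma_t,y)]$ are Gaussian-smoothed quantities. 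The first observation is that $\|\vect{x}_0\|_2^2$ is identical across all classes and therefore cancels inside the softmax, so only $A_{t,y}$ and the cross term $\vect{B}_{t,y}^\top\vect{x}_0$ contribute to the Lipschitzness of $\textsc{DC}(\vect{x}_0)_y$.

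Second, I would establish the basic smoothing estimate behind \citet{salman2019provably}: if $\varphi:\mathbb{R}^D\to[a,b]$ and $\tilde\varphi(\vect{x}_0)=\mathbb{E}_{\vect{\epsilon}}[\varphi(\vect{x}_0+\sigma\vect{\epsilon})]$, then $\|\nabla\tilde\varphi\|_2\le\frac{b-a}{2\sigma}\sqrt{2/\pi}$. The argument differentiates through the Gaussian density rather than through $\varphi$ (integration by parts / Stein's identity), recenters $\varphi$ at $(a+b)/2$, and uses $\mathbb{E}|Z|=\sqrt{2/\pi}$ for $Z\sim\mathcal{N}(0,1)$; crucially this never requires differentiating the network $\vect{h}_\theta$, which is exactly what keeps the bound model-agnostic. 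Applying it to $\|\vect{h}_\theta\|_2^2/D\in[0,1]$ bounds $\|\nabla A_{t,y}\|_2\le\frac{D}{2\sigma_t}\sqrt{2/\pi}$, which after the $1/(DT)$ normalization produces a term proportional to $\frac{1}{\sigma_t}\sqrt{2/\pi}$.

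The main obstacle is the cross term $\vect{B}_{t,y}(\vect{x}_0)^\top\vect{x}_0$, a product of two $\vect{x}_0$-dependent factors and hence not a pure Gaussian smoothing. I would treat it by the product rule, $\nabla_{\vect{x}_0}\big(\vect{B}_{t,y}^\top\vect{x}_0\big)=\vect{B}_{t,y}+J_{\vect{B}_{t,y}}^\top\vect{x}_0$, and bound the two pieces separately: $\|\vect{B}_{t,y}\|_2\le\sqrt{D}$ since each coordinate of the denoiser lies in $[0,1]$ (this is what yields the $2/\sqrt{D}$ factor), while the Jacobian piece is handled by a second application of Stein's identity, $J_{\vect{B}_{t,y}}=\frac{1}{\sigma_t}\mathbb{E}_{\vect{\epsilon}}[\vect{h}_\theta\vect{\epsilon}^\top]$, and bounded using the boundedness of $\vect{h}_\theta^\top\vect{x}_0$ together with $\|\vect{x}_0\|_2\le\sqrt{D}$. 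A cleaner but equivalent alternative is to not expand the square and instead differentiate $G_{t,y}(\vect{x}_0)=\mathbb{E}_{\vect{\epsilon}}[\|\vect{h}_\theta-\vect{x}_0\|_2^2]$ directly through the density, giving $\nabla G_{t,y}=-2(\vect{B}_{t,y}-\vect{x}_0)+\frac{1}{\sigma_t}\mathbb{E}_{\vect{\epsilon}}[\|\vect{h}_\theta-\vect{x}_0\|_2^2\,\vect{\epsilon}]$, after which the class-independent $\vect{x}_0$ piece again cancels and the remaining two pieces yield the $2/\sqrt{D}$ and $\sqrt{2/\pi}$ contributions respectively. Keeping the dependence on $\sigma_t$ and $D$ explicit while controlling this product is the delicate part.

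Finally, I would convert the logit bounds into an output bound through the softmax Jacobian, $\nabla_{\vect{x}_0}\textsc{DC}(\vect{x}_0)_y=\textsc{DC}(\vect{x}_0)_y\big(\nabla\ell_y-\sum_{\hat{y}}\textsc{DC}(\vect{x}_0)_{\hat{y}}\nabla\ell_{\hat{y}}\big)$, bounding the mixing weights by $2\,\textsc{DC}(\vect{x}_0)_y(1-\textsc{DC}(\vect{x}_0)_y)\le\tfrac12$ to obtain the softmax prefactor, summing the per-$t$ contributions weighted by $w_t$, and integrating $\|\nabla\textsc{DC}(\vect{x}_0)_y\|_2$ along the segment from $\vect{x}_0$ to $\vect{x}_0+\vect{\delta}$ to produce the stated Lipschitz inequality with constant $L=\frac{1}{2\sqrt{2}}\sum_t\frac{w_t}{\sigma_t T}(\sqrt{2/\pi}+2/\sqrt{D})$. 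For the radius, since $\textsc{DC}(\vect{x}_0)_y\ge\underline{p_A}$ and $\max_{\hat{y}\ne y}\textsc{DC}(\vect{x}_0)_{\hat{y}}\le\overline{p_B}$ are each $L$-Lipschitz, any $\vect{\delta}$ with $\|\vect{\delta}\|_2<\frac{\underline{p_A}-\overline{p_B}}{2L}$ keeps $\underline{p_A}-L\|\vect{\delta}\|_2>\overline{p_B}+L\|\vect{\delta}\|_2$, so the correct class stays on top; substituting $L$ gives exactly $R_{DC}$ in \cref{eq:dc_r}.
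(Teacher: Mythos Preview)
Your ``cleaner alternative'' --- differentiating $G_{t,y}(\vect{x}_0)=\mathbb{E}_{\vect{\epsilon}}[\|\vect{h}_\theta-\vect{x}_0\|_2^2]$ directly through the Gaussian density to obtain the two-term split $\nabla G_{t,y}=-2(\vect{B}_{t,y}-\vect{x}_0)+\tfrac{1}{\sigma_t}\mathbb{E}_{\vect{\epsilon}}[\|\vect{h}_\theta-\vect{x}_0\|_2^2\,\vect{\epsilon}]$ --- is exactly the paper's route (their Lemma in \cref{appendix:lipschitz}), and the bounds you propose on the two pieces via $\|\vect{h}_\theta-\vect{x}_0\|_2^2/D\in[0,1]$ (giving the $\sqrt{2/\pi}$ term through Salman's weak law) and $\|\vect{B}_{t,y}-\vect{x}_0\|_2\le\sqrt{D}$ (giving the $2/\sqrt{D}$ term) match the paper line for line. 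The certified-radius step is also identical.

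The one place you diverge is the softmax layer. Your $2p(1-p)\le\tfrac12$ argument is an $\ell_1$ bound on the softmax Jacobian row and delivers prefactor $\tfrac12$, not the $\tfrac{1}{2\sqrt{2}}$ in the theorem. The paper instead proves a separate lemma bounding the $\ell_2$ norm $\|\nabla_f\,\mathrm{softmax}(f)_y\|_2\le\tfrac{1}{2\sqrt{2}}$ and inserts that in the chain rule together with the per-logit gradient bound. How that combination avoids paying the $\ell_1$ norm (or a $\sqrt{K}$ factor) is not made fully explicit in the paper's write-up, so your $\ell_1$-based step is arguably the more transparent one; it just yields a constant $\sqrt{2}$ larger and hence an $R_{DC}$ that is $\sqrt{2}$ smaller than the stated formula. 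If you want to recover the exact constant in \cref{eq:dc_r}, you will need to invoke the paper's $\ell_2$ softmax lemma rather than the $2p(1-p)$ bound.
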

\begin{proof}
(Sketch; details in \cref{appendix:lipschitz}). Employing a similar methodology to that used by \citet{salman2019provably}, we derive the gradient of the diffusion classifier. Since the gradient norm of a neural network is unbounded, we transfer the target of the gradient operator from the neural network to the Gaussian density function, so that we can bound the gradient norm and the Lipschitz constant.
\end{proof}
As demonstrated in \cref{theorem:lipscitzness_of_dc}, the Lipschitz constant of diffusion classifiers is nearly identical to that in the ``\textit{weak law}'' of randomized smoothing~(See \cref{appendix:lipschitz_boost_certify}, or Lemma 1 in \citet{salman2019provably}). This constant is small and independent of the dimension $D$, indicating the inherent robustness of diffusion classifiers. However, similar to the weak law of randomized smoothing, such certified robustness has limitations because it assumes the maximum Lipschitz condition is satisfied throughout the entire perturbation path, i.e., it assumes the equality always holds in \(|f(\vect{x}_{adv})_y - f(\vect{x})_y|\leq L\|\vect{x}-\vect{x}_{adv}\|_2\) when \(f\) has Lipschitz constant \(L\). As a result, the equality also holds in \(f(\vect{x}_{adv})_y \geq f(\vect{x})_y -  L\|\vect{x}-\vect{x}_{adv}\|_2\) and \(f(\vect{x}_{adv})_{\hat{y}} \leq f(\vect{x})_{\hat{y}} +  L\|\vect{x}-\vect{x}_{adv}\|_2\) for \(\max_{\hat{y} \neq y} f(\vect{x})_{\hat{y}}\). To guarantee the prediction is unchanged (i.e., \(f(\vect{x}_{adv})_y \geq f(\vect{x}_{adv})_{\hat{y}}\)), its requires the perturbation \(\|\vect{x}-\vect{x}_{adv}\|_2\) must be less than \(\frac{1}{2L}\). 

To be specific, under the weak law of randomized smoothing, it is impossible to achieve a certified radius greater than 1.253. According to \cref{eq:dc_r}, a certified radius greater than 0.39 is unattainable, and empirically, the average certified radius achieved is only 0.156 (refer to \cref{apd:_certified_robustness_of_dc}). This is significantly lower than the empirical robustness upper bound obtained through adaptive attacks as reported in \citet{chen2023robust}.

\begin{algorithm}[t] 
\small
   \caption{EPNDC}
   \label{algorithm:EPNDC}
\begin{algorithmic}[1] 
   \State \textbf{Require:} A pre-trained diffusion model $\mathbf{h}_{\theta}$, a noisy input image $\mathbf{x}_\tau$, noisy level $\tau$.
   \For{$y=0$ \textbf{to} $K-1$}
   \For{$t=\tau+1$ \textbf{to} $T$}
   \State Calculate the analytical form of $\mathbb{E}_{q(\vect{x}_t|\vect{x}_{t+1},\vect{x}_\tau,y)}[\vect{x}_t]$ by: \( \frac{(\sigma_{t+1}^2-\sigma_t^2)\mathbf{x}_{\tau}+(\sigma_t^2-\sigma_\tau^2)\mathbf{x}_{t+1}}{\sigma_{t+1}^2-\sigma_\tau^2} \)
   \State Calculate the analytical form of $\mathbb{E}_{p(\mathbf{x}_{t}|\mathbf{x}_{t+1}, y)}[\vect{x}_t]$ by:
   \(
   \frac{(\sigma_{t+1}^2-\sigma_t^2)\mathbf{h}(\mathbf{x}_{t+1}, \sigma_{t+1}, y)+\sigma_t^2\mathbf{x}_{t+1}}{\sigma_{t+1}^2}
   \)
   \EndFor
   \State Calculate the lower bound $\underline{\log p(\mathbf{x}_\tau|y)}$ of $\log p(\mathbf{x}_\tau|y)$ using $w_t^{(\tau)} = \frac{\sigma_{t+1}^2-\sigma_\tau^2}{2(\sigma_t^2-\sigma_\tau^2) (\sigma_{t+1}^2-\sigma_t^2)}$ by:

   \(
   \textstyle \sum_{t=\tau+1}^T w_t^{(\tau)} 
\mathbb{E}_{q(\vect{x}_{t+1}|\vect{x}_\tau)} \|\mathbb{E}_{q(\vect{x}_t|\vect{x}_{t+1},\vect{x}_\tau,y)}[\vect{x}_t]-\mathbb{E}_{p(\mathbf{x}_{t}|\mathbf{x}_{t+1}, y)}[\vect{x}_t]\|^2
   \);
   \EndFor
   \State Approximate $p_{\theta}(y|\mathbf{x}_\tau)$ by 
   \( \frac{\exp{(\underline{\log p_{\theta}(\mathbf{x}_\tau|y)})}}{\sum_{\hat{y}}\exp{(\underline{\log p_{\theta}(\mathbf{x}_\tau|\hat{y})})}}
   \), and \textbf{Return:} $\tilde{y} = \arg\max_y p_{\theta}(y|\mathbf{x}_\tau)$.
\end{algorithmic}
\end{algorithm}

On the other hand, the ``\textit{strong law}'' of randomized smoothing (See \cref{eq:radius_of_rs} or Lemma 2 in \citet{salman2019provably}) can yield a non-constant Lipschitzness, leading to a more precise robust radius, with the upper bound of the certified radius potentially being infinite. Therefore, in the subsequent sections, we aim to combine diffusion classifier with randomized smoothing to achieve a tighter certified radius, thus thoroughly explore its robustness.



\subsection{Exact Posterior Noised Diffusion Classifier}
\label{sec3:epndc}

As explained in \cref{sec:background:rs}, randomized smoothing constructs a smoothed classifier \( g \) from a given base classifier \( f \) by aggregating votes over Gaussian-corrupted data. This process necessitates that the base classifier can classify data from the noisy distribution \( q(\vect{x}_{\tau}) \). However, the diffusion classifier in \citet{chen2023robust} is limited to classifying data solely from \( q(\vect{x}_{0}) \). Therefore, in this section, we generalize the diffusion classifier to enable the classification of images from \( q(\vect{x}_{\tau}) \) for any given \( \tau \).

Similar to  \citet{chen2023robust}, our fundamental idea involves deriving the ELBO for \(\log p(\vect{x}_\tau|y)\) and subsequently calculating \(p(y|\vect{x}_\tau)\) using the estimated \(\log p(\vect{x}_\tau|y)\) via Bayes' theorem. Drawing inspiration from \citet{ddpm}, we derive a similar ELBO for \(\log p(\vect{x}_\tau)\), as elaborated in the following theorem (the conditional ELBO is similar to unconditional one, see \cref{appendix:elbo-p_t} for details):

\begin{theorem}
(Proof in \cref{appendix:elbo-p_t,appendix:analytical_kl}). The ELBO of  $\log p(\vect{x}_\tau)$ is given by:
{
\begin{equation}
         \log p(\vect{x}_\tau) \geq -\sum_{t=\tau}^{T}w_t^{(\tau)} \mathbb{E}_{q(\vect{x}_{t+1}|\vect{x}_\tau)}[\|\mathbb{E}_{q(\vect{x}_t|\vect{x}_{t+1},\vect{x}_\tau)}[\vect{x}_t]-\mathbb{E}_{p(\vect{x}_{t}|\vect{x}_{t+1})}[\vect{x}_{t}]\|^2] + C_2,
\end{equation}
\small
}
where
{
\small
\begin{equation}
\label{equation:analytic_kl}
    \begin{aligned}
    &\vect{x}_{t+1}\sim q(\vect{x}_{t+1}|\vect{x}_\tau), \ \mathbb{E}_{q(\vect{x}_t|\vect{x}_{t+1},\vect{x}_\tau)}[\vect{x}_t]=\frac{(\sigma_{t+1}^2-\sigma_t^2)\vect{x}_{\tau}+(\sigma_t^2-\sigma_\tau^2)\vect{x}_{t+1}}{\sigma_{t+1}^2-\sigma_\tau^2}, \\
& w_t^{(\tau)}=\frac{\sigma_{t+1}^2-\sigma_\tau^2}{2(\sigma_t^2-\sigma_\tau^2) (\sigma_{t+1}^2-\sigma_t^2)},
\ \mathbb{E}_{p(\vect{x}_{t}|\vect{x}_{t+1})}[\vect{x}_{t}] = \frac{(\sigma_{t+1}^2-\sigma_t^2)h(\vect{x}_{t+1},\sigma_{t+1})+\sigma_t^2\vect{x}_{t+1}}{\sigma_{t+1}^2}.
    \end{aligned}
\end{equation}
}

\label{theorem:elbo-p_t}
\end{theorem}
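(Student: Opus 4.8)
The plan is to mirror the classical variational derivation of the diffusion ELBO (as in \citet{sohl2015deep,ddpm}), but with the data variable $\vect{x}_0$ replaced by the noised variable $\vect{x}_\tau$ and the reverse chain truncated at level $\tau$. First I would observe that the marginal of interest is $p(\vect{x}_\tau)=\int p(\vect{x}_{\tau:T})\,d\vect{x}_{\tau+1:T}$, where $p(\vect{x}_{\tau:T})=p(\vect{x}_T)\prod_{t=\tau+1}^{T}p(\vect{x}_{t-1}|\vect{x}_t)$ is the portion of the reverse Markov chain running from $T$ down to $\tau$. Introducing the forward process conditioned on $\vect{x}_\tau$ as the variational posterior $q(\vect{x}_{\tau+1:T}|\vect{x}_\tau)$ and applying Jensen's inequality gives $\log p(\vect{x}_\tau)\ge \mathbb{E}_{q(\vect{x}_{\tau+1:T}|\vect{x}_\tau)}\big[\log p(\vect{x}_{\tau:T})-\log q(\vect{x}_{\tau+1:T}|\vect{x}_\tau)\big]$. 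Here I use that, conditioned on $\vect{x}_\tau$, the forward process remains a Markov chain with Gaussian transitions $q(\vect{x}_{t+1}|\vect{x}_t)=\mathcal{N}(\vect{x}_{t+1};\vect{x}_t,(\sigma_{t+1}^2-\sigma_t^2)\vect{I})$ and marginals $q(\vect{x}_t|\vect{x}_\tau)=\mathcal{N}(\vect{x}_t;\vect{x}_\tau,(\sigma_t^2-\sigma_\tau^2)\vect{I})$.

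Next I would rewrite each forward transition via Bayes' rule, $q(\vect{x}_{t+1}|\vect{x}_t)=q(\vect{x}_{t+1}|\vect{x}_t,\vect{x}_\tau)=\frac{q(\vect{x}_t|\vect{x}_{t+1},\vect{x}_\tau)\,q(\vect{x}_{t+1}|\vect{x}_\tau)}{q(\vect{x}_t|\vect{x}_\tau)}$, exactly as in the DDPM decomposition. Substituting this into the bound and telescoping the resulting products turns the ELBO into a prior-matching term plus a sum of KL divergences, $\log p(\vect{x}_\tau)\ge -\mathrm{KL}\big(q(\vect{x}_T|\vect{x}_\tau)\,\|\,p(\vect{x}_T)\big)-\sum_{t}\mathbb{E}_{q(\vect{x}_{t+1}|\vect{x}_\tau)}\,\mathrm{KL}\big(q(\vect{x}_t|\vect{x}_{t+1},\vect{x}_\tau)\,\|\,p(\vect{x}_t|\vect{x}_{t+1})\big)$ up to boundary terms. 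The prior-matching term $\mathrm{KL}(q(\vect{x}_T|\vect{x}_\tau)\,\|\,p(\vect{x}_T))$ is data-independent up to negligible error since $\sigma_T$ is large, and together with the other endpoint contributions is absorbed into the constant $C_2$.

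It then remains to evaluate the per-step KL divergences. I would compute the posterior $q(\vect{x}_t|\vect{x}_{t+1},\vect{x}_\tau)$ by Gaussian conjugacy: since it is proportional to $q(\vect{x}_{t+1}|\vect{x}_t)\,q(\vect{x}_t|\vect{x}_\tau)$, completing the square yields a Gaussian whose mean is exactly the stated $\mathbb{E}_{q(\vect{x}_t|\vect{x}_{t+1},\vect{x}_\tau)}[\vect{x}_t]=\frac{(\sigma_{t+1}^2-\sigma_t^2)\vect{x}_\tau+(\sigma_t^2-\sigma_\tau^2)\vect{x}_{t+1}}{\sigma_{t+1}^2-\sigma_\tau^2}$ and whose variance is $\tilde{\sigma}_t^2=\frac{(\sigma_t^2-\sigma_\tau^2)(\sigma_{t+1}^2-\sigma_t^2)}{\sigma_{t+1}^2-\sigma_\tau^2}$. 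Taking the reverse-model covariance equal to this posterior covariance $\tilde{\sigma}_t^2\vect{I}$ makes the two Gaussians share a covariance, so the standard Gaussian KL formula collapses (its trace and log-determinant terms cancel) to $\frac{1}{2\tilde{\sigma}_t^2}\|\mathbb{E}_{q(\vect{x}_t|\vect{x}_{t+1},\vect{x}_\tau)}[\vect{x}_t]-\mathbb{E}_{p(\vect{x}_t|\vect{x}_{t+1})}[\vect{x}_t]\|^2$. Recognizing $\frac{1}{2\tilde{\sigma}_t^2}=\frac{\sigma_{t+1}^2-\sigma_\tau^2}{2(\sigma_t^2-\sigma_\tau^2)(\sigma_{t+1}^2-\sigma_t^2)}=w_t^{(\tau)}$ and reading off $\mathbb{E}_{p(\vect{x}_t|\vect{x}_{t+1})}[\vect{x}_t]$ from the parameterization of $p$ completes the identification with the claimed bound.

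The main obstacle I anticipate is bookkeeping at the two ends of the chain: making the telescoping precise so that the forward transitions recombine into the conditional posteriors $q(\vect{x}_t|\vect{x}_{t+1},\vect{x}_\tau)$, and carefully checking that the leftover endpoint terms (the $\vect{x}_T$ prior-matching KL and the innermost term near $t=\tau$) are genuinely independent of $\vect{x}_\tau$, or otherwise collapsible into $C_2$. A secondary point requiring care is the covariance-matching step that lets each KL reduce to a pure mean-squared difference with weight $w_t^{(\tau)}$; any residual covariance mismatch is data-independent and hence harmless, but this must be stated explicitly so that the weight is exactly the reciprocal posterior variance rather than the model variance.
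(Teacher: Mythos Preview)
Your proposal is correct and follows essentially the same route as the paper: Jensen on $\log p(\vect{x}_\tau)$ with variational posterior $q(\vect{x}_{\tau+1:T}|\vect{x}_\tau)$, the Bayes-rule rewrite of each forward transition into $q(\vect{x}_t|\vect{x}_{t+1},\vect{x}_\tau)$, telescoping into a prior-matching term plus per-step KLs, and then completing the square to identify the Gaussian posterior and reduce each KL to a weighted mean-squared difference. Your handling of the endpoint terms and the covariance-matching step is, if anything, more explicit than the paper's own derivation; the paper simply writes the model variance as $\tilde{\sigma}_t^2$ in the KL computation and absorbs residual constants into $C_2$ without further comment.
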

\begin{remark}
    Notice that 
    the summation of KL divergence in the ELBO of $\log p(\vect{x}_\tau)$ starts from $\tau+1$ and ends at $T$, while that of $\log p(\vect{x}_0)$ starts from $1$. Besides, the posterior is $q(\vect{x}_t|\vect{x}_{t+1},\vect{x}_\tau)$ instead of $q(\vect{x}_t|\vect{x}_{t+1},\vect{x}_0)$.
\end{remark}

\begin{remark}
    When $\tau=0$, this result degrades to the diffusion training loss $ \frac{\sigma_{t+1}-\sigma_t}{\sigma_{t+1}^3} \|\vect{x}_0-\vect{h}(\vect{x}_{t+1}, \sigma_{t+1})\|^2$, consistent with \citet{kingma2021variational_diffusion} and  \citet{ karras2022elucidating}.
\end{remark}

Due to the page width limit, we only present the unconditional ELBO in the main text. We can get the conditional ELBO by adding \(y\) to the condition. Using the conditional ELBOs as approximation for log likelihood (i.e., using ELBOs as logits), one can calculate $p(y|\vect{x}_\tau)=\frac{e^{\log p_{\theta}(\vect{x}_\tau|y)}}{\sum_{\hat{y}}e^{\log p_{\theta}(\vect{x}_\tau|\hat{y})}}$ for classification. We name this algorithm as Exact Posterior Noised Diffusion Classifier (EPNDC), as demonstrated in \cref{algorithm:EPNDC}.

\begin{algorithm}[t] 
\small
   \caption{APNDC}
   \label{algorithm:APNDC}
\begin{algorithmic}[1]
   \State \textbf{Require:}
   A pre-trained diffusion model $\vect{h}_{\theta}$, a noisy input image $\vect{x}_\tau$, noisy level $\tau$.
   \For{$y=0$ {\bfseries to} $K-1$}
   \State Calculate the lower bound $\underline{\log p(\vect{x}_\tau|y)}$ of $\log p(\vect{x}_\tau|y)$ by:
   
   $\sum_{t=\tau+1}^T w_t \|\vect{h}_{\theta}(\vect{x}_\tau, \sigma_{\tau}) -\vect{h}_{\theta}(\vect{x}_t, \sigma_{t+1}, y)\|^2$, where \(w_t=\frac{\sigma_{t+1}-\sigma_t}{\sigma_{t+1}^3}\);
   \EndFor
   \State Approximate $p_{\theta}(y|\vect{x}_\tau)$ by $\frac{\exp{(\underline{\log p_{\theta}(\vect{x}_\tau|y)})}}{\sum_{\hat{y}}\exp{(\underline{\log p_{\theta}(\vect{x}_\tau|\hat{y})} )}}$;
   \State \textbf{Return:} $\tilde{y} = \arg\max_y p_{\theta}(y|\vect{x}_\tau)$.
\end{algorithmic}
\end{algorithm}

Although this classifier achieves non-trivial certified robustness, it still has limitations. For instance, we cannot theoretically determine the optimal weight \(w_t^{(\tau)}\) (see \cref{appendix:loss_weight} for details). Additionally, the time complexity is high. In the next section, we propose a new diffusion classifier called the Approximated Posterior Noised Diffusion Classifier (APNDC), which addresses these problems and acts like an ensemble of EPNDC so that greatly enhanced certified robustness without any computational overhead.

\subsection{Approximated Posterior Noised Diffusion Classifier}
\label{sec3:apndc}

\begin{table}[t]
\small
\setlength{\tabcolsep}{4pt}
\centering
\caption{Certified accuracy at CIFAR-10 test set. The clean accuracy for each smoothed model is in the parentheses. The certified accuracy for each cell is from  \citet{xiao2022densepure}, same as the results from their respective papers. \citet{carlini2022certified_diffpure_free} and \citet{xiao2022densepure} use ImageNet-21k as extra data.}
\label{tab:cifar10}
\begin{tabu}{l|cc|cccc} 
\toprule
        \multirow{2}{*}{Method}                                   &    \multirow{2}{*}{Off-the-shelf}                   &   \multirow{2}{*}{Extra data}         & \multicolumn{4}{c}{Certified Accuracy at $R$~(\%)}                                               \\
                                                      &          &  & 0.25                   & 0.5                    & 0.75                   & 1.0                     \\ 
\midrule
PixelDP~\citep{lecuyer2019certified}        & \xmark &       \xmark     & $^{(71.0)}22.0$          & $^{(44.0)}2.0$           & -                      & -                       \\
RS~\citep{cohen2019certified}                    & \xmark &      \xmark      & $^{(75.0)}61.0$          & $^{(75.0)}43.0$          & $^{(65.0)}32.0$          & $^{(65.0)}23.0$           \\
SmoothAdv ~\citep{salman2019provably}       & \xmark &     \xmark       & $^{(82.0)}68.0$          & $^{(76.0)}54.0$          & $^{(68.0)}41.0$          & $^{(64.0)}32.0$           \\
Consistency ~\citep{jeong2020consistency}   & \xmark &       \xmark     & $^{(77.8)}68.8$          & $^{(75.8)}58.1$          & $^{(72.9)}48.5$          & $^{(52.3)}37.8$           \\
MACER ~\citep{zhai2019macer}                & \xmark &       \xmark     & $^{(81.0)}71.0$          & $^{(81.0)}59.0$          & $^{(66.0)}46.0$          & $^{(66.0)}38.0$           \\
Boosting ~\citep{horvath2021boosting}       & \xmark &      \xmark      & $^{(83.4)}70.6$          & $^{(76.8)}60.4$          & $^{(71.6)}52.4$ & $^{(73.0)}\textbf{38.8}$  \\
SmoothMix ~\citep{jeong2021smoothmix}       & \cmark &      \xmark      & $^{(77.1)}67.9$          & $^{(77.1)}57.9$          & $^{(74.2)}47.7$          & $^{(61.8)}37.2$           \\ 
Denoised  ~\citep{salman2020denoised}       & \cmark &    \xmark        & $^{(72.0)}56.0$          & $^{(62.0)}41.0$          & $^{(62.0)}28.0$          & $^{(44.0)}19.0$           \\
Lee                ~\citep{lee2021provable} & \cmark &     \xmark       & $\ \ \ ^{(-)}60.0$                  & $\ \ \ ^{(-)}42.0$   & $\ \ \ ^{(-)}28.0$ & $\ \ \ ^{(-)}19.0$    \\
Carlini~\citep{carlini2022certified_diffpure_free}        & \cmark &    \cmark        & $^{(88.0)}73.8 $         & $^{(88.0)}56.2$          & $^{(88.0)}41.6$          & $^{(74.2)}31.0$           \\
DensePure~\citep{xiao2022densepure}                                                    & \cmark &     \cmark       & $^{(87.6)}$76.6 & $^{(87.6)}$64.6 & $^{(87.6)}$50.4          & $^{(73.6)}$37.4           \\
\hline
DiffPure+DC~(baseline, ours)  &  \cmark & \xmark & $^{(87.5)}$68.8  & $^{(87.5)}$53.1 & $^{(87.5)}$41.2 & $^{(73.4)}$25.6 \\
EPNDC~($T'=100$, ours) &  \cmark & \xmark & $^{(89.1)}$77.4 & $^{(89.1)}$60.0 & $^{(89.1)}$35.7 & $^{(74.8)}$24.4 \\
APNDC~($T'=100$, ours) &  \cmark & \xmark & $^{(89.5)}$80.7 & $^{(89.5)}$68.8 & $^{(89.5)}$50.8 & $^{(76.2)}$35.2 \\
APNDC~($T'=1000$, ours) &  \cmark & \xmark & $^{(\bm{91.2})}$\textbf{82.2} & $^{(\bm{91.2})}$\textbf{70.7} & $^{(\bm{91.2})}$\textbf{54.5} & $^{(\bm{77.3})}$38.2 \\
\bottomrule
\end{tabu}
\end{table}


Greatly inspired by \citet{song2020score_diffusion_sde} and \citet{meng2021sdedit}, we propose to approximate the posterior in a similar manner:
\begin{equation}
    q(\vect{x}_t|\vect{x}_{t+1},\vect{x}_\tau) = q(\vect{x}_t|\vect{x}_{t+1},\vect{x}_\tau, \vect{x}_0=\vect{h}_{\theta}(\vect{x}_{\tau}, \sigma_{\tau}))  \approx q(\vect{x}_t|\vect{x}_{t+1}, \vect{x}_0=\vect{h}_{\theta}(\vect{x}_{\tau}, \sigma_{\tau})).
\end{equation}
As a result, the KL divergence can be simplified using this approximation:
\begin{equation}
    \begin{aligned}
        &D_{\mathrm{KL}}(q(\vect{x}_t|\vect{x}_{t+1}, \vect{x}_{\tau}) \| p_\theta(\vect{x}_t|\vect{x}_{t+1})) \approx D_{\mathrm{KL}}(q(\vect{x}_t|\vect{x}_{t+1}, \vect{x}_0=\vect{h}_{\theta}(\vect{x}_{\tau}, \sigma_{\tau})) \| p_\theta(\vect{x}_t|\vect{x}_{t+1})) \\
        = &\frac{\sigma_{t+1}-\sigma_{t}}{\sigma_{t+1}^3} \|\vect{h}(\vect{x}_{\tau}, \sigma_{\tau})-\vect{h}(\vect{x}_{t+1}, \sigma_{t+1})\|^2 + C_3. \\
    \end{aligned}
    \label{eq:apndc:final_mse}
\end{equation}


Intriguingly, \cref{eq:apndc:final_mse} is the ELBO of \(\mathbb{E}_{q(\vect{\hat{x}}_\tau|\vect{x}_0=\vect{h}_{\theta}(\vect{x}_\tau, \sigma_\tau))}[\log p_\tau(\vect{\hat{x}}_\tau)]\), i.e., 
\begin{equation}
    \begin{aligned}
    \mathbb{E}_{q(\vect{\hat{x}}_\tau|\vect{x}_0=\vect{h}_{\theta}(\vect{x}_\tau, \sigma_\tau))}[\log p_\tau(\vect{\hat{x}}_\tau)] 
        \geq C_4 - \sum_{t=\tau+1}^{T-1} w_t \mathbb{E}_{ q(\vect{x}_{t}|\vect{x}_0=\vect{h}_{\theta}(\vect{x}_\tau, \sigma_\tau))}[\|\vect{h}_{\theta}(\vect{x}_t, \sigma_t) -\vect{x}_0 \|_2^2].
    \end{aligned}
\label{eq:apndc:elbo}
\end{equation}
Therefore, one can use the ELBO in \cref{eq:apndc:elbo} as a approximation for \(\log p(\vect{x}_\tau)\) (i.e., employing the ELBOs of this expected log likelihood as the logits), and calculate the class probabilities through Bayes' theorem. We name this method as Approximated Posterior Noised Diffusion Classifier~(APNDC), as shown in \cref{algorithm:APNDC}.

APNDC is functionally equivalent to an ensemble of EPNDC, as it calculates the ELBO of \(\mathbb{E}_{q(\vect{\hat{x}}_\tau|\vect{x}_0=\vect{h}_{\theta}(\vect{x}_\tau, \sigma_\tau))}[\log p_\tau(\vect{\hat{x}}_\tau)]\), which corresponds to the expected \(\log p(\vect{x}_\tau|y)\). This nearly-free ensemble can be executed with only one more forward pass of UNet to compute \(\vect{h}_\theta(\vect{x}_\tau, \tau)\). For detailed explanations, please refer to \cref{appendix:APNDC_elbo}.

\begin{remark}
    From a heuristic standpoint, one might consider first employing a diffusion model for denoising (named DiffPure by \citet{nie2022diffpure}), followed by using a diffusion classifier for classification. This approach differs from our method, where we calculate the diffusion loss only from \( \tau+1 \) to \( T \), and the noisy samples $\vect{x}_t$ are obtained by adding noise to $\vect{x}_\tau$ instead of $\vect{x}_0$. In \cref{tab:cifar10}, we demonstrate that our APNDC method significantly outperforms this heuristic approach (DiffPure+DC).
\end{remark}


\subsection{Time Complexity Reduction}
\label{sec3:accelerate}

\textbf{Variance reduction.} 
The main computational effort in our approach is dedicated to calculating the evidence lower bound for each class. This involves computing the sum of reconstruction losses. For instance, in DC, the reconstruction loss is $\|\vect{x}_0 - \vect{h}_{\theta}(\vect{x}_t, \sigma_{t+1})\|_2^2$. In EPNDC, it is $\|\mathbb{E}_{q(\vect{x}_t|\vect{x}_{t+1},\vect{x}_\tau)}[\vect{x}_t]-\mathbb{E}_{p(\vect{x}_{t}|\vect{x}_{t+1})}[\vect{x}_{t}]\|_2^2$, and in APNDC, the loss is $\|\vect{h}_{\theta}(\vect{x}_\tau, \sigma_{\tau}) - \vect{h}_{\theta}(\vect{x}_{t+1}, \sigma_{t+1})\|_2^2$, with the summation carried out over \(t\).
\citet{chen2023robust} attempt to reduce the time complexity by only calculating the reconstruction loss at certain timesteps. However, this approach proves ineffective. We identify that the primary reason for this failure is the large variance in the reconstruction loss, necessitating sufficient calculations for convergence. To address this, we propose an effective variance reduction technique that uses identical input samples across all categories at each timestep. In other words, we use the same $\vect{x}_t$ for different classes. This approach significantly reduces the difference in prediction difficulty among various classes, allowing for a more equitable calculation of the reconstruction loss for each class. As shown in \cref{fig:reduceT}, we can utilize a much smaller number of timesteps, such as \(8\), without sacrificing accuracy, thereby substantially reducing time complexity.

\begin{table*}[t]
\centering
\small
\setlength{\tabcolsep}{4pt}
\caption{Certified accuracy at ImageNet-64x64. The clean accuracy is in the parentheses.}
\label{tab:imagenet}
\begin{tabu}{l|cc|cccc} 
\toprule
               \multirow{2}{*}{Method}                        &    \multirow{2}{*}{Off-the-shelf}                       &    \multirow{2}{*}{Extra data}            & \multicolumn{4}{c}{Certified Accuracy at $R$~(\%)}                                               \\
                                                       &          &  & 0.25                   & 0.5                    & 0.75                   & 1.0                     \\ 
\midrule
RS~\citep{cohen2019certified}                    & \xmark &      \xmark      & $^{(45.5)}$37.3  & $^{(45.5)}$26.6 & $^{(37.0)}$20.9 & $^{(37.0)}$15.1         \\
SmoothAdv \citep{salman2019provably}       & \xmark &     \xmark       & $^{(44.4)}$37.4&$^{(44.4)}$27.9&$^{(34.7)}$21.1&  $^{(34.7)}$17.0           \\
Consistency \citep{jeong2020consistency}   & \xmark &       \xmark     & $^{(43.6)}$36.9 & $^{(43.6)}$31.5 &  $^{(43.6)}$26.0 &     $^{(31.4)}$16.6        \\
MACER \citep{zhai2019macer}                & \xmark &       \xmark     & $^{(46.3)}$35.7 & $^{(46.3)}$27.1& $^{(46.3)}$15.6&   $^{(38.7)}$11.3          \\
Carlini~\citep{carlini2022certified_diffpure_free}        & \cmark &    \xmark    &$^{(41.1)}$37.5&$^{(39.4)}$30.7&$^{(39.4)}$24.6& $^{(39.4)}$21.7     \\
DensePure~\citep{xiao2022densepure} & \cmark &     \xmark  & $^{(37.7)}$35.4 &$^{(37.7)}$29.3 &$^{(37.7)}$26.0& $^{(37.7)}$18.6\\
APNDC~(Sift-and-Refine, ours) &  \cmark & \xmark & $^{(\bm{54.4})}$\textbf{46.3} & $^{(\bm{54.4})}$\textbf{38.3}& $^{(\bm{43.5})}$\textbf{35.2} & $^{(\bm{43.5})}$\textbf{32.8}   \\
\bottomrule
\end{tabu}
\vspace{-3ex}
\end{table*}

\textbf{Sift-and-refine algorithm. } The time complexity of these diffusion classifiers is proportional to the number of classes, presenting a significant obstacle for their application in datasets with numerous classes. \citet{chen2023robust} suggest the use of multi-head diffusion to address this issue. However, this approach requires training an additional diffusion model, leading to extra computational overhead. In our work, we focus solely on employing a single off-the-shelf diffusion model to construct a certifiably robust classifier. To tackle the aforementioned challenge, we propose a Sift-and-refine algorithm. The core idea is to swiftly reduce the number of classes, thereby limiting our focus to a manageable subset of classes. We provide more detailed analysis in~\cref{algorithm:sift_refine}.




\section{Experiment}

Following previous studies \citep{carlini2022certified_diffpure_free, xiao2022densepure, zhang2023diffsmooth}, we evaluate the certified robustness of our method on two standard datasets, CIFAR-10~\citep{krizhevsky2009learning} and ImageNet \citep{russakovsky2015imagenet}, selecting a subset of 512 images from each. We adhere to the certified robustness pipeline established by \citet{cohen2019certified}, although our method potentially offers a tighter certified bound, as demonstrated in \cref{appendix:lipschitz_boost_certify}. To make a fair comparison with previous studies, we also select $\sigma_\tau \in \{0.25, 0.5, 1.0 \}$ for certification (thus \(\tau\) is determined) and use EDM~\citep{karras2022elucidating} as our diffusion models. 
For a re-clarification on the hyper-parameters and additional experiments (including ablation studies on diffusion checkpoints and time complexity comparison), please refer to \cref{appendix:exp}.

\subsection{Results on CIFAR-10}
\label{sec:exp:cifar10}

\textbf{Experimental settings.}
Due to computational constraints, we employ a sample size of \( N = 10,000 \) to estimate \(p_A\). The number of function evaluations (NFEs) for each image in our method is \( O(N \cdot T' \cdot K) \), amounting to \( 10^8 \) for \( T'=100 \) and \( 10^9 \) for \( T'=1000 \) since \(K=10\) in this dataset. In contrast, the NFEs for the previous state-of-the-art method~\citep{xiao2022densepure} are \( 4 \cdot 10^8 \), which is four times higher than our method when \( T' \) is 100. It is important to highlight that our sample size \( N \) is 10 times smaller than those baselines in  (\cref{tab:cifar10}), potentially placing our method at a significant disadvantage, especially for large \(\sigma_\tau\).


\textbf{Experimental results. } As shown in \cref{tab:cifar10}, our method, utilizing an off-the-shelf model without the need for extra data, significantly outperforms all previous methods at smaller values of \( \epsilon = \{0.25, 0.5\} \). Notably, it surpasses all previous methods on clean accuracy, and exceeds the previous state-of-the-art method~\citep{xiao2022densepure} by \( 5.6\% \) at \( \epsilon = 0.25 \) and \( 6.1\% \) at \( \epsilon = 0.5 \). Even with larger values of \( \epsilon \), our method attains performance levels comparable to existing approaches. This is particularly noteworthy considering our constrained setting of \( N = 10,000 \), substantially smaller than the \( N = 100,000 \) used in prior works. Considering that the community of randomized smoothing employs hypothesis tests to establish a probabilistic upper bound of the smoothed function, with consistent type-one error rates, our method encounters significant disadvantages. This is particularly the case since, with equivalent accuracy on noisy data, certified robustness is a monotonically increasing function with respect to sample size \(N\). However, we still achieve competitive performance despite its inherent sample size disadvantage.


\subsection{Results on ImageNet}

\textbf{Experimental settings. } We conduct experiments on ImageNet64x64 due to the absence of conditional diffusion models for 256x256 resolution. Due to computational constraints, we employ a sample size of \( N = 1000 \), 10 times smaller than all other works in \cref{tab:imagenet}. We use the Sift-and-Refine algorithm to improve the efficiency.

\textbf{Experimental results.} As demonstrated in \cref{tab:imagenet} and \cref{fig:exp:imagenet}, our method, only employing a single off-the-shelf diffusion model without requiring extra data, significantly outperforms previous training-based and training-free approaches. In contrast, diffusion-based purification methods, when applied with small CNNs and no extra data, do not maintain their superiority over training-based approaches. It is noteworthy that our experiments are conducted with only one-tenth of the sample size typically used in previous works.  This success on a large dataset like ImageNet64x64 underscores the scalability of diffusion classifiers in handling extensive datasets with a larger number of classes.

\begin{figure*}[t]
\subfigure[\(T'\)]{
\centering
    \label{fig:reduceT}
    \includegraphics[width=0.31\linewidth]{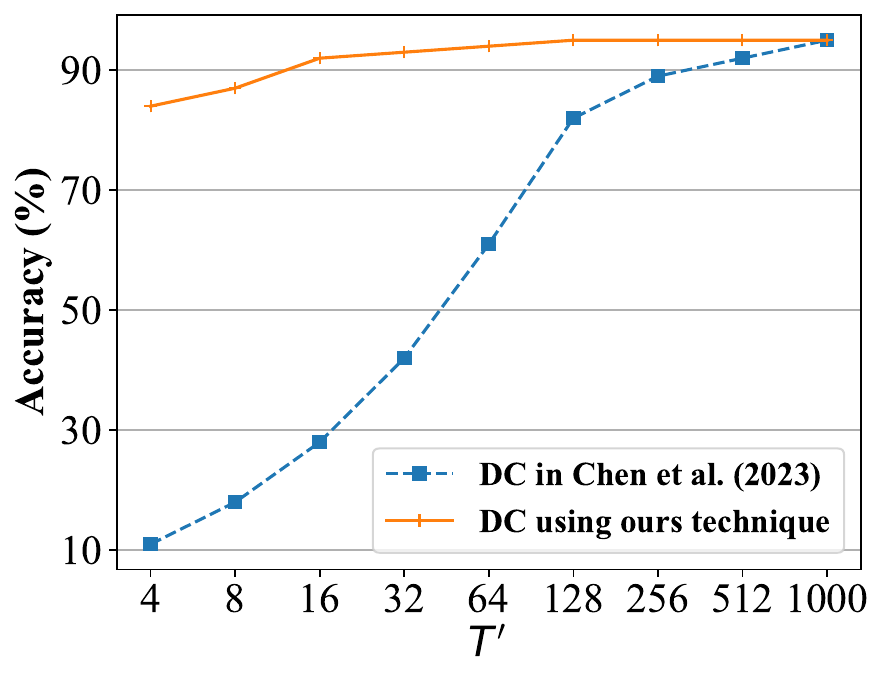}
}
\subfigure[\(R\) in CIFAR-10]{
\centering
    \includegraphics[width=0.31\linewidth]{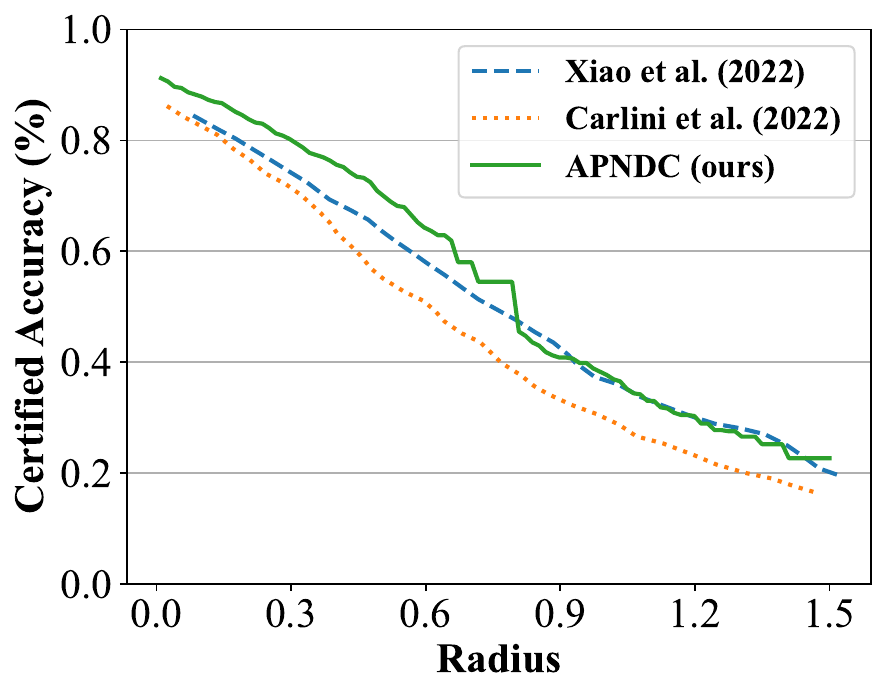}
}
\subfigure[\(R\) in ImageNet]{
\label{fig:exp:imagenet}
\centering
    \includegraphics[width=0.31\linewidth]{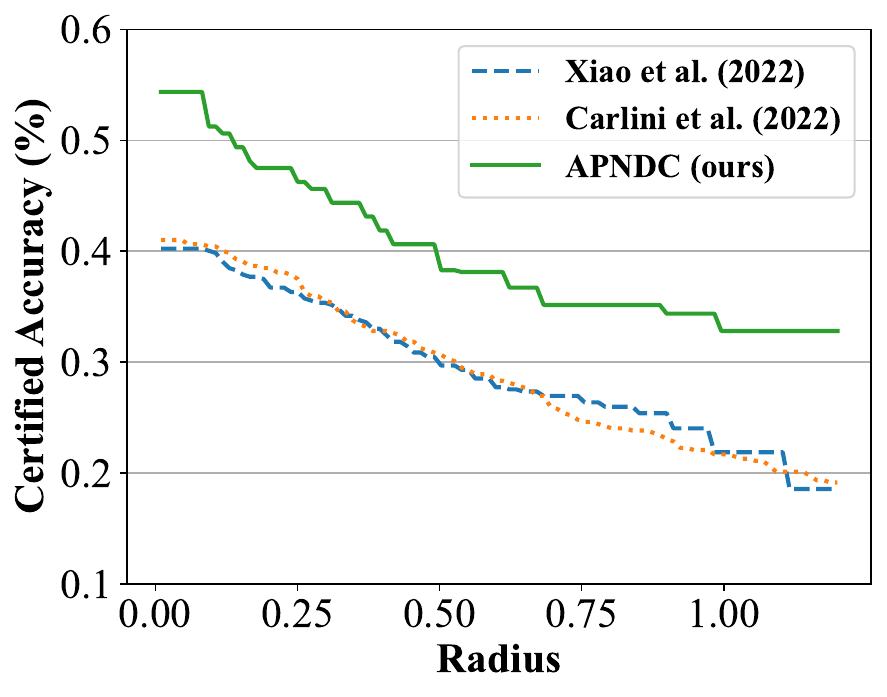}
}
    \caption{(a) The accuracy (\%) on CIFAR-10 dataset with time complexity reduction technique in \citet{chen2023robust} and ours. (b, c) The upper envelop of certified radii of different methods.}
\end{figure*}

\subsection{Discussions}

\textbf{Comparison with heuristic methods. } 
From a heuristic standpoint, one might consider initially using a diffusion model for denoising, followed by a diffusion classifier for classification. As shown in \cref{tab:cifar10}, this heuristic approach outperforms nearly all prior off-the-shelf and no-extra-data baselines. However, the methods derived through our theoretical analysis significantly surpass this heuristic strategy. This outcome underscores the practical impact of our theoretical contributions.

\textbf{Trivial performance of EPNDC. } 
Although EPNDC exhibits non-trivial improvements compared to previous methods, it still lags significantly behind APNDC. There are two main reasons for this gap. First, as extensively discussed in \cref{appendix:loss_weight}, the weight in EPNDC is not optimal, and we cannot theoretically determine the optimal weight. Additionally, APNDC is equivalent to an ensemble of EPNDC, which may contribute to its superior performance compared to EPNDC.

\textbf{Explanation of \cref{eq:dc_r}. }  
\cref{eq:dc_r} is extremely similar to \textit{the weak law of randomized smoothing}. When \(\sigma_t\) is larger, it could potentially have a larger certified radius, but the input images will be more noisy and hard to classify. This trade-off is quite similar to the role of \(\sigma_\tau\) in randomized smoothing. However, there are two key differences. First, the inputs to the network contain different levels of noisy images, which means the network could see clean images, less noisy images, and very noisy images, hence could make more accurate predictions. Besides, the trade-off parameter is \(\frac{w_t}{\sigma_t}\), allowing users some freedom to select different noise levels and balance them by $w_t$. We observe that this is the common feature of such denoising and reconstructing classifiers. We anticipate this observation will aid the community in developing more robust and certifiable defenses.

\vspace{-1ex}
\section{Conclusion}
In this work, we conduct a comprehensive analysis of the robustness of diffusion classifiers. We establish their non-trivial Lipschitzness, a key factor underlying their remarkable empirical robustness. Furthermore, we extend the capabilities of diffusion classifiers to classify noisy data at any noise level by deriving the evidence lower bounds for noisy data distributions. This advancement enable us to combine the diffusion classifiers with randomized smoothing, leading to a tighter certified radius. 
Experimental results demonstrate substantial improvements in certified robustness and time complexity. We hope that our findings contribute to a deeper understanding of diffusion classifiers in the context of adversarial robustness and help alleviate concerns regarding their robustness.

\section*{Acknowledgements}

This work was supported by the National Natural Science Foundation of China (Nos. 62276149, 92370124, 62350080, 92248303, U2341228, 62061136001, 62076147), BNRist (BNR2022RC01006), Tsinghua Institute for Guo Qiang, CCF-BaiChuan-Ebtech Foundation Model Fund, and the High Performance Computing Center, Tsinghua University. Y. Dong was also supported by the China National Postdoctoral Program for Innovative Talents. J. Zhu was also supported by the XPlorer Prize.

\nocite{langley00}

{
\bibliographystyle{plainnat}
\bibliography{ref}
}


\newpage
\appendix
\onecolumn

{\centering \textbf{APPENDIX}}\\
Appendix organization:
\begin{itemize}
	\item[] {\color{red}Appendix} \ref{apd:_proofs}: Proofs.
    \begin{enumerate}
    \item[]{\color{orange}\ref{apd:_assumptions_and_lemmas}}: Assumptions and Lemmas.
    \item[] {\color{orange}\ref{appendix:lipschitz}}: Lipschitz Constant of Diffusion Classifiers.
    \item[] {\color{orange}\ref{appendix:lipschitz_boost_certify}}: Stronger Randomized Smoothing When $f$ Possess Lipschitzness.
    \item[] {\color{orange}\ref{appendix:elbo-p_t}}: ELBO of  $\log p_\tau(\vect{x}_\tau)$ in EPNDC.
    \item[] {\color{orange}\ref{appendix:analytical_kl}}:  The Analytical Form of the KL Divergence in ELBO.
    \item[] {\color{orange}\ref{appendix:weight}}: The Weight in EPNDC.
    \item[] {\color{orange}\ref{appendix:APNDC_elbo}}: The ELBO of \( \log p_\tau(\vect{x}_\tau) \) in APNDC.
    \item[] {\color{orange}\ref{appendix:unify_diffusion_definition}}: Converting Other Diffusion Models into Our Definition.
    \end{enumerate}
	\item[] {\color{red}Appendix} \ref{appendix:exp}: Experimental Details.
    \begin{enumerate}
        \item[] {\color{orange}\ref{appendix:exp:exp_setting_detail}}: Certified Robustness Details.
        \item[] {\color{orange}\ref{apd:imagenet_baselines}}: ImageNet Baselines.
        \item[] {\color{orange}\ref{appendix:exp:ablate_diffusion}}: Ablation Studies of Diffusion Models.
        \item[]
        {\color{orange}\ref{appendix:ablation:timecomplexityreduction}}: Ablation Studies on Time Complexity Reduction Techniques.
    \end{enumerate}
	\item[] {\color{red}Appendix} \ref{apd:_discussions}: Discussions.
     \begin{enumerate}
        \item[] {\color{orange}\ref{appendix:elbo_likelihood_classifier_certify}}: ELBO, Likelihood, Classifier and Certified Robustness.
        \item[] {\color{orange}\ref{apd:_certified_robustness_of_dc}}: Certified Robustness of \citet{chen2023robust}.
        \item[] {\color{orange}\ref{appendix:loss_weight}}: The Loss Weight in Diffusion Classifiers.
        \item[] {\color{orange}\ref{appendix:time_reduction_not_work}}: Time Complexity Reduction Techniques that Do Not Help.
    \end{enumerate}
\end{itemize}


\section{Proofs}
\label{apd:_proofs}

\subsection{Assumptions and Lemmas}
\label{apd:_assumptions_and_lemmas}

\begin{assumption}
We adopt the following assumptions. These assumptions are quite common in the context of certified robustness~\cite{salman2019provably,cohen2019certified} and diffusion models~\cite{song2020score_diffusion_sde,lu2022maximum}, and they apply to most common neural networks:

1. Input image \(\vect{x} \in [0, 1]^D\).

2. $\forall 0 \leq t \leq T, q_t(\vect{x}) \in \mathcal{C}^2,\, \vect{h}(\vect{x}, \sigma_t) \in \mathcal{C}^1$ and $\mathbb{E}_{q_t(\vect{x})}[\|\vect{x}\|_2^2] \leq \infty$.

3. For any classifier $f$ mentioned in this paper, $f(\vect{x}) \in \mathcal{C}^1$, and $ \exists C \geq 0, \forall \vect{x}: \|f(\vect{x})\|_2 \leq C$.

4. The output of the diffusion U-Net is bounded: \(\vect{h}(\vect{x}, \sigma_t) \in [0, 1]^D\) for all \(\vect{x}\) and \(t\). This property can be ensured by using the CLIP operator to clip the output of the U-Net.

\end{assumption}

\begin{lemma}
\label{lem:softmax_gradient}
The second norm of the gradient of softmax function $||\frac{\partial }{\partial \vect{x}} \textrm{softmax}(\vect{x}/\beta)_y||_2$ is less than or equal to $\frac{1}{2\sqrt{2}\beta}$. In other words,
\begin{equation*}
    ||\frac{\partial }{\partial \vect{x}} \textrm{softmax}(\vect{x}/\beta)_y||_2 \leq \frac{1}{2\sqrt{2}\beta}.
\end{equation*}
\end{lemma}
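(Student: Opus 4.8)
The plan is to compute the Jacobian of the softmax explicitly, reduce the squared gradient norm to a one-dimensional optimization problem in the single scalar $p_y := \textrm{softmax}(\vect{x}/\beta)_y$, and then maximize that scalar function over $[0,1]$. Writing $p_k = \textrm{softmax}(\vect{x}/\beta)_k = e^{x_k/\beta}/\sum_j e^{x_j/\beta}$ and differentiating gives the standard identity
\[
\frac{\partial p_y}{\partial x_j} = \frac{1}{\beta}\, p_y(\delta_{yj} - p_j),
\]
where $\delta_{yj}$ is the Kronecker delta. Collecting the components, the squared Euclidean norm of the gradient is
\[
\left\|\frac{\partial}{\partial \vect{x}} \textrm{softmax}(\vect{x}/\beta)_y\right\|_2^2 = \frac{p_y^2}{\beta^2}\left[(1 - p_y)^2 + \sum_{j \neq y} p_j^2\right].
\]

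The key step is to bound the off-diagonal sum $\sum_{j \neq y} p_j^2$. Since each $p_j \geq 0$ and $\sum_{j\neq y} p_j = 1 - p_y$, expanding the square of the sum shows that the cross terms are nonnegative, hence $\sum_{j\neq y} p_j^2 \leq \bigl(\sum_{j\neq y} p_j\bigr)^2 = (1-p_y)^2$. Substituting this in, the bracketed quantity is at most $2(1-p_y)^2$, so
\[
\left\|\frac{\partial}{\partial \vect{x}} \textrm{softmax}(\vect{x}/\beta)_y\right\|_2^2 \leq \frac{2}{\beta^2}\,[p_y(1-p_y)]^2.
\]

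Finally I would maximize the scalar $p_y(1-p_y)$ over $p_y \in [0,1]$: its maximum is $1/4$, attained at $p_y = 1/2$, so $[p_y(1-p_y)]^2 \leq 1/16$ and the squared norm is at most $1/(8\beta^2)$. Taking square roots yields the claimed bound $1/(2\sqrt{2}\,\beta)$.

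I do not anticipate a serious obstacle; the only place requiring care is the inequality $\sum_{j\neq y} p_j^2 \leq (1-p_y)^2$, where it is essential to exploit the nonnegativity of the softmax outputs rather than a naive Cauchy--Schwarz step (which bounds the sum of squares from below, the wrong direction). It is worth noting that the resulting bound is tight: equality holds in the two-class case with $p_y = p_j = 1/2$, which simultaneously saturates both the off-diagonal inequality and the $p_y=1/2$ maximization, confirming that the constant $1/(2\sqrt{2})$ cannot be improved.
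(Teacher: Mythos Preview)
Your proposal is correct and follows essentially the same approach as the paper: compute the softmax Jacobian, write the squared gradient norm as $\tfrac{p_y^2}{\beta^2}\bigl[(1-p_y)^2 + \sum_{j\neq y} p_j^2\bigr]$, bound the off-diagonal sum by $(1-p_y)^2$, and then maximize $p_y(1-p_y)$ over $[0,1]$. Your writeup is in fact more explicit than the paper's about why $\sum_{j\neq y} p_j^2 \le (1-p_y)^2$ holds, and your tightness remark for the two-class case is a nice addition not present in the original.
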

\begin{proof}
    \begin{equation*}
\begin{aligned}
         & \quad \max ||\frac{\partial }{\partial \vect{x}} \textrm{softmax}(\vect{x}/\beta)_y||_2 \\
         & = \max ||(\vect{e}_y-\textrm{softmax}(\vect{x}/\beta))\frac{\textrm{softmax}(\vect{x}/\beta)_y}{\beta}||_2\\
         & = \max \frac{1}{\beta} \cdot\sqrt{\textrm{softmax}(\vect{x}/\beta)_y^2[(1-\textrm{softmax}(\vect{x}/\beta)_y)^2+\sum_{i\neq y}\textrm{softmax}(\vect{x}/\beta)_i^2]} \\
         & \leq \max \frac{\sqrt{2}}{\beta}\cdot\sqrt{\textrm{softmax}(\vect{x}/\beta)_y^2(1-\textrm{softmax}(\vect{x}/\beta)_y)^2}  \leq \frac{1}{2\sqrt{2}\beta}. \\
    \end{aligned}
\end{equation*}
\end{proof}

\subsection{Lipschitz Constant of Diffusion Classifiers}
\label{appendix:lipschitz}

\begin{lemma}
    (Adapted from \citet{chen2023robust}.) The gradient of the diffusion classifier is given by
\begin{equation*}
    \frac{d}{d \vect{x}} \mathbb{E}_{\vect{\epsilon}}[\|\vect{h}_{\theta}(\vect{x}_t,\sigma_t, y)-\vect{x}\|_2^2]=\mathbb{E}_{\vect{\epsilon}}[\frac{\partial \log p(\vect{x}_t|\vect{x})}{\partial \vect{x}} \|\vect{h}_{\theta}(\vect{x}_t,\sigma_t, y)-\vect{x}\|_2^2] +  \mathbb{E}_{\vect{\epsilon}}[(\vect{h}_{\theta}(\vect{x}_t,\sigma_t, y)-\vect{x})\frac{2}{\sigma_t}],
\end{equation*}
where
\begin{equation*}
    \frac{\partial \log p(\vect{x}_t|\vect{x})}{\partial \vect{x}} = \frac{\partial}{\partial \vect{x}} \log \exp{(-\frac{\|\vect{x}_t-\vect{x} \|_2^2}{2\sigma_t^2})} = -\frac{\vect{x}-\vect{x}_t}{\sigma_t^2}=\frac{\sigma_t \vect{\epsilon}}{\sigma_t^2} = \frac{\vect{\epsilon}}{\sigma_t}.
\end{equation*}
\label{lemma:gradient_of_dc}
\end{lemma}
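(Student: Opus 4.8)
The plan is to establish the identity by the log-derivative (score-function) trick, whose entire purpose is to move the gradient operator off the network $\vect{h}_\theta$ and onto the Gaussian transition density. A direct chain rule is tempting but counterproductive: differentiating $\|\vect{h}_\theta(\vect{x}_t,\sigma_t,y)-\vect{x}\|_2^2$ through $\vect{x}_t=\vect{x}+\sigma_t\vect{\epsilon}$ produces the Jacobian $\partial\vect{h}_\theta/\partial\vect{x}_t$, which for a deep U-Net is unbounded; any Lipschitz estimate built on it would be vacuous. The goal is therefore a representation of the gradient in which $\vect{h}_\theta$ appears only \emph{undifferentiated}, inside a bounded expectation.

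First I would rewrite the expectation over $\vect{\epsilon}\sim\mathcal{N}(\vect{0},\vect{I})$ as an integral over the noisy variable against its Gaussian density, using $p(\vect{x}_t|\vect{x})=\mathcal{N}(\vect{x}_t;\vect{x},\sigma_t^2\vect{I})$:
\[
\mathbb{E}_{\vect{\epsilon}}[\|\vect{h}_\theta(\vect{x}_t,\sigma_t,y)-\vect{x}\|_2^2]=\int p(\vect{x}_t|\vect{x})\,\|\vect{h}_\theta(\vect{x}_t,\sigma_t,y)-\vect{x}\|_2^2\,d\vect{x}_t.
\]
The payoff of this change of variables is that $\vect{x}_t$ is now an integration variable that does not depend on $\vect{x}$, so the only places $\vect{x}$ enters are the density $p(\vect{x}_t|\vect{x})$ and the explicit reconstruction target $-\vect{x}$ in the squared norm.

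Next I would differentiate under the integral sign, which the smoothness and moment assumptions ($q_t\in\mathcal{C}^2$, $\vect{h}\in\mathcal{C}^1$, bounded network output, finite second moments) justify by supplying an integrable dominating envelope. The product rule then splits the gradient into exactly two pieces. The first piece, where the derivative lands on the density, is handled by the log-derivative identity $\partial_{\vect{x}}p(\vect{x}_t|\vect{x})=p(\vect{x}_t|\vect{x})\,\partial_{\vect{x}}\log p(\vect{x}_t|\vect{x})$, which folds the integral back into $\mathbb{E}_{\vect{\epsilon}}[\partial_{\vect{x}}\log p(\vect{x}_t|\vect{x})\,\|\vect{h}_\theta-\vect{x}\|_2^2]$; crucially $\vect{h}_\theta$ survives only undifferentiated here. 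The second piece is the elementary derivative of the quadratic with respect to the explicit $-\vect{x}$, producing the residual term in $\vect{h}_\theta-\vect{x}$ displayed in the statement. Equivalently, one may keep the reparameterized form, let the chain rule generate the Jacobian term $2(\partial\vect{h}_\theta/\partial\vect{x}_t)^\top(\vect{h}_\theta-\vect{x})$, and then integrate by parts against the Gaussian, trading the Jacobian for the score $\partial_{\vect{x}}\log p$; both routes give the same answer, which is reassuring.

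Finally I would evaluate the Gaussian score in closed form: from $\log p(\vect{x}_t|\vect{x})=-\|\vect{x}_t-\vect{x}\|_2^2/(2\sigma_t^2)+\mathrm{const}$ we get $\partial_{\vect{x}}\log p(\vect{x}_t|\vect{x})=(\vect{x}_t-\vect{x})/\sigma_t^2=\sigma_t\vect{\epsilon}/\sigma_t^2=\vect{\epsilon}/\sigma_t$, which is precisely the secondary identity claimed in the statement. Assembling the two pieces yields the asserted decomposition. The one genuine obstacle is conceptual rather than computational: recognizing that the naive chain rule stalls on the unbounded Jacobian and that the score-function reformulation removes it. It is exactly this representation that later permits \cref{theorem:lipscitzness_of_dc} to bound the gradient norm through $\mathbb{E}_{\vect{\epsilon}}\|\vect{\epsilon}/\sigma_t\|_2$ together with the bounded range of $\vect{h}_\theta$.
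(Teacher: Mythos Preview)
Your proposal is correct and matches the paper's own reasoning: the paper does not give a detailed proof of this lemma (it is cited as adapted from \citet{chen2023robust}), but both the proof sketch of \cref{theorem:lipscitzness_of_dc} and the sentence immediately following the lemma in \cref{appendix:lipschitz} describe exactly your strategy of rewriting the expectation over $\vect{\epsilon}$ as an integral against $p(\vect{x}_t\mid\vect{x})$ and then ``transferring the target of the gradient operator from the neural network to the Gaussian density function'' via the log-derivative trick.

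One small caveat worth flagging: if you actually carry out the second piece of your product rule, the explicit derivative of $\|\vect{h}_\theta-\vect{x}\|_2^2$ in $\vect{x}$ (with $\vect{x}_t$ held fixed as an integration variable) is $-2(\vect{h}_\theta-\vect{x})$, not the $\tfrac{2}{\sigma_t}(\vect{h}_\theta-\vect{x})$ printed in the lemma. This appears to be a typo in the stated constant rather than a flaw in your argument; your derivation is the correct one, and the subsequent norm bound in \cref{appendix:lipschitz} would only become $\sigma_t$-independent (hence tighter for small $\sigma_t$) with the corrected factor.
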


\cref{lemma:gradient_of_dc} already derive the gradient of the diffusion classifier, and transfer the target of nabla operator to distribution function. Hence, we only need to bound the \(\ell_2\) norm of both term in \cref{lemma:gradient_of_dc}.

For the first term:
\begin{equation*}
    \begin{aligned}
        &\|\mathbb{E}_{\vect{\epsilon}}[\frac{\partial \log p(\vect{x}_t|\vect{x})}{\partial \vect{x}} \|\vect{h}_{\theta}(\vect{x}_t,\sigma_t, y)-\vect{x}\|_2^2]\|_2  \\
        =&  \| \int p(\vect{\epsilon})
        [\frac{\vect{\epsilon}}{\sigma_t} \|\vect{h}_{\theta}(\vect{x}_t,\sigma_t, y)-\vect{x}\|_2^2] d\vect{\epsilon}\|_2 \\
        \leq& \frac{1}{\sigma_t}   \max_{\|\vect{u}\|_2=1} \vect{u^\top}  \int p(\vect{\epsilon}) \vect{\epsilon} {\|\vect{h}_{\theta}(\vect{x}_t,\sigma_t, y)-\vect{x}\|_2^2} d \vect{\epsilon} \\
        =& \frac{D}{\sigma_t}   \underbrace{\max_{\|\vect{u}\|_2=1} \vect{u^\top}  \int p(\vect{\epsilon}) \vect{\epsilon} \frac{\|\vect{h}_{\theta}(\vect{x}_t,\sigma_t, y)-\vect{x}\|_2^2}{D} d \vect{\epsilon}}_{\text{The term in \citet{salman2019provably}}} \\
        \leq& \frac{D}{\sigma_t} \sqrt{\frac{2}{\pi}}.
    \end{aligned}
\end{equation*}
The last inequality holds since \(\frac{\|\vect{h}_{\theta}(\vect{x}_t,\sigma_t, y)-\vect{x}\|_2^2}{D} \) is a function in $[0, 1]^D$, thus it satisfies the condition of Lemma 1 in \citet{salman2019provably}. For the second term, since $\vect{x} \in [0, 1]^D$ and $\vect{h}_{\theta}(\vect{x}_t,\sigma_t, y) \in [0, 1]^D$, hence $\vect{h}_{\theta}(\vect{x}_t,\sigma_t, y) -\vect{x} \in [0, 1]^D$, consequently, 
\begin{equation*}
    \|\mathbb{E}_{\vect{\epsilon}}[(\vect{h}_{\theta}(\vect{x}_t,\sigma_t, y)-\vect{x})\frac{2}{\sigma_t}]\|_2=\|\mathbb{E}_{\vect{\epsilon}}[(\vect{h}_{\theta}(\vect{x}_t,\sigma_t, y)-\vect{x})]\|_2\frac{2}{\sigma_t} \leq \sqrt{D} \cdot \frac{2}{\sigma_t}.
\end{equation*}
Therefore,
\begin{equation}
    \|\frac{d}{d \vect{x}} \mathbb{E}_{\vect{\epsilon}}[\|\vect{h}_{\theta}(\vect{x}_t,\sigma_t, y)-\vect{x}\|_2^2] \|_{2}
    \leq  \frac{D}{\sigma_t} \sqrt{\frac{2}{\pi}} + \sqrt{D} \cdot \frac{2}{\sigma_t} =   \frac{1}{\sigma_t}(D\sqrt{\frac{2}{\pi}} + 2\sqrt{D}).
\end{equation}
Both \citet{chen2023robust}, \citet{li2023your}, \citet{clark2023text} formulate the logit of the diffusion classifier as \( \sum_{t=1}^T \mathbb{E}_{\vect{\epsilon}}[\|\vect{h}_{\theta}(\vect{x}_t,\sigma_t, y)-\vect{x}\|_2^2] \). However, in practice, both \citet{chen2023robust} and \citet{li2023your} use MSE loss rather than L2 loss in the diffusion classifier. In other words, they multiply the logit by \(\frac{1}{DT}\). Therefore, in this paper, we directly formulate the diffusion classifier using the MSE loss, as demonstrated in \cref{eq:DC}. Consequently, the maximum gradient norm of the logits in the diffusion classifier is:
\begin{equation}
    \|\frac{d}{d \vect{x}} \frac{1}{DT} \sum_{t=1}^T w_t\mathbb{E}_{\vect{\epsilon}}[\|\vect{h}_{\theta}(\vect{x}_t,\sigma_t, y)-\vect{x}\|_2^2] \|_{2}
    \leq  \frac{1}{T}\sum_{t=1}^T \frac{w_t}{\sigma_t} \left(  \sqrt{\frac{2}{\pi}} +  \frac{2}{\sqrt{D}}\right).
\label{eq:lipschitz_of_logit}
\end{equation}

\begin{remark}
    There may be concerns regarding how the selection of temperature (whether multiplying logits by \(\frac{1}{DT}\)) can significantly influence certified robustness. However, this is not the case. The temperature simultaneously affects both the scale of output from the diffusion classifier and the Lipschitz scale, with these scales changing proportionately. This is analogous to multiplying a function by a constant \( k \): the Lipschitz constant will increase by a factor of \( k \), but the gap between the outputs will also increase by the same factor, thus leaving the certified robustness unchanged.
\end{remark}

\cref{eq:lipschitz_of_logit} provides the maximum gradient norm (Lipschitz constant) in the logit space. One can already use \cref{eq:lipschitz_of_logit} to perform certified robustness in the logit space, which is precisely what we did in our experiments. However, since our diffusion classifier is defined by the class probabilities rather than the logits, to ensure consistency with our main text, we will derive the Lipschitz constant of the diffusion classifier in the class probabilities space in the following.

Let's define one-hot vector $\vect{e}_i$ in $\mathbb{R}^n$ to be the vector where the $i$-th element is 1 and all other elements are 0. \citet{hinton2015distilling} proves that $\frac{\partial}{\partial \vect{x}} \log \mathrm{softmax}(\vect{x} / \beta)_y = \frac{1}{\beta}\left( \vect{e}_y -\mathrm{softmax}(\vect{x}/ \beta)\right) $, where $\beta$ is the softmax temperature. Consequently,
\begin{equation*}
    \begin{aligned}
        \frac{\partial}{\partial \vect{x}} \mathrm{softmax}(\vect{x} / \beta)_y &= \left(\frac{\partial}{\partial \vect{x}} \log \mathrm{softmax}(\vect{x} / \beta)_y \right) \mathrm{softmax}(\vect{x} / \beta)_y  \\
        &= \left(\vect{e}_y -\mathrm{softmax}(\vect{x} / \beta)\right)\frac{\mathrm{softmax}(\vect{x} / \beta)_y}{\beta}.
    \end{aligned}
\end{equation*}
We can derive the maximum $\ell_2$ norm of the gradient as
\begin{equation*}
    \begin{aligned}
         &\|\frac{\partial}{\partial \vect{x}_0} p(y|\vect{x}_0)\|_{2} \\
         =& \| \frac{\partial}{\partial \vect{x}_0} \mathrm{softmax}{(f(\vect{x}))_y}\|_{2} \\
    =& \| \sum_{y=1}^{K} \frac{\partial \mathrm{softmax}{(f(\vect{x}))_y}}{\partial( -  \frac{1}{DT}\sum_{t=1}^{T} \mathbb{E}_{\vect{\epsilon}} \left[ w_t \|\vect{h}_{\theta}(\vect{x}_{t}, t, y) - \vect{x}_0\|_2^2 \right])} \frac{\partial {(- \frac{1}{DT} \sum_{t=1}^{T} \mathbb{E}_{\vect{\epsilon}} \left[ w_t \|\vect{h}_{\theta}(\vect{x}_{t}, t, y) - \vect{x}_0\|_2^2 \right] )}}{ \partial \vect{x}_0} \|_{2}\\
    \leq& \sum_{y=1}^{K} \| \frac{\partial \mathrm{softmax}{(f(\vect{x}))_y}}{\partial( - \frac{1}{DT} \sum_{t=1}^{T} \mathbb{E}_{\vect{\epsilon}} \left[ w_t \|\vect{h}_{\theta}(\vect{x}_{t}, t, y) - \vect{x}_0\|_2^2 \right])} \|_{2} \| \frac{\partial {(-  \frac{1}{DT}\sum_{t=1}^{T} \mathbb{E}_{\vect{\epsilon}} \left[ w_t \|\vect{h}_{\theta}(\vect{x}_{t}, t, y) - \vect{x}_0\|_2^2 \right] )}}{ \partial \vect{x}_0} \|_{2}\\
    \leq& \frac{1}{2\sqrt{2}} \sum_{i=1}^T \frac{w_i}{\sigma_i T}(\sqrt{\frac{2}{\pi}} + \frac{2}{\sqrt{D}}).
    \end{aligned}
\end{equation*}
The last step is get by using \cref{lem:softmax_gradient} for the first term and \cref{eq:lipschitz_of_logit} for the second term. Denote $\vect{u}$ as a unit vector, the Lipschitz constant of the diffusion classifier $p(y|\vect{x})$ is
\begin{equation}
     \max_{\vect{u}}\vect{u}^\top \frac{\partial}{\partial \vect{x}} p(y|\vect{x})
     =\max_{\vect{u}}\|\vect{u}^\top \frac{\partial}{\partial \vect{x}} p(y|\vect{x})\|_2\ 
     = \| \frac{\partial}{\partial \vect{x}} p(y|\vect{x})\|_2\ 
     = \frac{1}{2\sqrt{2}} \sum_{i=1}^T \frac{w_i}{\sigma_i T}(\sqrt{\frac{2}{\pi}} + \frac{2}{\sqrt{D}})
\end{equation}
It is important to note that this represents the upper bound of the Lipschitz constant for the diffusion classifier. In practical scenarios, the actual Lipschitz constant is much smaller, due to the conservative nature of the inequalities used in the derivation. To gain an intuitive understanding of the practical Lipschitz constant for diffusion classifiers, we measure the gradient norm of the classifier on clean and noisy data. Furthermore, employing the algorithm from \citet{chen2023rethinking_model_ensemble}, we empirically determine the maximum gradient norm. Our results on the CIFAR-10 test set indicate that the gradient norm is less than 0.02, suggesting that the Lipschitz constant for our diffusion classifier is smaller than 0.02 on this dataset.





\subsection{Stronger Randomized Smoothing When $f$ Possess Lipschitzness}
\label{appendix:lipschitz_boost_certify}

In this section, we will show that if $f$ has a smaller Lipschitz constant, it will induce a more smoothed function $g$, thus has a higher certified robustness. Here we discuss a simple case: \textit{the weak law of randomized smoothing} proposed in \citet{salman2019provably} with $\sigma=1$. We complement the derivation of this law with much more details, and we hope our detailed explanation could assist newcomers in the field in quickly grasping the key concepts in randomized smoothing.

To derive the Lipschitz constant of the smoothed function $g$, we only need to derive the maximum dot product between the gradient of $g$ and a unit vector $\vect{u}$ for the worst $f$.
\begin{align*}
    \max_f \vect{u}^\top \nabla_{\vect{x}} g(\vect{x}) &=\max_f \vect{u}^\top \nabla_{\vect{x}} \mathbb{E}_{\vect{\epsilon}}[f(\vect{x}+\vect{\epsilon})] \\
   & =\max_f \vect{u}^\top\nabla_{\vect{x}} \int p(\vect{\epsilon}) f(\vect{x}+\vect{\epsilon}) d\vect{\epsilon} \\
   &=\max_f\vect{u}^\top \nabla_{\vect{x}}\frac{1}{(2\pi)^{n/2}}  \int \exp{(-\frac{\| \vect{\epsilon} \|_2^2}{2})}f(\vect{x}+\vect{\epsilon}) d\vect{\epsilon} \\
   &=\max_f\vect{u}^\top \nabla_{\vect{x}}\frac{1}{(2\pi)^{n/2}}  \int \exp{(-\frac{\| \vect{t} - \vect{x} \|_2^2}{2})}f(\vect{t}) d\vect{t} \\
   &=\max_f \vect{u}^\top \frac{1}{(2\pi)^{n/2}}  \int \exp{(-\frac{\| \vect{t} - \vect{x} \|_2^2}{2})}(\vect{t} - \vect{x})f(\vect{t}) d\vect{t} \\
   &=\max_f \vect{u}^\top \frac{1}{(2\pi)^{n/2}}  \int \exp{(-\frac{\| \vect{t} - \vect{x} \|_2^2}{2})}(\vect{t} - \vect{x})f(\vect{t}-\vect{x}) d\vect{t} \\
   &=\max_f \frac{1}{(2\pi)^{n/2}}  \int \exp{(-\frac{\| \vect{t} - \vect{x} \|_2^2}{2})}[\vect{u}^\top (\vect{t} - \vect{x})] f(\vect{t}-\vect{x}) d\vect{t} \\
\end{align*}
In the next step, we transition to another orthogonal coordinate system, denoted as \( \vect{u}, \vect{u}_2, \cdots, \vect{u}_D \). This change is made with the assurance that the determinant of the Jacobian matrix, representing the transformation from the old coordinates to the new ones, equals 1. We then decompose the vector \( \vect{t}-\vect{x} \) in the new coordinate system as follows:
\begin{equation*}
    \vect{t}-\vect{x} = a_1\vect{u} + a_2\vect{u}_2 + a_3\vect{u}_3 + \cdots + a_D\vect{u}_D.
\end{equation*}
Therefore,
\begin{align*}
    &\max_f \frac{1}{(2\pi)^{n/2}}  \int \exp{(-\frac{\| \vect{t} - \vect{x} \|_2^2}{2})}[\vect{u}^\top (\vect{t} - \vect{x})] f(\vect{t}-\vect{x}) d\vect{t} \\
    =&\max_f \frac{1}{(2\pi)^{n/2}}  \int \exp{(-\frac{\| a_1\vect{u} + a_2\vect{u}_2 + a_3\vect{u}_3 + \cdots + a_D\vect{u}_D \|_2^2}{2})}a_1 f(\vect{a}) d\vect{a} \\
    =&\max_f \frac{1}{(2\pi)^{n/2}}  \int \exp{(-\frac{ a_1^2 + a_2^2 + a_3^2 + \cdots + a_D^2 }{2})}a_1 f(\vect{a}) d\vect{a} \\
    =&\max_f \frac{1}{(2\pi)^{n/2}}  \int_{-\infty}^{\infty} \exp{(-\frac{a_1^2}{2})}a_1  \int_{-\infty}^{\infty} \exp{(-\frac{a_2^2}{2})}  \cdots \int_{-\infty}^{\infty} \exp{(-\frac{a_D^2}{2})} f(\vect{a}) da_1da_2\cdots da_D \\
    \leq&\max_f \frac{1}{(2\pi)^{n/2}}  \int_{-\infty}^{\infty} \exp{(-\frac{a_1^2}{2})}a_1f(a_1)da_1 \int_{-\infty}^{\infty} \exp{(-\frac{a_2^2}{2})} da_2 \cdots \int_{-\infty}^{\infty} \exp{(-\frac{a_D^2}{2})}  da_D \\
   =&\max_f \frac{1}{(2\pi)^{1/2}}  \int_{-\infty}^{\infty} \exp{(-\frac{a_1^2}{2})}a_1f(a_1) da_1 \\
   \leq&\max_f \frac{1}{(2\pi)^{1/2}}  \int_{0}^{\infty} \exp{(-\frac{s^2}{2})}s ds. \\
\end{align*}
We could get easily the result by change of variable:
\begin{equation*}
\frac{1}{(2\pi)^{1/2}}  \int_{0}^{\infty} \exp{(-\frac{s^2}{2})}s ds = 
\frac{1}{(2\pi)^{1/2}}  (-\exp{(-\frac{s^2}{2})})|_{0}^{+\infty} = \frac{1}{\sqrt{2\pi}}.
\end{equation*}
The classifier will robustly classify the input data as long as the probability of classify the noisy data as the correct class is greater than the probability of classifying the noisy data as the wrong class. If one estimate a lower bound $\underline{p_A}$ of the accuracy of the correct class over noisy sample and a upper bound $\overline{p_B}$ of the probability of classify the noisy input as wrong class using the Clopper-Pearson interval, we could get the robust radius by
\begin{equation*}
    R=\sqrt{\frac{\pi}{2}}(\underline{p_A}-\overline{p_B}),
\end{equation*}
which will ensure that for any $\|\vect{\delta}\|_2 \leq R$ and any wrong class $\hat{y} \neq y$:
\begin{equation*}
g(\vect{x}+\vect{\delta})_y=P(f(\vect{x}+\vect{\delta}+\vect{\epsilon})=y) > g(\vect{x}+\vect{\delta})_{\hat{y}}=P(f(\vect{x}+\vect{\delta}+\vect{\epsilon})=\hat{y}).
\end{equation*}
Hence, $g(x)$ will robustly classify the input data.

When $f$ satisfies the Lipschitz condition with Lipschitz constant $L$, since it is impossible to be one when \( x>0\) and zero when \( x\leq 0\), the inequality here can be tighter, so we can get a smaller Lipschitz constant. In fact, in this case, the maximum Lipschitz constant of \(g\) is:
\begin{equation*}
    \begin{aligned}
        &\max_f \frac{1}{\sqrt{2\pi}}\int_{-\infty}^{+\infty}\exp(-s^2/2)sf(s)ds \\
&\textit{s.t.}, \ 0\leq f(s)\leq 1, \ f(s)\text{  is \(L\)-lipschitz}
    \end{aligned}
\end{equation*}
However, obtaining an analytical solution for the certified radius when \(f\) adheres to Lipschitz continuity appears infeasible. To understand why, consider the following: firstly, \(f\) should approach zero as \(x\) tends toward negative infinity and one as \(x\) tends toward positive infinity. Secondly, \(f\) must be an increasing function, with only one interval of increase. Additionally, within this interval, \(f\) is either zero or one outside the bounds of increase, and it must take a linear form within, with a slope of \(\frac{1}{L}\). We define the left endpoint of this increasing interval as \(a\), which necessarily lies in the range \([-\frac{1}{L}, 0]\). Also notice that both \(a=-\frac{1}{L}\) and \(a=0\) yield the same certified radius. However, when \(a\in (-\frac{1}{L}, 0)\), the certified radius must be smaller. To see why, let's consider

\begin{equation*}
    f_a(s)= \left\{
    \begin{aligned}
        &0, & s \leq a \\ 
        &L(s-a), & a \leq s \leq a+\frac{1}{L} \\ 
        &1, & s \geq a+\frac{1}{L}
    \end{aligned}
    \right., \ \text{with special case }
    f_0(s)= \left\{
    \begin{aligned}
        &0, & s \leq 0 \\ 
        &Ls, & 0 \leq s \leq \frac{1}{L} \\ 
        &1, & s \geq \frac{1}{L}
    \end{aligned}
    \right.  .
\end{equation*}
Denote \(\hat{f}_a(x)=\frac{1}{\sqrt{2\pi}}\exp(-x^2/2)xf_a(x)\). The difference of Lipschitz constant between two randomized function is
\begin{equation*}
    \begin{aligned}
        &\int_{-\infty}^{+\infty}[\hat{f}_0(s)-\hat{f}_a(s)]ds=  \int_{a}^{0}[\hat{f}_0(s)-\hat{f}_a(s)]+\int_{0}^{-a}[\hat{f}_0(s)-\hat{f}_a(s)]+\int_{-a}^{1/L}[\hat{f}_0(s)-\hat{f}_a(s)]\\
        =& \int_{0}^{-a}[-aL-\hat{f}_a(s)]+\int_{-a}^{1/L}[\hat{f}_0(s)-\hat{f}_a(s)] \leq 0
    \end{aligned}
\end{equation*}
Consequently, the corner case for \(a\) must lie within the open interval \((-1/L, 0)\). However, the corresponding integral lacks an analytical solution, and taking the derivative of it results in a function whose zero points are indeterminable. Therefore, obtaining an analytical solution for the improvement in certified robustness when a function exhibits Lipschitz continuity is challenging. Nevertheless, we can approximate the result with numerical algorithms, and it is evident that this Lipschitzness leads to a non-trivial enhancement in certified robustness of randomized smoothing.


\subsection{ELBO for Noisy Data in EPNDC}
\label{appendix:elbo-p_t}
Similar to \citet{ddpm}, we derive the ELBO as
\begin{align*}
\log p(\vect{x}_\tau) &=\log\int \frac{ p(\vect{x}_{\tau:T}) q(\vect{x}_{\tau+1:T}|\vect{x}_\tau)}{q(\vect{x}_{\tau+1:T}|\vect{x}_\tau)}d\vect{x}_{\tau+1:T} \\
&=\log \mathbb{E}_{q(\vect{x}_{\tau+1:T}|\vect{x}_\tau)}[\frac{ p(\vect{x}_{\tau:T})}{q(\vect{x}_{\tau+1:T}|\vect{x}_\tau)}] \\
&=\log \mathbb{E}_{q(\vect{x}_{\tau+1:T}|\vect{x}_\tau)}[\frac{ p(\vect{x}_{T})p(\vect{x}_{\tau:{T-1}}|\vect{x}_{T})}{q(\vect{x}_{\tau+1:T}|\vect{x}_\tau)}] \\
&\geq  \mathbb{E}_{q(\vect{x}_{\tau+1:T}|\vect{x}_\tau)}[\log \frac{ p(\vect{x}_{T})p(\vect{x}_{\tau:{T-1}}|\vect{x}_{T})}{q(\vect{x}_{\tau+1:T}|\vect{x}_\tau)}] \\
&=\mathbb{E}_{q(\vect{x}_{\tau+1:T}|\vect{x}_\tau)}[\log \frac{ p(\vect{x}_{T})\prod_{t=\tau}^{T-1}p(\vect{x}_{t}|\vect{x}_{t+1})}{\prod_{t=\tau}^{T-1} q(\vect{x}_{t+1}|\vect{x}_t, \vect{x}_\tau)}] \\
&= \mathbb{E}_{q(\vect{x}_{\tau+1:T}|\vect{x}_\tau)}[\log \frac{ p(\vect{x}_{T})\prod_{t=\tau}^{T-1}p(\vect{x}_{t}|\vect{x}_{t+1})}{\prod_{t=\tau}^{T-1} \frac{q(\vect{x}_{t+1}|\vect{x}_\tau)q(\vect{x}_{t}|\vect{x}_{t+1},\vect{x}_\tau)}{q(\vect{x}_t|\vect{x}_\tau)}}] \\
&=\mathbb{E}_{q(\vect{x}_{\tau+1:T}|\vect{x}_\tau)}[\log \frac{ p(\vect{x}_{T})\prod_{t=\tau}^{T-1}p(\vect{x}_{t}|\vect{x}_{t+1})}{\prod_{t=\tau}^{T-1} q(\vect{x}_{t}|\vect{x}_{t+1},\vect{x}_\tau)}-\log \prod_{t=\tau}^{T-1} \frac{q(\vect{x}_{t+1}|\vect{x}_\tau)}{q(\vect{x}_t|\vect{x}_\tau)}] \\
&=\mathbb{E}_{q(\vect{x}_{\tau+1:T}|\vect{x}_\tau)}[\log \frac{ p(\vect{x}_{T})\prod_{t=\tau}^{T-1}p(\vect{x}_{t}|\vect{x}_{t+1})}{\prod_{t=\tau}^{T-1} q(\vect{x}_{t}|\vect{x}_{t+1},\vect{x}_\tau)}-\log \frac{q(\vect{x}_{T}|\vect{x}_\tau)}{q(\vect{x}_\tau|\vect{x}_\tau)}] \\
&=\mathbb{E}_{q(\vect{x}_{\tau+1:T}|\vect{x}_\tau)}[\log \frac{\prod_{t=\tau}^{T-1}p(\vect{x}_{t}|\vect{x}_{t+1})}{\prod_{t=\tau}^{T-1} q(\vect{x}_{t}|\vect{x}_{t+1},\vect{x}_\tau)}-\log \frac{q(\vect{x}_{T}|\vect{x}_\tau)}{ p(\vect{x}_{T})}] \\
&=\mathbb{E}_{q(\vect{x}_{\tau+1:T}|\vect{x}_\tau)}[\log \frac{\prod_{t=\tau}^{T-1}p(\vect{x}_{t}|\vect{x}_{t+1})}{\prod_{t=\tau}^{T-1} q(\vect{x}_{t}|\vect{x}_{t+1},\vect{x}_\tau)}]-\mathbb{E}_{q(\vect{x}_T|\vect{x}_\tau)}[D_{\mathrm{KL}}(q(\vect{x}_{T}|\vect{x}_\tau)|| p(\vect{x}_{T}))] \\
&=\sum_{t=\tau}^{T-1}\mathbb{E}_{q(\vect{x}_t,\vect{x}_{t+1}|\vect{x}_\tau)}[\log \frac{p(\vect{x}_{t}|\vect{x}_{t+1})}{q(\vect{x}_{t}|\vect{x}_{t+1},\vect{x}_\tau)}]-\mathbb{E}_{q(\vect{x}_T|\vect{x}_\tau)}[D_{\mathrm{KL}}(q(\vect{x}_{T}|\vect{x}_\tau)|| p(\vect{x}_{T}))] \\
&=\sum_{t=\tau}^{T-1}\mathbb{E}_{q(\vect{x}_{t+1}|\vect{x}_\tau), q(\vect{x}_t|\vect{x}_{t+1},\vect{x}_\tau)}[\log \frac{p(\vect{x}_{t}|\vect{x}_{t+1})}{q(\vect{x}_{t}|\vect{x}_{t+1},\vect{x}_\tau)}]-\mathbb{E}_{q(\vect{x}_T|\vect{x}_\tau)}[D_{\mathrm{KL}}(q(\vect{x}_{T}|\vect{x}_\tau)|| p(\vect{x}_{T}))] \\
&=\log p(\vect{x}_\tau|\vect{x}_{\tau+1})-\sum_{t=\tau+1}^{T-1}\mathbb{E}_{q(\vect{x}_{t+1}|\vect{x}_\tau)}[D_{\mathrm{KL}}(q(\vect{x}_t|\vect{x}_{t+1},\vect{x}_\tau)||p(\vect{x}_{t}|\vect{x}_{t+1}))] \\
&-\mathbb{E}_{q(\vect{x}_T|\vect{x}_\tau)}[D_{\mathrm{KL}}(q(\vect{x}_{T}|\vect{x}_\tau)|| p(\vect{x}_{T}))] \\
&=-\sum_{t=\tau+1}^{T-1}\mathbb{E}_{q(\vect{x}_{t+1}|\vect{x}_\tau)}[D_{\mathrm{KL}}(q(\vect{x}_t|\vect{x}_{t+1},\vect{x}_\tau)||p(\vect{x}_{t}|\vect{x}_{t+1}))]+C \\
\end{align*}
This could be understood as a generalization of the ELBO in \citet{ddpm}, which is a special case when $\tau=0$. Notice that the summation of KL divergence in ELBO of $\log p(\vect{x}_\tau)$ start from $\tau+1$ and end at $T$, while that of $\log p(\vect{x}_0)$ start from $1$. Besides, the posterior is $q(\vect{x}_t|\vect{x}_{t+1},\vect{x}_\tau)$ instead of $q(\vect{x}_t|\vect{x}_{t+1},\vect{x}_0)$.

Similarly, the conditional ELBO is give by:
\begin{equation*}
    \begin{aligned}
        \log p(\vect{x}_\tau|y)
        &\geq\log p(\vect{x}_\tau|\vect{x}_{\tau+1},y)-\mathbb{E}_{q(\vect{x}_T|\vect{x}_\tau)}[D_{\mathrm{KL}}(q(\vect{x}_{T}|\vect{x}_\tau)|| p(\vect{x}_{T}))] \\
        &-\sum_{t=\tau+1}^{T-1}\mathbb{E}_{q(\vect{x}_{t+1}|\vect{x}_\tau)}[D_{\mathrm{KL}}(q(\vect{x}_t|\vect{x}_{t+1},\vect{x}_\tau,y)||p(\vect{x}_{t}|\vect{x}_{t+1},y))] \\
&=-\sum_{t=\tau+1}^{T-1}\mathbb{E}_{q(\vect{x}_{t+1}|\vect{x}_\tau)}[D_{\mathrm{KL}}(q(\vect{x}_t|\vect{x}_{t+1},\vect{x}_\tau,y)||p(\vect{x}_{t}|\vect{x}_{t+1},y))]+C_5. \\
    \end{aligned}
\end{equation*}
This is equivalent to adding the condition \(y\) to all posterior distributions in the unconditional ELBO.



\subsection{The Analytical Form of the KL Divergence in ELBO}
\label{appendix:analytical_kl}

The KL divergence in \cref{theorem:elbo-p_t} is the expectation of the log ratio between posterior and predicted reverse distribution, which requires integrating over the entire space. To compute this KL divergence more efficiently, we derives its analytical form: 
\begin{equation*}
    \begin{aligned}
        q(\vect{x}_t|\vect{x}_{t+1},\vect{x}_{\tau})&=\frac{q(\vect{x}_t|\vect{x}_{\tau})q(\vect{x}_{t+1}|\vect{x}_t)}{q(\vect{x}_{t+1}|\vect{x}_{\tau})} \\
&=\frac{\mathcal{N}(\vect{x}_t|\vect{x}_{\tau}, (\sigma_t^2-\sigma_\tau^2)I) 
 \mathcal{N}(\vect{x}_{t+1}|\vect{x}_t, (\sigma_{t+1}^2-\sigma_t^2)I)}
{\mathcal{N}(\vect{x}_{t+1}|\vect{x}_{\tau}, (\sigma_{t+1}^2-\sigma_\tau^2)I)} \\
&=\frac{
\frac{1}{(2 \pi (\sigma_t^2-\sigma_\tau^2))^{n/2}} 
\exp(\frac{\|\vect{x}_t-\vect{x}_{\tau}\|^2}{-2(\sigma_t^2-\sigma_\tau^2)})
\frac{1}{(2 \pi (\sigma_{t+1}^2-\sigma_t^2))^{n/2}} 
\exp(\frac{\|\vect{x}_{t+1}-\vect{x}_t\|^2}{-2(\sigma_{t+1}^2-\sigma_t^2)})
}
{
\frac{1}{(2 \pi (\sigma_{t+1}^2-\sigma_\tau^2))^{n/2}} 
\exp(\frac{\|\vect{x}_{t+1}-\vect{x}_{\tau}\|^2}{-2(\sigma_{t+1}^2-\sigma_\tau^2)})
}.
    \end{aligned}
\end{equation*}
Since the likelihood distribution $q(\vect{x}_{t+1}|\vect{x}_t)$ and the prior distribution $q(\vect{x}_t|\vect{x}_{\tau})$ are both Gaussian distribution, the posterior $q(\vect{x}_t|\vect{x}_{t+1},\vect{x}_{\tau})$ is also a Gaussian distribution. Therefore, we only need to derive the expectation and the covariance matrix. In the following, instead of derive the $q(\vect{x}_t|\vect{x}_{t+1},\vect{x}_{\tau})$ using equations, we use some trick to simplify the derivation:
\begin{align*}
        & \ \ q(\vect{x}_t|\vect{x}_{t+1},\vect{x}_{\tau}) \\
        &\propto  \exp(
\frac{\|\vect{x}_t-\vect{x}_{\tau}\|^2}{-2(\sigma_t^2-\sigma_\tau^2)}
+\frac{\|\vect{x}_{t+1}-\vect{x}_t\|^2}{-2(\sigma_{t+1}^2-\sigma_t^2)}
-\frac{\|\vect{x}_{t+1}-\vect{x}_{\tau}\|^2}{-2(\sigma_{t+1}^2-\sigma_\tau^2)}) \\
&= \exp(-\frac{1}{2}
\left[
\frac{\|\vect{x}_t-\vect{x}_{\tau}\|^2}{(\sigma_t^2-\sigma_\tau^2)}
+
\frac{\|\vect{x}_{t+1}-\vect{x}_t\|^2}{(\sigma_{t+1}^2-\sigma_t^2)}
-
\frac{\|\vect{x}_{t+1}-\vect{x}_{\tau}\|^2}{(\sigma_{t+1}^2-\sigma_\tau^2)}
\right])  \\
&= \exp(-\frac{1}{2}
\left[
\frac{\|\vect{x}_t\|^2-2\vect{x}_t^T\vect{x}_{\tau}}{(\sigma_t^2-\sigma_\tau^2)}
+
\frac{\|\vect{x}_t\|^2-2\vect{x}_{t+1}^T\vect{x}_t}{(\sigma_{t+1}^2-\sigma_t^2)}+C(\vect{x}_{t+1}, \vect{x}_{\tau})
\right])  \\
&= \exp(-\frac{1}{2}
\left[
\frac{\|\vect{x}_t\|^2}{(\sigma_t^2-\sigma_\tau^2)}
-
\frac{2\vect{x}_t^T\vect{x}_{\tau}}{(\sigma_t^2-\sigma_\tau^2)}
+
\frac{\|\vect{x}_t\|^2}{(\sigma_{t+1}^2-\sigma_t^2)}
-
\frac{2\vect{x}_{t+1}^T\vect{x}_t}{(\sigma_{t+1}^2-\sigma_t^2)}
+C(\vect{x}_{t+1}, \vect{x}_{\tau})\right])  \\
&= \exp(-\frac{1}{2}
\left[
(\frac{1}{(\sigma_t^2-\sigma_\tau^2)}+\frac{1}{(\sigma_{t+1}^2-\sigma_t^2)}) \|\vect{x}_t\|^2
- 2(\frac{\vect{x}_{\tau}^T}{(\sigma_t^2-\sigma_\tau^2)}
+
\frac{\vect{x}_{t+1}^T}{(\sigma_{t+1}^2-\sigma_t^2)})\vect{x}_t
+C(\vect{x}_{t+1}, \vect{x}_{\tau})\right])  \\
&= \exp(-\frac{1}{2}
\left[
\frac{(\sigma_t^2-\sigma_\tau^2)+(\sigma_{t+1}^2-\sigma_t^2)}{(\sigma_t^2-\sigma_\tau^2) (\sigma_{t+1}^2-\sigma_t^2)} \|\vect{x}_t\|^2
- 2(\frac{\vect{x}_{\tau}^T}{(\sigma_t^2-\sigma_\tau^2)}
+
\frac{\vect{x}_{t+1}^T}{(\sigma_{t+1}^2-\sigma_t^2)})\vect{x}_t
+C(\vect{x}_{t+1}, \vect{x}_{\tau})\right])  \\
&= \exp(-\frac{1}{2}
\left[
\frac{\sigma_{t+1}^2-\sigma_\tau^2}{(\sigma_t^2-\sigma_\tau^2) (\sigma_{t+1}^2-\sigma_t^2)} \|\vect{x}_t\|^2
- 2(\frac{\vect{x}_{\tau}^T}{(\sigma_t^2-\sigma_\tau^2)}
+
\frac{\vect{x}_{t+1}^T}{(\sigma_{t+1}^2-\sigma_t^2)})\vect{x}_t
+C(\vect{x}_{t+1}, \vect{x}_{\tau})\right])  \\
&=\exp(-\frac{1}{2}
\left[
\frac{\sigma_{t+1}^2-\sigma_\tau^2}{(\sigma_t^2-\sigma_\tau^2) (\sigma_{t+1}^2-\sigma_t^2)}
\left( \|\vect{x}_t\|^2
- 2\frac{\frac{\vect{x}_{\tau}^T}{(\sigma_t^2-\sigma_\tau^2)}
+
\frac{\vect{x}_{t+1}^T}{(\sigma_{t+1}^2-\sigma_t^2)}}{\frac{\sigma_{t+1}^2-\sigma_\tau^2}{(\sigma_t^2-\sigma_\tau^2) (\sigma_{t+1}^2-\sigma_t^2)}}
\vect{x}_t\right)
+C(\vect{x}_{t+1}, \vect{x}_{\tau})\right])  \\
&= \exp(-\frac{1}{2}
\left[
\frac{\sigma_{t+1}^2-\sigma_\tau^2}{(\sigma_t^2-\sigma_\tau^2) (\sigma_{t+1}^2-\sigma_t^2)}\left( \|\vect{x}_t\|^2
- 2\frac{
(\sigma_{t+1}^2-\sigma_t^2)\vect{x}_{\tau}^T
+
(\sigma_t^2-\sigma_\tau^2)\vect{x}_{t+1}^T}{\sigma_{t+1}^2-\sigma_\tau^2}\vect{x}_t
\right)
+C(\vect{x}_{t+1}, \vect{x}_{\tau})\right])  \\
&= \exp(-\frac{1}{2\frac{(\sigma_t^2-\sigma_\tau^2) (\sigma_{t+1}^2-\sigma_t^2)}{\sigma_{t+1}^2-\sigma_\tau^2}
}
\left[
\left( \vect{x}_t
- \frac{
(\sigma_{t+1}^2-\sigma_t^2)\vect{x}_{\tau}
+
(\sigma_t^2-\sigma_\tau^2)\vect{x}_{t+1}}{\sigma_{t+1}^2-\sigma_\tau^2}
\right)^2
+C(\vect{x}_{t+1}, \vect{x}_{\tau})\right])  \\
&\propto \mathcal{N}(\vect{x}_t; 
\frac{
(\sigma_{t+1}^2-\sigma_t^2)\vect{x}_{\tau}
+
(\sigma_t^2-\sigma_\tau^2)\vect{x}_{t+1}}{\sigma_{t+1}^2-\sigma_\tau^2}
, \frac{(\sigma_t^2-\sigma_\tau^2) (\sigma_{t+1}^2-\sigma_t^2)}{\sigma_{t+1}^2-\sigma_\tau^2} \mathbf{I}).
\end{align*}
The KL divergence between the posterior and predicted reverse distribution is
\begin{equation*}
    \begin{aligned}
        &D_{\mathrm{KL}}(q(\vect{x}_t|\vect{x}_{t+1}, \vect{x}_{\tau}) \| p_\theta(\vect{x}_t|\vect{x}_{t+1}))\\
=&D_{\mathrm{KL}}(\mathcal{N}(\vect{x}_t; 
\frac{
(\sigma_{t+1}^2-\sigma_t^2)\vect{x}_{\tau}
+
(\sigma_t^2-\sigma_\tau^2)\vect{x}_{t+1}}
{\sigma_{t+1}^2-\sigma_\tau^2}
, \frac{(\sigma_t^2-\sigma_\tau^2) (\sigma_{t+1}^2-\sigma_t^2)}{\sigma_{t+1}^2-\sigma_\tau^2} \mathbf{I})
\| \\
& \quad \quad  \mathcal{N}(\vect{x}_t; \frac{
(\sigma_{t+1}^2-\sigma_t^2)h(\vect{x}_{t+1}, \sigma_{t+1})
+
\sigma_t^2\vect{x}_{t+1}}
{\sigma_{t+1}^2}
, \Tilde{\sigma}_{t}^2 \vect{I})) \\
=&\frac{1}{2\frac{(\sigma_t^2-\sigma_\tau^2) (\sigma_{t+1}^2-\sigma_t^2)}{\sigma_{t+1}^2-\sigma_\tau^2}} \|\mathbb{E}_{q(\vect{x}_t|\vect{x}_{t+1},\vect{x}_\tau)}[\vect{x}_t]-\mathbb{E}_{p(\vect{x}_{t}|\vect{x}_{t+1})}[\vect{x}_{t}]\|^2+C_2 \\
=&\frac{\sigma_{t+1}^2-\sigma_\tau^2}{2(\sigma_t^2-\sigma_\tau^2) (\sigma_{t+1}^2-\sigma_t^2)} \|\mathbb{E}_{q(\vect{x}_t|\vect{x}_{t+1},\vect{x}_\tau)}[\vect{x}_t]-\mathbb{E}_{p(\vect{x}_{t}|\vect{x}_{t+1})}[\vect{x}_{t}]\|^2+C_2. \\
    \end{aligned}
\end{equation*}

When $\tau=0$, the result degrade to:
\begin{equation*}
q(\vect{x}_t|\vect{x}_{t+1},\vect{x}_{0})=\mathcal{N}(\vect{x}_{t}; \frac{
(\sigma_{t+1}^2-\sigma_t^2)\vect{x}_0
+
\sigma_t^2\vect{x}_{t+1}}
{\sigma_{t+1}^2}
, \frac{\sigma_t^2 (\sigma_{t+1}^2-\sigma_t^2)}{\sigma_{t+1}^2}\vect{I}).
\end{equation*}
When $d\sigma_t := \sigma_{t+1} - \sigma_t \to 0$, the KL divergence between posterior and model prediction is simplified by:
\begin{align*}
    &\lim_{d\sigma_t \to 0}D_{\mathrm{KL}}(q(\vect{x}_t|\vect{x}_{t+1}, \vect{x}_0) \| p_\theta(\vect{x}_t|\vect{x}_{t+1}))\\
=&\lim_{d\sigma_t \to 0}D_{\mathrm{KL}}(\mathcal{N}(\vect{x}_{t}; \frac{
(\sigma_{t+1}^2-\sigma_t^2)\vect{x}_0+\sigma_t^2\vect{x}_{t+1}}{\sigma_{t+1}^2}, \frac{\sigma_t^2 (\sigma_{t+1}^2-\sigma_t^2)}{\sigma_{t+1}^2}\vect{I})\| \\
& \quad \quad \mathcal{N}(\vect{x}_{t}; \frac{(\sigma_{t+1}^2-\sigma_t^2)h(\vect{x}_{t+1}, \sigma_{t+1})+\sigma_t^2\vect{x}_{t+1}}{\sigma_{t+1}^2}, \Tilde{\sigma_t})) \\
=& \lim_{d\sigma_t \to 0}\frac{1}{2 \frac{\sigma_t^2 (\sigma_{t+1}^2-\sigma_t^2)}{\sigma_{t+1}^2}} \|\frac{(\sigma_{t+1}^2-\sigma_t^2)\vect{x}_0}{\sigma_{t+1}^2}-\frac{(\sigma_{t+1}^2-\sigma_t^2)h(\vect{x}_{t+1}, \sigma_{t+1})}{\sigma_{t+1}^2}\|^2 \\
=& \lim_{d\sigma_t \to 0}\frac{(\sigma_{t+1}^2-\sigma_t^2)^2}{2 \frac{\sigma_t^2 (\sigma_{t+1}^2-\sigma_t^2)}{\sigma_{t+1}^2} \sigma_{t+1}^4} \|\vect{x}_0-h(\vect{x}_{t+1}, \sigma_{t+1})\|^2 \\
=&\lim_{d\sigma_t \to 0}\frac{(\sigma_{t+1}^2-\sigma_t^2)}{2 \frac{\sigma_t^2 }{\sigma_{t+1}^2} \sigma_{t+1}^4} \|\vect{x}_0-h(\vect{x}_{t+1}, \sigma_{t+1})\|^2 \\
=&\lim_{d\sigma_t \to 0}\frac{(\sigma_{t+1}^2-\sigma_t^2)}{2  \sigma_{t+1}^4} \|\vect{x}_0-h(\vect{x}_{t+1}, \sigma_{t+1})\|^2 \\
=&\frac{d \sigma_t}{\sigma_{t+1}^3} \|x_0-h(\vect{x}_{t+1}, \sigma_{t+1})\|^2 \\
=&w(i)\|\vect{x}_0-h(\vect{x}_{t+1}, \sigma_{t+1})\|^2,
\end{align*}
which is consistent to the results in \citet{kingma2021variational_diffusion} and \citet{ karras2022elucidating}.

\subsection{Deriving the Weight in EPNDC}
\label{appendix:weight}

If we interpret the shift in weight from $\frac{\sigma_{t+1}-\sigma_t}{\sigma_{t+1}}$ to $\frac{\sigma_t^2+\sigma_d^2}{\sigma_t^2 \sigma_d^2} \frac{1}{\sqrt{2\pi} k_{\sigma}}\exp({-\frac{\|\log \sigma_t - k_{\mu}\|^2}{2k_{\sigma}^2}})$ as a reweighting of $D_{\mathrm{KL}}(q(\vect{x}_t|\vect{x}_{t+1}, \vect{x}_0) \| p_\theta(\vect{x}_t|\vect{x}_{t+1}))$ by $\frac{\hat{w}_t}{w_t}$, a similar methodology can be applied to derive the weight for $D_{\mathrm{KL}}(q(\vect{x}_t|\vect{x}_{t+1}, \vect{x}_\tau) \| p_\theta(\vect{x}_t|\vect{x}_{t+1}))$:
\begin{align*}
    \lim_{d\sigma_t \to 0}&\frac{\hat{w}_t}{w_t}D_{\mathrm{KL}}(q(\vect{x}_t|\vect{x}_{t+1}, \vect{x}_\tau) \| p_\theta(\vect{x}_t|\vect{x}_{t+1})) \\
    =\lim_{d\sigma_t \to 0}& \frac{\hat{w}_t}{w_t}\frac{1}{2\frac{(\sigma_t^2-\sigma_\tau^2) (\sigma_{t+1}^2-\sigma_t^2)}{\sigma_{t+1}^2-\sigma_\tau^2}}\|\mathbb{E}_{q(\vect{x}_t|\vect{x}_{t+1}, \vect{x}_\tau)}[\vect{x}_t] -\mathbb{E}_{p(\vect{x}_{t}|\vect{x}_{t+1})}[\vect{x}_{t}]\|_2^2 \\
    =\lim_{d\sigma_t \to 0}& \frac{\hat{w}_t}{\frac{d\sigma_t}{\sigma_{t+1}^3}}\frac{1}{2\frac{(\sigma_t^2-\sigma_\tau^2) (2\sigma_t d\sigma_t)}{\sigma_{t+1}^2-\sigma_\tau^2}}\|\mathbb{E}_{q(\vect{x}_t|\vect{x}_{t+1}, \vect{x}_\tau)}[\vect{x}_t] -\mathbb{E}_{p(\vect{x}_{t}|\vect{x}_{t+1})}[\vect{x}_{t}]\|_2^2 \\
    =\lim_{d\sigma_t \to 0}& \frac{\hat{w}_t}{\frac{(d\sigma_t)^2}{\sigma_{t+1}^2}}\frac{1}{4\frac{(\sigma_t^2-\sigma_\tau^2)}{\sigma_{t+1}^2-\sigma_\tau^2}}\|\mathbb{E}_{q(\vect{x}_t|\vect{x}_{t+1}, \vect{x}_\tau)}[\vect{x}_t] -\mathbb{E}_{p(\vect{x}_{t}|\vect{x}_{t+1})}[\vect{x}_{t}]\|_2^2 \\
    =\lim_{d\sigma_t \to 0}& \frac{\hat{w}_t \sigma_{t+1}^2(\sigma_{t+1}^2-\sigma_\tau^2)}{4(\sigma_t^2-\sigma_\tau^2)(d\sigma_t)^2} \|\mathbb{E}_{q(\vect{x}_t|\vect{x}_{t+1}, \vect{x}_\tau)}[\vect{x}_t] -\mathbb{E}_{p(\vect{x}_{t}|\vect{x}_{t+1})}[\vect{x}_{t}]\|_2^2 \\
    =\lim_{d\sigma_t \to 0}& \frac{\sigma_t^2+\sigma_d^2}{\sigma_t^2 \sigma_d^2} \frac{1}{\sqrt{2\pi} k_{\sigma}}\exp({-\frac{\|\log \sigma_t - k_{\mu}\|^2}{2k_{\sigma}^2}}) \frac{ \sigma_{t+1}^2(\sigma_{t+1}^2-\sigma_\tau^2)}{4(\sigma_t^2-\sigma_\tau^2)(d\sigma_t)^2} \\
    & \cdot \|\mathbb{E}_{q(\vect{x}_t|\vect{x}_{t+1}, \vect{x}_\tau)}[\vect{x}_t] -\mathbb{E}_{p(\vect{x}_{t}|\vect{x}_{t+1})}[\vect{x}_{t}]\|_2^2 \\
    =\lim_{d\sigma_t \to 0}& \frac{\sigma_t^2+\sigma_d^2}{\sigma_d^2} \frac{1}{\sqrt{2\pi} k_{\sigma}}\exp({-\frac{\|\log \sigma_t - k_{\mu}\|^2}{2k_{\sigma}^2}}) \frac{ \sigma_{t+1}^2-\sigma_\tau^2}{4(\sigma_t^2-\sigma_\tau^2)(\sigma_{t+1}-\sigma_t)^2} \\
    & \cdot \|\mathbb{E}_{q(\vect{x}_t|\vect{x}_{t+1}, \vect{x}_\tau)}[\vect{x}_t] -\mathbb{E}_{p(\vect{x}_{t}|\vect{x}_{t+1})}[\vect{x}_{t}]\|_2^2 \\
\end{align*}

\begin{table}[t]
    \centering
    \caption{The accuracy of EPNDC using the EDM checkpoint~\citep{karras2022elucidating} on CIFAR-10 test set with various weight when \(\sigma=0.25\). Result are tested on the same subset with 512 images as we clarified in \cref{sec:exp:cifar10}.}
    \label{tab:p_t_weight}
    \begin{tabular}{ll|c}
    \toprule
    Weight Name  & Weight   & Accuracy(\%) \\
    \midrule
    Normalized Weight & \(\frac{1}{\|\mathbb{E}_{q(\vect{x}_t|\vect{x}_{t+1}, \vect{x}_\tau)}[\vect{x}_t] -\mathbb{E}_{p(\vect{x}_{t}|\vect{x}_{t+1})}[\vect{x}_{t}]\|_2^2}  \)   & 81.6\\
    Derived  Weight  & \(w_t\) & 67.5 \\
    Truncated Derived   Weight       & \(w_t \cdot \mathbb{I}\{\sigma_t>1\}\) & 82.8 \\
    Linear Weight& \(\frac{w_T-w_0}{\sigma_t-\sigma_0}+w_0 \) & 43.8 \\
    Truncated Linear Weight &  \((\frac{w_T-w_0}{\sigma_t-\sigma_0}+w_0)  \cdot \mathbb{I}\{\sigma_t>0.5\}\) & 77.0 \\
    \bottomrule
    \end{tabular}
\end{table}

We compare different weights in EPNDC. As demonstrated in \cref{tab:p_t_weight}, these weights are not as effective as the derived weight. Interestingly, simply setting the weight to zero when the standard deviation falls below a certain threshold significantly boosts the performance of the derived weight. This suggests that the latter part of the derived weight might be close to the optimal weight. We observe a similar phenomenon with the final weight we use, suggesting that the current weight might not be optimal. We defer the investigation into how to find the optimal weight, what constitutes this optimal weight, and why it is considered optimal to future work.

We also attempt to learn an optimal weight for our EPNDC. However, the learning process is difficult due to the high variance of \(\|\mathbb{E}_{q(\vect{x}_t|\vect{x}_{t+1}, \vect{x}_\tau)}[\vect{x}_t] -\mathbb{E}_{p(\vect{x}_{t}|\vect{x}_{t+1})}[\vect{x}_{t}]\|_2^2 \). Attempts to simplify the learning process through cubic spline interpolation for reducing the number of learned parameters are also time-consuming. Our work primarily focuses on robust classification of input data using a single off-the-shelf diffusion model. Therefore, we leave the exploration of this aspect for future research.

\subsection{ELBO for Noisy Data in APNDC}
\label{appendix:APNDC_elbo}

In this section, we show that APNDC is actually use the expectation of the ELBOs calculated from samples that are first denoised from the input and then have noise added back. In other word, APNDC approximate $\log p(\hat{\vect{x}}_\tau)$ by the lower bound of $ \mathbb{E}_{p(\vect{x}_0|\hat{\vect{x}}_\tau)} \mathbb{E}_{q(\vect{x}_\tau|\vect{x}_0)}[\log p_\tau(\vect{x}_\tau)].$

We first derive the lower bound for $\mathbb{E}_{q(\vect{x}_t|\vect{x}_0)}[\log p_\tau(\vect{x}_\tau)]$. The derivation of this lower bound is just the discrete case of the proof in \citet{kingma2023understanding}. We include it here only for completeness.
\begin{align*}
&\mathbb{E}_{q(\vect{x}_\tau|\vect{x}_0)}[\log p_\tau(\vect{x}_\tau)] \\
=& \int q(\vect{x}_\tau|\vect{x}_0) \frac{\log q(\vect{x}_\tau|\vect{x}_0)\log p_\tau(\vect{x}_\tau)}{\log q(\vect{x}_\tau|\vect{x}_0)} \\
=& \mathbb{E}_{q(\vect{x}_\tau|\vect{x}_0)}[\log q(\vect{x}_\tau|\vect{x}_0)] - D_{\mathrm{KL}}(q(\vect{x}_\tau|\vect{x}_0)\| p_\tau(\vect{x}_\tau)) \\
\geq&  \mathbb{E}_{q(\vect{x}_\tau|\vect{x}_0)}[\log q(\vect{x}_\tau|\vect{x}_0)] - D_{\mathrm{KL}}(q(\vect{x}_{\tau:T}|\vect{x}_0)\| p(\vect{x}_{\tau:T})) \\
=&  \mathbb{E}_{q(\vect{x}_\tau|\vect{x}_0)}[\log q(\vect{x}_\tau|\vect{x}_0)] - \sum_{t=\tau}^{T-1} [D_{\mathrm{KL}}(q(\vect{x}_{t:T}|\vect{x}_0)\| p(\vect{x}_{t:T})) \\
& -  D_{\mathrm{KL}}(q(\vect{x}_{t+1:T}|\vect{x}_0)\| p(\vect{x}_{t+1:T}))] + D_{\mathrm{KL}}(q(\vect{x}_{T}|\vect{x}_0)\| p(\vect{x}_{T}))\\
=& C_4 - \sum_{t=\tau}^{T-1} [D_{\mathrm{KL}}(q(\vect{x}_{t:T}|\vect{x}_0)\| p(\vect{x}_{t:T})) -  D_{\mathrm{KL}}(q(\vect{x}_{t+1:T}|\vect{x}_0)\| p(\vect{x}_{t+1:T}))]\\
=& C_4 - \sum_{t=\tau}^{T-1} D_{\mathrm{KL}}(q(\vect{x}_{t}|\vect{x}_{t+1}, \vect{x}_0)\| p(\vect{x}_{t}|\vect{x}_{t+1})) \\
=& C_4 - \sum_{t=\tau+1}^{T-1} w_\tau \mathbb{E}_{\vect{x}_t \sim q(\vect{x}_{t}|\vect{x}_0)}[\|\vect{h}_{\theta}(\vect{x}_t, \sigma_t) -\vect{x} \|_2^2]. \\
\end{align*}
Given a input image $\vect{x}_\tau$, we use the ELBOs of \(\mathbb{E}_{q(\vect{\hat{x}}_\tau|\vect{x}_0=\vect{h}_{\theta}(\vect{x}_\tau, \sigma_\tau))}[\log p_\tau(\vect{\hat{x}}_\tau)]\), rather than its own ELBO, to approximate its log likelihood:
\begin{equation*}
    \begin{aligned}
        \log p(\vect{x}_\tau) \approx\mathbb{E}_{q(\vect{\hat{x}}_\tau|\vect{x}_0=\vect{h}_{\theta}(\vect{x}_\tau, \sigma_\tau))}[\log p_\tau(\vect{\hat{x}}_\tau)] \geq C_4 - \sum_{t=\tau+1}^{T-1} w_t \mathbb{E}_{\vect{x}_t \sim q(\vect{x}_{t}|\vect{x}_0)}[\|\vect{h}_{\theta}(\vect{x}_t, \sigma_t) -\vect{x} \|_2^2].
    \end{aligned}
\end{equation*}
Hence, APNDC is actually use the expectation of the ELBOs calculated from samples that are first denoised from the input and then have noise added back.


\begin{algorithm}[t] 
\caption{Linear to EDM}
\label{algorithm:Linear2EDM}
\begin{algorithmic}[1]
   \State \textbf{Require:}
   A pre-trained EDM $\vect{h}_{\theta}$, a noisy input image $\vect{x}_t$, noise level $t$, linear schedule $\{\alpha_i\}_{i=1}^{T}$ and $\{\sigma_i\}_{i=1}^{T}$.
   \State Calculate the denoised image $\vect{x}_0$ using $\vect{h}_{\theta}$: 
   \(
   \vect{x}_0 = \vect{h}_{\theta}\left(\frac{\vect{x}_t}{\alpha_t}, \frac{\sigma_t}{\alpha_t}\right)
   \)
   \If{performing $\vect{x}_0$-prediction}
      \State \textbf{Return:} $\vect{x}_0$.
   \EndIf
   \State Calculate the noise component $\vect{\epsilon}$:
   \(
   \vect{\epsilon} = \frac{\vect{x}_t - \alpha_t \vect{x}_0}{\sigma_t}
   \)
   \If{performing $\vect{\epsilon}$-prediction}
      \State \textbf{Return:} $\vect{\epsilon}$.
   \EndIf
\end{algorithmic}
\end{algorithm}

\begin{algorithm}[t] 
\caption{EDM to Linear}
\label{algorithm:EDM2Linear}
\begin{algorithmic}[1]
   \State \textbf{Require:}
   A pre-trained predictor $\vect{h}_{\theta}$, a noisy input image $\vect{x}_t$, noise level $\sigma$, linear schedule $\{\alpha_i\}_{i=1}^{T}$ and $\{\sigma_i\}_{i=1}^{T}$.
   \State Calculate $t=\arg\min_t |\frac{\sigma_t}{\alpha_t}-\sigma|$;
   \If{performing $\vect{\epsilon}$-prediction}
   \State Predict $\vect{\epsilon} = \vect{\epsilon}(\alpha_t \vect{x}_t, t)$
   \State Calculate $\vect{x}_0 = \vect{x}_t - \sigma \cdot \vect{\epsilon}$
   \EndIf
   \If{performing $\vect{x}_0$-prediction}
   \State Predict $\vect{x}_0 = \vect{\epsilon}(\alpha_t \vect{x}_t, t)$
   \EndIf
   \State \textbf{Return: } $\vect{x}_0$.
\end{algorithmic}
\end{algorithm}

\subsection{Converting Other Diffusion Models into Our Definition.}
\label{appendix:unify_diffusion_definition}

In this paper, we introduce a new definition for diffusion models, specifically as the discrete version of \citet{karras2022elucidating}. This definition encompasses various diffusions, including VE-SDE, VP-SDE, and methods like x-prediction, v-prediction, and epsilon-prediction, transforming their differences into the difference of parametrization in $\vect{h}_\theta(\cdot, \cdot)$, makes theoretical analysis extremely convenient. This operation also decouple the training of diffusion models and the sampling of diffusion models, \textit{i.e.}, any diffusion model could use any sampling algorithm. To better demonstrate this transformation, we present the pseudocodes.

\textbf{Linearly adding noise diffusion models. } These kind of diffusion models define the forward process as a linear interpolation between the clean image and Gaussian noise, \textit{i.e.}, $\vect{x}_t=\alpha_t \vect{x}_0 + \sigma_t \vect{\epsilon}.$ It could be transformed as:
\begin{equation*}
    \overbrace{\frac{\vect{x}_t}{\alpha_t}}^{\text{$\vect{x}_t$ in EDM}} = \vect{x}_0 + \overbrace{\frac{\sigma_t}{\alpha_t}}^{\text{$\sigma_t$ in EDM}}\vect{\epsilon}.
\end{equation*}
Hence, we could directly pass $\frac{\vect{x}_t}{\alpha_t}$ and $\frac{\sigma_t}{\alpha_t}$ to an EDM model to get the predicted $\vect{x}_0$, as shown in \cref{algorithm:EDM2Linear}.

\textbf{DDPM.} DDPM define a sequence $\{\beta_t \}_{t=0}^{T}$ and $\vect{x}_t = \sqrt{\prod_{i=0}^{t} (1-\beta_i)} \vect{x}_0 + \sqrt{1-\prod_{i=0}^{t} (1-\beta_i)} \vect{\epsilon}$, which could be seen as a special case of Linear diffusion models.

\textbf{VP-SDE.} VP-SDE is the continuous case of DDPM, which define a stochastic differential equation~(SDE) as
\begin{equation*}
    dX_t=-\frac{1}{2}\beta(t)X_tdt + \sqrt{\beta(t)} dW_t, \ t \in [0, 1],
\end{equation*}
where $\beta(t) = \beta_{t \cdot T} \cdot T$. Consequently, we could also use an EDM model to solve the reverse VP-SDE by predicting $\vect{x}_0=\vect{h}_\theta(\frac{\vect{x}_t}{\sqrt{\exp{(-\int_{0}^{t}\beta(s)ds)}}}, \sqrt{\frac{1-\exp{(-\int_{0}^{t}\beta(s)ds)}}{\exp{(-\int_{0}^{t}\beta(s)ds)}}})$.

\textbf{VE-SDE.} The forward process of VE-SDE is defined as
\begin{equation*}
    dX_t=\sqrt{\frac{d\sigma(t)^2}{dt}}dW_t.
\end{equation*}
EDM is a special case of VE-SDE, where $\sigma(t)=t$. For a $\vect{x}_t$ in VE-SDE, the variance of the noise is $\sigma(t)$. Hence, we could directly use $\vect{h}_\theta(\vect{x}_t, \sigma(t))$ to get the predicted $\vect{x}_0$.

\section{Experimental Details}
\label{appendix:exp}

\subsection{Certified Robustness Details}
\label{appendix:exp:exp_setting_detail}
We adhere to the certified robustness pipeline established by \citet{cohen2019certified}, yet our method potentially offers a tighter certified bound, as demonstrated in \cref{appendix:lipschitz_boost_certify}. To prevent confusion regarding our hyper-parameters and those in \citet{cohen2019certified}, we clarify these hyper-parameters as follows:
\begin{equation*}
    \begin{aligned}
        \epsilon&= \text{maximum allowed }\ell_2\text{ perturbation of the input} \\
        N&=\text{number of samples used in binomial test to estimate the lower bound } \underline{p_A} \\
        \sigma&=\text{std. of Gaussian noise data augmentation during training and certification} \\
        T&=\text{number of diffusion timesteps} \\
        T'&=\text{number of diffusion timesteps we selected to calculate/(estimate) evidence lower bounds} \\
        \alpha&=\text{Type one error for estimating the lower bound }\underline{p_A}.
    \end{aligned}
\end{equation*}

To ensure a fair comparison, we follow previous work and do not estimate \(\overline{p_B}\), instead directly setting \(\overline{p_B}=1-\underline{p_A}\). This approach results in a considerably lower certified robustness, especially for ImageNet. Therefore, it is possible that the actual certified robustness of diffusion classifiers might be significantly higher than the results we present.

There are multiple ways to reduce the number of timesteps selected to estimate the evidence lower bounds (e.g., uniformly, selecting the first $T'$ timesteps). \citet{chen2023robust} demonstrate that these strategies achieve similar results. In this work, we just adopt the simplest strategy, i.e., uniformly select \(T'\) timesteps.

\subsection{ImageNet Baselines}
\label{apd:imagenet_baselines}
We compare our methods with training-based methods as outlined in \citet{cohen2019certified}, \citet{salman2019provably}, \citet{jeong2020consistency}, \citet{zhai2019macer}, and with diffusion-based purification methods in \citet{carlini2022certified_diffpure_free}, \citet{xiao2022densepure}. Since none of these studies provide code for ImageNet64x64, we retrain ResNet-50 using these methods and certify it via randomized smoothing. For training-based methods, we utilize the implementation by \citet{jeong2020consistency}\footnote{https://github.com/jh-jeong/smoothing-consistency} with their default hyper-parameters for ImageNet. For diffusion-based purification methods, we first resize the input images to 64x64, then resize them back to 256x256, and feed them into the subsequent classifiers. This process ensures a fair comparison with our method.

\subsection{Ablation Studies of Diffusion Models}
\label{appendix:exp:ablate_diffusion}

In the ImageNet dataset, we employ the same diffusion models as our baseline studies, \citet{carlini2022certified_diffpure_free} and \citet{xiao2022densepure}. For the CIFAR-10 dataset, however, we opt for a 55M diffusion model from \citet{karras2022elucidating}, as its parameterization aligns more closely with our definition of diffusion models. In contrast, \citet{carlini2022certified_diffpure_free} utilizes a 50M diffusion model from \citet{nichol2021improved}. When we replicate the study by \citet{carlini2022certified_diffpure_free} using the model from \citet{karras2022elucidating} and a WRN-70-2 \citep{zagoruyko2016wideresnet}, we achieve certified robustness of \(76.56\%\), \(59.76\%\), and \(41.01\%\) for radii \(0.25\), \(0.5\), and \(0.75\), respectively, which is nearly identical to their results shown in \cref{tab:cifar10}. This finding suggests that the choice of diffusion models does not significantly impact certified robustness.

We choose the discrete version of the diffusion model from \citet{karras2022elucidating} as our definition because it is the simplest version for deriving the ELBOs in this paper. For more details, please see \cref{appendix:unify_diffusion_definition}.

\subsection{Ablation Studies on Time Complexity Reduction Techniques}
\label{appendix:ablation:timecomplexityreduction}

\textbf{Variance reduction proves beneficial. } 
As illustrated in \cref{fig:reduceT}, our variance reduction technique enables a significant reduction in time complexity while maintaining high clean accuracy. Regarding certified robustness, \cref{tab:cifar10} shows that a tenfold reduction in time complexity results in only a minor, approximately \(3\%\), decrease in certified robustness across all radii. Notably, with \(T'=100\), our method still attains state-of-the-art clean accuracy and certified robustness for \(\epsilon=0.25, 0.5, 0.75\) while only cost one-fourth of the NFEs in \citet{xiao2022densepure}. This underscores the effectiveness of our variance reduction approach.

\textbf{Sift-and-refine proves beneficial.}
Our method's requirement for function evaluations (NFEs) is proportional to the number of classes, which limits its scalability in datasets with a large number of candidate classes. For instance, in the ImageNet dataset, without the Sift-and-refine algorithm, our method necessitates approximately \(10^8\) NFEs per image (i.e., \(100\cdot 1000 \cdot 1000\)), translating to about \(3 \times 10^6\) seconds for certifying each image on a single 3090 GPU. In contrast, \citet{xiao2022densepure}'s method requires about \(4 \times 10^6\) NFEs (i.e., \(40 \cdot 10 \cdot 10000\)), or roughly \(3 \times 10^5\) seconds. Our proposed Sift-and-refine technique, however, can swiftly identify the most likely candidates, thereby reducing the time complexity. It adjusts the processing time based on the difficulty of the input samples. With this technique, our method requires only about \(1 \times 10^5\) seconds per image, making it more efficient compared to \citet{xiao2022densepure}.

\section{Discussions}
\label{apd:_discussions}

\subsection{ELBO, Likelihood, Classifier and Certified Robustness}
\label{appendix:elbo_likelihood_classifier_certify}

\begin{table}[t]
\centering
\small
\renewcommand\arraystretch{0.85}
\setlength\tabcolsep{1.2mm}
    \begin{tabular}{c|l}
    \Xhline{3\arrayrulewidth}\\
      Name   &  Properties  \\
    \midrule
      DC   & \makecell[l]{ELBO: $\log p(\textbf{x}_0| y) \geq - \sum_{t=1}^{T}\mathbb{E} \left[ w_t \|\textbf{h}_{\theta}(\textbf{x}_{t}, \sigma_t, y) - \textbf{x}_0\|_2^2 \right]  + C$ \\ \\
             Diffusion Classifier: $ \text{DC}(\textbf{x})_y =\frac{\exp(-  \sum_{t=1}^{T} \mathbb{E} \left[ w_t \|\textbf{h}_{\theta}(\textbf{x}_{t}, \sigma_t, y) - \textbf{x}_0\|_2^2 \right] )}{\sum_{\hat{y}}\exp{(-  \sum_{t=1}^{T} \mathbb{E} \left[ w_t \|\textbf{h}_{\theta}(\textbf{x}_{t}, \sigma_t, \hat{y}) - \textbf{x}_0\|_2^2 \right] )}} $ \\ \\
             Smoothed Classifier: \(g_{\text{DC}}(\textbf{x})=\text{DC}(\textbf{x})\) \\ \\
             Certified Robustness: \(R = \frac{\sqrt{2}T(\underline{p_A}-\overline{p_B})}{(2/\sqrt{D}+\sqrt{2/\pi})\sum_{i=1}^T w_i/\sigma_i}\) } 
           \\
           \midrule
    EPNDC & \makecell[l]{ELBO: $\log p(\textbf{x}_t|y) \geq - \sum_{i=t}^{T}w_i  \mathbb{E}[\|\mathbb{E}[q(\textbf{x}_i|\textbf{x}_{i+1},\textbf{x}_t)]-\mathbb{E}[p(\textbf{x}_{i}|\textbf{x}_{i+1}, y)]\|^2]+C_2 $ \\ \\
            Diffusion Classifier: $\text{EPNDC}(\textbf{x}_t)_y =\frac{\exp(- \sum_{i=t}^{T}w_i \mathbb{E}[\|\mathbb{E}[q(\textbf{x}_i|\textbf{x}_{i+1},\textbf{x}_t)]-\mathbb{E}[p(\textbf{x}_{i}|\textbf{x}_{i+1}, y)]\|^2])}{\sum_{\hat{y}}\exp{(- \sum_{i=t}^{T}w_i \mathbb{E}[\|\mathbb{E}[q(\textbf{x}_i|\textbf{x}_{i+1},\textbf{x}_t)]-\mathbb{E}[p(\textbf{x}_{i}|\textbf{x}_{i+1}, \hat{y})]\|^2])}} $ \\ \\
            Smoothed Classifier: \(g_{\text{EPNDC}}(\textbf{x})_y = P_{\boldsymbol{\epsilon} \sim \mathcal{N}(\textbf{0},\textbf{I})}(\arg\max_{\hat{y}} \text{EPNDC}(\textbf{x}_0 + \sigma \cdot \boldsymbol{\epsilon})_{\hat{y}} = y)\) \\ \\
            Certified Robustness: \(R = \frac{\sigma}{2} \left(\Phi^{-1}(\underline{p_A}) - \Phi^{-1}(\overline{p_B})\right)\) } 
         \\
         \midrule
    APNDC  & \makecell[l]{ELBO: $\mathbb{E}_{q(\hat{\textbf{x}}_t|\textbf{x}_0), \textbf{x}_0=\textbf{h}(\textbf{x}_t, \sigma_t)}[\log p(\textbf{x}_t|y)] \geq -  \sum_{i=t}^{T} \mathbb{E} \left[ w_i \|\textbf{h}_{\theta}(\textbf{x}_{i}, \sigma_i, y) - \textbf{x}_0\|_2^2 \right]+C_3 $ \\ \\
              Diffusion Classifier: $ \text{APNDC}(\textbf{x}_t)_y =\frac{\exp(-  \sum_{i=t}^{T} \mathbb{E} \left[ w_i \|\textbf{h}_{\theta}(\textbf{x}_{i}, \sigma_i, y) - \textbf{h}_{\theta}(\textbf{x}_{t}, \sigma_t)\|_2^2 \right] )}{\sum_{\hat{y}}\exp{(-  \sum_{i=t}^{T} \mathbb{E} \left[ w_i \|\textbf{h}_{\theta}(\textbf{x}_{i}, \sigma_i, \hat{y}) - \textbf{h}_{\theta}(\textbf{x}_{t}, \sigma_t)\|_2^2 \right] )}} $ \\ \\
              Smoothed Classifier: \(g_{\text{APNDC}}(\textbf{x})_y = P_{\boldsymbol{\epsilon} \sim \mathcal{N}(\textbf{0},\textbf{I})}(\arg\max_{\hat{y}} \text{APNDC}(\textbf{x}_0 + \sigma \cdot \boldsymbol{\epsilon})_{\hat{y}} = y)\) \\ \\
              Certified Robustness: \(R = \frac{\sigma}{2} \left(\Phi^{-1}(\underline{p_A}) - \Phi^{-1}(\overline{p_B})\right)\) } \\
      \bottomrule
    \end{tabular}
    \caption{An illustration of the relationship between different ELBOs, likelihood, classifiers, and certified robustness.}
    \label{tab:merged_theory_elbo_classifier_likelihood}
\end{table}

As demonstrated in \cref{fig:theoretical_contribution} and \cref{tab:merged_theory_elbo_classifier_likelihood}, the basic idea of all these diffusion classifiers is to approximate the log likelihood by ELBO and calculate the classification probability via Bayes' theorem. All these classifiers possess non-trivial robustness, but certified lower bounds vary in tightness.

The diffusion classifier is intuitively considered the most robust, as it can process not only clean images but also those corrupted by Gaussian noise. This means it can achieve high clean accuracy by leveraging less noisy samples, while also enhancing robustness through more noisy samples where adversarial perturbations are significantly masked by Gaussian noise. However, its certified robustness is not as tight as desired. As explained in \cref{appendix:lipschitz}, this is because we assume that the maximum Lipschitz condition holds across the entire space, which is a relatively broad assumption, resulting in a less stringent certified robustness.

EPNDC and APNDC utilize the Evidence Lower Bound (ELBO) of corrupted data \(\log p(\vect{x}_\tau)\). This implies that the least noisy examples they can process are at the noise level corresponding to \(\tau\). As \(\tau\) increases, the upper bound of certified robustness also rises, but this simultaneously diminishes the classifiers' ability to accurately categorize clean data. As highlighted in \citet{chen2023robust}, this mechanism proves less effective, particularly in datasets with a large number of classes but low resolution. In such cases, the addition of even a small variance of Gaussian noise can render the entire image unclassifiable.

Generally, the Diffusion Classifier is intuitively more robust than both EPNDC and APNDC. However, obtaining a tight theoretical certified lower bound for the Diffusion Classifier proves more challenging compared to EPNDC and APNDC. Therefore, we contend that in practical applications, the Diffusion Classifier is preferable to both EPNDC and APNDC. Looking forward, we aim to derive a tighter certified lower bound for the Diffusion Classifier in future research.

\subsection{The Loss Weight in Diffusion Classifiers}
\label{appendix:loss_weight}

Typically, when training diffusion models by \cref{equation:elbo_condition}, most researchers opt to use a re-designed weight \( \hat{w}_t \) (e.g., $\hat{w}_t=1$ in \citet{ddpm}), rather than the derived weight \( w_t \). When constructing diffusion classifier from an off-the-shelf diffusion model, we find that maintaining consistency between the loss weight in diffusion classifier and the training weight is crucial. As shown in \cref{tab:dc_weight}, any inconsistency in loss weight leads to a decrease in performance. Conversely, performance enhances when the diffusion classifier's weight closely aligns with the training weight. Surprisingly, the derived weight (ELBO) yields the worst performance. Therefore, when \(\tau=0\), we use the training weight  \( \hat{w}_t \) rather than the derived weight \( w_t \).

Similarly, when \(\tau \neq 0\), the derived weight \(w_t^{(\tau)}\) also results in the worst performance among different weight configurations, as detailed in \cref{appendix:loss_weight}. However, determining the optimal weight in this general case is a complex challenge. Directly replacing \(w_t^{(\tau)}\) with the training weight \(\hat{w}_t\) is not appropriate, as \(w_t^{(\tau)} \neq w_t\). 
To address this issue, we propose an alternative: multiplying the weight by \(\frac{\hat{w}_t}{w_t}\) (i.e., using \(\frac{\hat{w}_t w_{t}^{(\tau)}}{w_t}\) as  the weight for EPNDC). This method effectively acts as a substitution of \(w_t\) with \(\hat{w}_t\) when \(\tau=0\). However, it is important to note that this approach is not optimal, and deriving an optimal weight appears to be infeasible.

\begin{table}[t]
    \caption{The accuracy of diffusion classifier \citep{chen2023robust} using the EDM checkpoint~\citep{karras2022elucidating} on CIFAR-10 test set with various weight. Result are tested on the same subset with 512 images as in \citet{chen2023robust}.}
    \label{tab:dc_weight}
    \centering
    \small
    \renewcommand\arraystretch{1.1}
\setlength\tabcolsep{0.7mm}
    \begin{tabular}{lc|c}
    \toprule
      Weight Name  & $\hat{w}_t$  &  Accuracy (\%) \\
    \midrule
      EDM & $\frac{\sigma_t^2+\sigma_d^2}{\sigma_t^2 \sigma_d^2} \frac{1}{\sqrt{2\pi} k_{\sigma}}\exp({-\frac{\|\log \sigma_t - k_{\mu}\|^2}{2k_{\sigma}^2}})$  & 94.92 \\
      Uniform & $1$ & 85.76 \\
     DDPM &  $\frac{1}{\sigma_t}$ & 90.23 \\
     EDM-W & $\frac{\sigma_t^2+\sigma_d^2}{\sigma_t^2 \sigma_d^2}$   & 88.67 \\
     EDM-$p(t)$ & $\frac{1}{\sqrt{2\pi} k_{\sigma}} \exp({-\frac{\|\log \sigma_t - k_{\mu}\|^2}{2k_{\sigma}^2}})$ & 93.75 \\
     ELBO & $ \frac{\sigma_{t+1}-\sigma_t}{\sigma_{t}^3}$ & 44.53 \\
    \bottomrule
    \end{tabular}
\end{table}

To enhance generative performance, most researchers opt not to use the derived weight $w_t$ for the training loss $\|\mathbb{E}[q(\vect{x}_t|\vect{x}_{t+1},\vect{x}_0)]-\mathbb{E}[p(\vect{x}_{t}|\vect{x}_{t+1})]\|^2$. Instead, they employ a redesigned weight $\hat{w}_t$. For example, \citet{ddpm} define $\hat{w}_t = \frac{1}{\sigma_t}$. Another approach involves sampling $i$ from a specifically designed distribution $p(t)$, rather than a uniform distribution. 
This approach can be interpreted as adjusting the loss weight to $\hat{w}_t=p(t)$, without altering the distribution of $i$:
\begin{equation*}
    \begin{aligned}
    &\mathbb{E}_{\vect{x}, t \sim p(t)} \mathbb{E}_{\vect{x}_t} [\|\vect{h}(\vect{x}_t, \sigma_t) - \vect{x} \|_2^2] \\
    =&\mathbb{E}_{\vect{x}} \sum_{t=0}^{T} p(t) \mathbb{E}_{\vect{x}_t} [\|\vect{h}(\vect{x}_t, \sigma_t) - \vect{x} \|_2^2]  \\
    =& \mathbb{E}_{\vect{x}, t} \mathbb{E}_{\vect{x}_t} [p(t)\|\vect{h}(\vect{x}_\tau, \sigma_t) - \vect{x} \|_2^2].
\end{aligned}
\end{equation*}
For example, \citet{karras2022elucidating} employ $\frac{\sigma_t^2+\sigma_d^2}{\sigma_t^2 \sigma_d^2}$ as the loss weight but modify $p(t)$ to $\frac{1}{\sqrt{2\pi} k_{\sigma}}\exp{(-\frac{\|\log \sigma_t - k_{\mu}\|^2}{2k_{\sigma}^2})}$, where $\sigma_d, k_\sigma, k_\mu$ are hyper-parameters. Thus, it is equivalent to setting $\hat{w}_t = \frac{\sigma_t^2+\sigma_d^2}{\sigma_t^2 \sigma_d^2} \frac{1}{\sqrt{2\pi} k_{\sigma}}\exp{(-\frac{\|\log \sigma_t - k_{\mu}\|^2}{2k_{\sigma}^2})}$.

We discover that maintaining consistency in the loss weight used in diffusion classifiers and during training is crucial. Take EDM as an example. As demonstrated in \cref{tab:dc_weight}, if we fail to maintain the consistency of the loss weight between training and testing, a significant performance drop occurs. The closer the weight during testing is to the weight used during training, the better the performance.

Surprisingly, the derived weight (ELBO) yields the poorest performance, as indicated in \cref{tab:dc_weight}. This issue also occurs when calculating \(D_{\mathrm{KL}}(q(\vect{x}_t|\vect{x}_{t+1}, \vect{x}_t)||p(\vect{x}_{t}|\vect{x}_{t+1}))\). The performance of our derived weight is significantly inferior to both the uniform weight and the DDPM weight. For \(D_{\mathrm{KL}}(q(\vect{x}_t|\vect{x}_{t+1}, \vect{x}_0)||p(\vect{x}_{t}|\vect{x}_{t+1}))\), substituting the derived weight \(\frac{\sigma_{t+1}-\sigma_t}{\sigma_{t}^3}\) with the training weight \(\frac{\sigma_t^2+\sigma_d^2}{\sigma_t^2 \sigma_d^2} \frac{1}{\sqrt{2\pi} k_{\sigma}}\exp(-\frac{\|\log \sigma_t - k_{\mu}\|^2}{2k_{\sigma}^2})\) is feasible. However, this substitution becomes problematic for \(D_{\mathrm{KL}}(q(\vect{x}_t|\vect{x}_{t+1}, \vect{x}_\tau)||p(\vect{x}_{t}|\vect{x}_{t+1}))\), as there is no \(\frac{\sigma_{t+1}-\sigma_t}{\sigma_t^3}\) term, as outlined in \cref{equation:analytic_kl}. We propose two alternative strategies: one interprets the shift from $w_t$ to $\hat{w}_t$ as reweighting the KL divergence by \(\frac{\hat{w}_t}{w_t}\), requiring only the multiplication of \(\frac{\hat{w}_t}{w_t}\) to our derived weight. The other strategy involves using the equation $\hat{w}_t=w_t$ to reduce the number of parameters. While these two interpretations yield identical results for \(D_{\mathrm{KL}}(q(\vect{x}_t|\vect{x}_{t+1}, \vect{x}_0)||p(\vect{x}_{t}|\vect{x}_{t+1}))\), they differ for \(D_{\mathrm{KL}}(q(\vect{x}_t|\vect{x}_{t+1}, \vect{x}_\tau)||p(\vect{x}_{t}|\vect{x}_{t+1}))\). Therefore, directly deriving an optimal weight appears infeasible. For detailed information, see \cref{appendix:weight}.


\subsection{Certified Robustness of \citet{chen2023robust}.}
\label{apd:_certified_robustness_of_dc}

In this section, our goal is to establish a certified lower bound  of \citet{chen2023robust}. Specifically, under the conditions where \(\underline{p_A} = 1\) and \(\overline{p_B} = 0\), the maximum certified radius is determined to be approximately 0.39. This implies that the certified radius we derive could not surpass 0.39. In practical applications, our method achieves an average certified radius of 0.0002.

Nevertheless, we can significantly enhance the certified radius by adjusting \(w_t\). For example, simply set \(w_t\equiv1\), we can get an average certified radius of 0.009, 34 times larger than previous one. By reducing \(w_t\) for smaller \(t\) and increasing it for larger \(t\), such as zeroing the weight when \(\sigma_t \leq 0.5\), we achieve an average radius of 0.156.

It is important to note that this finding does not imply that \citet{chen2023robust} lacks robustness in its weight-adjusted version. As discussed in previous sections, the certified radius is merely a lower bound of the actual robust radius, and a higher lower bound does not necessarily equate to a higher actual robust radius. When \(w_t\) is increased for larger \(t\), obtaining a tighter certified radius will be much easier, but the actual robust radius could intuitively decrease due to increased noise in the input images.




\begin{algorithm}[t] 
\small
   \caption{Sift-and-refine}
   \label{algorithm:sift_refine}
\begin{algorithmic}[1]
   \State \textbf{Require:}
   An ELBO computation function for a given timestep $t$ and a class $y$, denoted as $\vect{e}_{\theta}$ (applicable for DC, EPNDC, or APNDC); a noisy input image $\vect{x}_\tau$; sift timesteps $\{t_i \}_{i=0}^{T_s}$; refine steps $\{t_i \}_{i=0}^{T_r}$; threshold $\tau$.
   \State \textbf{Initialize:} the candidate class list $C = \{0, 1, \ldots, K\}$.
   \For{$i = 0$ {\bfseries to} $T_s$}
       \For{\textbf{each} class $y$ in $C$}
           \State Calculate ELBO for class $y$ at timestep $t_i$: $e_y = \vect{e}_{\theta}(\vect{x}_{t}, \sigma_{t_i}, y)$.
       \EndFor
       \State Find the class $m$ with the minimum ELBO: $m = \arg\min_{y \in C} e_y$.
       \State Update $C$ by removing classes with a reconstruction loss $\tau$ greater than that of $m$:

       $C = \{y \in C : e_y - e_m < \tau\}$
   \EndFor
   \State Reinitialize $e_y$: $e_y = \infty \ \forall y \notin C, 0 \ \forall y \in C$.
   \For{$i = 0$ {\bfseries to} $T_r$}
       \For{\textbf{each} class $y$ in $C$}
           \State Calculate and accumulate ELBO for class $y$ at timestep $t_i$: $e_y = e_y + \vect{e}_{\theta}(\vect{x}_{t}, t_i, y)$.
       \EndFor
   \EndFor
   \State \textbf{Return:} $\tilde{y} = \arg\min_y e_y$.
\end{algorithmic}
\end{algorithm}

\begin{algorithm}[!t] 
\small
   \caption{Discrete Progressive Class Selection}
   \label{alg:dpcs}
\begin{algorithmic}[1]
   \State \textbf{Require:}
   A pre-trained diffusion model $\vect{\epsilon}_{\theta}$, input image $\vect{x}$, predefined number of classes $K$, class candidate trajectory $\mathcal{C}_\textrm{cand}$ and timestep candidate trajectory $\mathcal{T}_\textrm{cand}$.
   \State \textbf{Initialize:} entire timesteps $\vect{\tilde{t}} = \textrm{vec}([1,2,\cdots,T])$, counter pointer $c=0$ and top-k cache $\mathcal{K}_\textrm{top}=\{1,2,\cdots,K\}$.
   \For{($c_1$, $c_2$) in ($\mathcal{C}_\textrm{cand}$, $\mathcal{T}_\textrm{cand}$)}
   \State $\vect{\tilde{t}}_\textrm{select} = \vect{\tilde{t}}[c_2-c:c_2]$.
   \State Calculate $\textrm{logit}_y = \sum_{t\in \vect{\tilde{t}}_\textrm{select}}[w_t\|\vect{h}_\theta(\vect{x}_t,t, y)-\vect{\epsilon}\|_2^2]$ for all $y \in \mathcal{K}_\textrm{top}$ simultaneously using $\vect{h}_\theta$.
   \State Merge $\textrm{logit}_y$ calculated in the previous step into $\textrm{logit}_y$ currently calculated.
   \State Sort $\{\textrm{logit}_y\}_{y\in \mathcal{K}_\textrm{top}}$.
   \State Set the smallest $c_1$ class as new $\mathcal{K}_\textrm{top}$ from the sorted $\{\textrm{logit}_y\}_{y\in \mathcal{K}_\textrm{top}}$.
   \State Update counter pointer $c = c_2$.
   \EndFor
\State \textbf{Return:} $\mathcal{K}_\textrm{top}$.
\end{algorithmic}
\end{algorithm}

\subsection{Time Complexity Reduction Techniques that Do Not Help}
\label{appendix:time_reduction_not_work}

\textbf{Advanced Integration. }
In this work, we select \(T\) timesteps uniformly. As we discuss earlier, there is a one-to-one mapping between \(\sigma\) and \(t\) in our context. When we change the variable to \(\sigma\), this can be seen as calculating the expectation using the Euler integral. We also try using more advanced integration methods, like Gauss Quadrature, but only observe nuanced differences that there is no any difference in clean accuracy and certified robustness, indicating that the expectation calculation is robust to small truncation errors. We do not attempt any non-parallel integrals, as they significantly reduce throughput.

\textbf{Sharing Noise Across Different Timesteps. }
In \cref{sec3:accelerate}, we demonstrate that by using the same \(\vect{x}_i\) for all classes, we significantly reduce the variance of predictions. This allows us to use a fewer number of timesteps, thereby greatly reducing time complexity. From another perspective, using the same \(\vect{x}_i\) for all classes is equivalent to applying the same noise to samples at the same timestep. This raises the question: "If we share the noise across all samples, could we further reduce time complexity?" However, this approach proves ineffective. Analyzing the difference between the logits of two classes reveals that sharing noise does not reduce the variance of this difference. Consequently, we opt not to share noise across different timesteps but only within the same timestep for different classes.


\textbf{Discrete progressive class selection algorithm.} We design a normalized discrete critical class selection algorithm for accelerating diffusion classifiers. Typically, the time complexity of the vanilla diffusion classifier can be defined as $\mathcal{O}(KT)$, where $K$ denotes the count of classes and $T$ denotes the number of timesteps. Sharpening $T$ is tractable, but overdoing that practice can have an extremely negative impact on final performance. Another way to speed up the computation is to actively discard some unimportant classes when estimating the conditional likelihood via conditional ELBO. Indeed, these two sub-approaches can be merged in parallel in a single algorithm (\textit{i.e.}, our proposed class selection algorithm). Additionally, to achieve an in-depth analysis of acceleration in diffusion classifiers, the time complexity of our proposed discrete acceleration algorithm can be determined by a predefined manual class candidate trajectory $\mathcal{C}_\textrm{cand}$ as well as a predefined manual timestep candidate trajectory $\mathcal{T}_\textrm{cand}$.

The procedure of the discrete progressive class selection algorithm can be described in Algorithm~\ref{alg:dpcs}. The specific mechanism of Algorithm~\ref{alg:dpcs} can be described from a simple example: the predefined trajectory $\mathcal{C}_\textrm{cand}$ is set as [400, 80, 40, 1] and the predefined trajectory $\mathcal{T}_\textrm{cand}$ is set as [2, 5, 25, 50] simultaneously. Assume we conduct this accelerated diffusion classifier with $K=$1000 in the specific ImageNet~\citep{xiao2022densepure}. First, we estimate the diffusion loss, which is equivalent to conditional likelihood, for 1000 classes by 2 time points and then sort the losses of the different classes to obtain the smallest 400 classes. Subsequently, we estimate the diffusion loss for 400 classes by 3 (\textit{w.r.t.} 5-2) time points and then sort the losses for different classes to obtain the smallest 80 classes. Ultimately, the procedure concludes with the selection of 40 out of 80 classes with 20 time points (\textit{w.r.t.} 25-5), followed by choosing 1 class from 40 classes with 25 (\textit{w.r.t.} 50-25) time points.

\begin{table}[!t]
\centering
\caption{The experimental results of discrete progressive class selection algorithm.}
\label{tab:dpcs}
\begin{tabular}{llcc}
\toprule
Timestep Trajectory & Classes Trajectory & Time Complexity & Accuracy \\
\midrule
$[2,\ 5,\ 25,\ 50]$ & $[400,\ 80,\ 40,\ 1]$ & 5800 & 58.00\% \\
$[2,\ 5,\ 17,\ 50]$ & $[500,\ 136,\ 37,\ 1]$ & 6353 & 58.20\% \\
$[2,\ 5,\ 17,\ 50]$ & $[500,\ 100,\ 20,\ 1]$ & 5360 & 57.80\% \\
$[2,\ 4,\ 12,\ 25]$ & $[500,\ 136,\ 37,\ 1]$ & 4569 & 54.10\% \\
$[2,\ 4,\ 12,\ 25]$ & $[500,\ 100,\ 20,\ 1]$ & \textbf{4060} & 54.69\% \\
$[2,\ 7,\ 35,\ 50]$ & $[297,\ 80,\ 15,\ 1]$ & 5900 & 56.84\% \\
$[2,\ 7,\ 35,\ 50]$ & $[297,\ 100,\ 20,\ 1]$ & 6535 & 57.42\% \\
$[2,\ 6,\ 20,\ 50]$ & $[350,\ 100,\ 20,\ 1]$ & 5400 & 57.81\% \\
$[2,\ 6,\ 25,\ 50]$ & $[350,\ 100,\ 20,\ 1]$ & 5800 & 56.10\% \\
$[2,\ 7,\ 35,\ 50]$ & $[297,\ 80,\ 40,\ 1]$ & 6275 & 56.84\% \\
$[2,\ 5,\ 25,\ 50]$ & $[297,\ 80,\ 40,\ 1]$ & 5491 & 57.42\% \\
$[2,\ 5,\ 15,\ 50]$ & $[500,\ 80,\ 40,\ 1]$ & 5700 & \textbf{59.18\%} \\
$[2,\ 4,\ 8,\ 50]$  & $[500,\ 200,\ 100,\ 1]$ & 8000 & 56.05\% \\
$[2,\ 4,\ 8,\ 50]$  & $[400,\ 160,\ 80,\ 1]$ & 6800 & 56.25\% \\
$[2,\ 4,\ 8,\ 50]$  & $[400,\ 120,\ 60,\ 1]$ & 5800 & 56.05\% \\
$[2,\ 4,\ 8,\ 50]$  & $[400,\ 80,\ 40,\ 1]$ & 4800 & 55.47\% \\
\bottomrule
\end{tabular}
\vspace{-0.1in}
\end{table}

The ablation outcomes presented in Table~\ref{tab:dpcs} illustrate that suitable design of predefined manual timestep candidate trajectories $\mathcal{C}_\textrm{cand}$ and $\mathcal{T}_\textrm{cand}$ are crucial in the final performance of diffusion classifiers. A salient observation in the initial definition of $\mathcal{C}_\textrm{cand}$ and $\mathcal{T}_\textrm{cand}$ is that $\mathcal{T}_\textrm{cand}$ can initially be small, whereas $\mathcal{C}_\textrm{cand}$ must start significantly larger to achieve high accuracy while ensuring low time complexity. Thus, the timestep trajectory [2, 5, 15, 50] and the classes trajectory [400, 80, 40, 1] are in accordance with the above conditions and achieve the best performance in various predefined trajectories.


\textbf{Accelerated Diffusion Models. }
We observe that consistency models~\citep{song2023consistency_model} and CUD~\citep{shao2023catch} determine the class of the generated object at large timesteps, while they primarily refine the images at lower timesteps. In other words, at smaller timesteps, the similarity between predictions of different classes is so high that classification becomes challenging. Conversely, at larger timesteps, the images become excessively noisy, leading to less accurate predictions. Consequently, constructing generative classifiers from consistency models and CUD appears to be a difficult task.

\section{Limitations}
\label{sec:limitation}
Despite the significant improvements in certified robustness, this work still presents limitations. Firstly, the time complexity of the diffusion classifier restricts its applicability in real-world scenarios, providing primarily theoretical benefits. Additionally, the certified bounds for diffusion classifiers are not straightforward enough. Future efforts could emulate the strong law of randomized smoothing to establish a more direct certified lower bound for \citet{chen2023robust}.

\newpage

\section*{Ethics Statements}


The advancement of robust machine learning models, particularly in the realm of classification under adversarial conditions, is crucial for the safe and reliable deployment of AI in critical applications. Our work on Noised Diffusion Classifiers (NDCs) represents a significant step towards developing more secure and trustworthy AI systems. By achieving unprecedented levels of certified robustness, our approach enhances the reliability of machine learning models in adversarial environments. This is particularly beneficial in fields where decision-making reliability is paramount, such as autonomous driving, medical diagnostics, and financial fraud detection. Our methodology could substantially increase public trust in AI technologies by demonstrating resilience against adversarial attacks, thereby fostering wider acceptance and integration of AI solutions in sensitive and impactful sectors.

\newpage
\section*{NeurIPS Paper Checklist}

\begin{enumerate}

\item {\bf Claims}
    \item[] Question: Do the main claims made in the abstract and introduction accurately reflect the paper's contributions and scope?
    \item[] Answer: \answerYes{} 
    \item[] Justification: In the abstract and introduction, we already state that we focus on certified robustness of diffusion classifiers.
    \item[] Guidelines:
    \begin{itemize}
        \item The answer NA means that the abstract and introduction do not include the claims made in the paper.
        \item The abstract and/or introduction should clearly state the claims made, including the contributions made in the paper and important assumptions and limitations. A No or NA answer to this question will not be perceived well by the reviewers. 
        \item The claims made should match theoretical and experimental results, and reflect how much the results can be expected to generalize to other settings. 
        \item It is fine to include aspirational goals as motivation as long as it is clear that these goals are not attained by the paper. 
    \end{itemize}

\item {\bf Limitations}
    \item[] Question: Does the paper discuss the limitations of the work performed by the authors?
    \item[] Answer: \answerYes{} 
    \item[] Justification: See \cref{sec:limitation}.
    \item[] Guidelines:
    \begin{itemize}
        \item The answer NA means that the paper has no limitation while the answer No means that the paper has limitations, but those are not discussed in the paper. 
        \item The authors are encouraged to create a separate "Limitations" section in their paper.
        \item The paper should point out any strong assumptions and how robust the results are to violations of these assumptions (e.g., independence assumptions, noiseless settings, model well-specification, asymptotic approximations only holding locally). The authors should reflect on how these assumptions might be violated in practice and what the implications would be.
        \item The authors should reflect on the scope of the claims made, e.g., if the approach was only tested on a few datasets or with a few runs. In general, empirical results often depend on implicit assumptions, which should be articulated.
        \item The authors should reflect on the factors that influence the performance of the approach. For example, a facial recognition algorithm may perform poorly when image resolution is low or images are taken in low lighting. Or a speech-to-text system might not be used reliably to provide closed captions for online lectures because it fails to handle technical jargon.
        \item The authors should discuss the computational efficiency of the proposed algorithms and how they scale with dataset size.
        \item If applicable, the authors should discuss possible limitations of their approach to address problems of privacy and fairness.
        \item While the authors might fear that complete honesty about limitations might be used by reviewers as grounds for rejection, a worse outcome might be that reviewers discover limitations that aren't acknowledged in the paper. The authors should use their best judgment and recognize that individual actions in favor of transparency play an important role in developing norms that preserve the integrity of the community. Reviewers will be specifically instructed to not penalize honesty concerning limitations.
    \end{itemize}

\item {\bf Theory Assumptions and Proofs}
    \item[] Question: For each theoretical result, does the paper provide the full set of assumptions and a complete (and correct) proof?
    \item[] Answer: \answerYes{} 
    \item[] Justification: Assumptions are provided in \cref{apd:_assumptions_and_lemmas}. Proofs are in the Appendix.
    \item[] Guidelines:
    \begin{itemize}
        \item The answer NA means that the paper does not include theoretical results. 
        \item All the theorems, formulas, and proofs in the paper should be numbered and cross-referenced.
        \item All assumptions should be clearly stated or referenced in the statement of any theorems.
        \item The proofs can either appear in the main paper or the supplemental material, but if they appear in the supplemental material, the authors are encouraged to provide a short proof sketch to provide intuition. 
        \item Inversely, any informal proof provided in the core of the paper should be complemented by formal proofs provided in appendix or supplemental material.
        \item Theorems and Lemmas that the proof relies upon should be properly referenced. 
    \end{itemize}

    \item {\bf Experimental Result Reproducibility}
    \item[] Question: Does the paper fully disclose all the information needed to reproduce the main experimental results of the paper to the extent that it affects the main claims and/or conclusions of the paper (regardless of whether the code and data are provided or not)?
    \item[] Answer: \answerYes{} 
    \item[] Justification: We provide the code. Also, the pseudocode and all hyper-parameters (although there is nearly no hyper-parameters in this work) is are provided.
    \item[] Guidelines:
    \begin{itemize}
        \item The answer NA means that the paper does not include experiments.
        \item If the paper includes experiments, a No answer to this question will not be perceived well by the reviewers: Making the paper reproducible is important, regardless of whether the code and data are provided or not.
        \item If the contribution is a dataset and/or model, the authors should describe the steps taken to make their results reproducible or verifiable. 
        \item Depending on the contribution, reproducibility can be accomplished in various ways. For example, if the contribution is a novel architecture, describing the architecture fully might suffice, or if the contribution is a specific model and empirical evaluation, it may be necessary to either make it possible for others to replicate the model with the same dataset, or provide access to the model. In general. releasing code and data is often one good way to accomplish this, but reproducibility can also be provided via detailed instructions for how to replicate the results, access to a hosted model (e.g., in the case of a large language model), releasing of a model checkpoint, or other means that are appropriate to the research performed.
        \item While NeurIPS does not require releasing code, the conference does require all submissions to provide some reasonable avenue for reproducibility, which may depend on the nature of the contribution. For example
        \begin{enumerate}
            \item If the contribution is primarily a new algorithm, the paper should make it clear how to reproduce that algorithm.
            \item If the contribution is primarily a new model architecture, the paper should describe the architecture clearly and fully.
            \item If the contribution is a new model (e.g., a large language model), then there should either be a way to access this model for reproducing the results or a way to reproduce the model (e.g., with an open-source dataset or instructions for how to construct the dataset).
            \item We recognize that reproducibility may be tricky in some cases, in which case authors are welcome to describe the particular way they provide for reproducibility. In the case of closed-source models, it may be that access to the model is limited in some way (e.g., to registered users), but it should be possible for other researchers to have some path to reproducing or verifying the results.
        \end{enumerate}
    \end{itemize}

\item {\bf Open access to data and code}
    \item[] Question: Does the paper provide open access to the data and code, with sufficient instructions to faithfully reproduce the main experimental results, as described in supplemental material?
    \item[] Answer: \answerYes{} 
    \item[] Justification: We only use CIFAR-10 and ImageNet. 
    \item[] Guidelines:
    \begin{itemize}
        \item The answer NA means that paper does not include experiments requiring code.
        \item Please see the NeurIPS code and data submission guidelines (\url{https://nips.cc/public/guides/CodeSubmissionPolicy}) for more details.
        \item While we encourage the release of code and data, we understand that this might not be possible, so “No” is an acceptable answer. Papers cannot be rejected simply for not including code, unless this is central to the contribution (e.g., for a new open-source benchmark).
        \item The instructions should contain the exact command and environment needed to run to reproduce the results. See the NeurIPS code and data submission guidelines (\url{https://nips.cc/public/guides/CodeSubmissionPolicy}) for more details.
        \item The authors should provide instructions on data access and preparation, including how to access the raw data, preprocessed data, intermediate data, and generated data, etc.
        \item The authors should provide scripts to reproduce all experimental results for the new proposed method and baselines. If only a subset of experiments are reproducible, they should state which ones are omitted from the script and why.
        \item At submission time, to preserve anonymity, the authors should release anonymized versions (if applicable).
        \item Providing as much information as possible in supplemental material (appended to the paper) is recommended, but including URLs to data and code is permitted.
    \end{itemize}

\item {\bf Experimental Setting/Details}
    \item[] Question: Does the paper specify all the training and test details (e.g., data splits, hyperparameters, how they were chosen, type of optimizer, etc.) necessary to understand the results?
    \item[] Answer: \answerNA{} 
    \item[] Justification: This work employs a single off-the-shelf diffusion model for robust classification without requiring any additional training.
    \item[] Guidelines:
    \begin{itemize}
        \item The answer NA means that the paper does not include experiments.
        \item The experimental setting should be presented in the core of the paper to a level of detail that is necessary to appreciate the results and make sense of them.
        \item The full details can be provided either with the code, in appendix, or as supplemental material.
    \end{itemize}

\item {\bf Experiment Statistical Significance}
    \item[] Question: Does the paper report error bars suitably and correctly defined or other appropriate information about the statistical significance of the experiments?
    \item[] Answer: \answerYes{} 
    \item[] Justification: Following previous work in the community of certified robustness, we use the same type I error, 0.001.
    \item[] Guidelines:
    \begin{itemize}
        \item The answer NA means that the paper does not include experiments.
        \item The authors should answer "Yes" if the results are accompanied by error bars, confidence intervals, or statistical significance tests, at least for the experiments that support the main claims of the paper.
        \item The factors of variability that the error bars are capturing should be clearly stated (for example, train/test split, initialization, random drawing of some parameter, or overall run with given experimental conditions).
        \item The method for calculating the error bars should be explained (closed form formula, call to a library function, bootstrap, etc.)
        \item The assumptions made should be given (e.g., Normally distributed errors).
        \item It should be clear whether the error bar is the standard deviation or the standard error of the mean.
        \item It is OK to report 1-sigma error bars, but one should state it. The authors should preferably report a 2-sigma error bar than state that they have a 96\% CI, if the hypothesis of Normality of errors is not verified.
        \item For asymmetric distributions, the authors should be careful not to show in tables or figures symmetric error bars that would yield results that are out of range (e.g. negative error rates).
        \item If error bars are reported in tables or plots, The authors should explain in the text how they were calculated and reference the corresponding figures or tables in the text.
    \end{itemize}

\item {\bf Experiments Compute Resources}
    \item[] Question: For each experiment, does the paper provide sufficient information on the computer resources (type of compute workers, memory, time of execution) needed to reproduce the experiments?
    \item[] Answer: \answerYes{} 
    \item[] Justification: These are extensive discussed in \cref{appendix:ablation:timecomplexityreduction}.
    \item[] Guidelines:
    \begin{itemize}
        \item The answer NA means that the paper does not include experiments.
        \item The paper should indicate the type of compute workers CPU or GPU, internal cluster, or cloud provider, including relevant memory and storage.
        \item The paper should provide the amount of compute required for each of the individual experimental runs as well as estimate the total compute. 
        \item The paper should disclose whether the full research project required more compute than the experiments reported in the paper (e.g., preliminary or failed experiments that didn't make it into the paper). 
    \end{itemize}
    
\item {\bf Code Of Ethics}
    \item[] Question: Does the research conducted in the paper conform, in every respect, with the NeurIPS Code of Ethics \url{https://neurips.cc/public/EthicsGuidelines}?
    \item[] Answer: \answerYes{} 
    \item[] Justification: See page 30.
    \item[] Guidelines:
    \begin{itemize}
        \item The answer NA means that the authors have not reviewed the NeurIPS Code of Ethics.
        \item If the authors answer No, they should explain the special circumstances that require a deviation from the Code of Ethics.
        \item The authors should make sure to preserve anonymity (e.g., if there is a special consideration due to laws or regulations in their jurisdiction).
    \end{itemize}

\item {\bf Broader Impacts}
    \item[] Question: Does the paper discuss both potential positive societal impacts and negative societal impacts of the work performed?
    \item[] Answer: \answerYes{} 
    \item[] Justification: See page 30.
    \item[] Guidelines:
    \begin{itemize}
        \item The answer NA means that there is no societal impact of the work performed.
        \item If the authors answer NA or No, they should explain why their work has no societal impact or why the paper does not address societal impact.
        \item Examples of negative societal impacts include potential malicious or unintended uses (e.g., disinformation, generating fake profiles, surveillance), fairness considerations (e.g., deployment of technologies that could make decisions that unfairly impact specific groups), privacy considerations, and security considerations.
        \item The conference expects that many papers will be foundational research and not tied to particular applications, let alone deployments. However, if there is a direct path to any negative applications, the authors should point it out. For example, it is legitimate to point out that an improvement in the quality of generative models could be used to generate deepfakes for disinformation. On the other hand, it is not needed to point out that a generic algorithm for optimizing neural networks could enable people to train models that generate Deepfakes faster.
        \item The authors should consider possible harms that could arise when the technology is being used as intended and functioning correctly, harms that could arise when the technology is being used as intended but gives incorrect results, and harms following from (intentional or unintentional) misuse of the technology.
        \item If there are negative societal impacts, the authors could also discuss possible mitigation strategies (e.g., gated release of models, providing defenses in addition to attacks, mechanisms for monitoring misuse, mechanisms to monitor how a system learns from feedback over time, improving the efficiency and accessibility of ML).
    \end{itemize}
    
\item {\bf Safeguards}
    \item[] Question: Does the paper describe safeguards that have been put in place for responsible release of data or models that have a high risk for misuse (e.g., pretrained language models, image generators, or scraped datasets)?
    \item[] Answer: \answerNA{} 
    \item[] Justification: We just do classification task.
    \item[] Guidelines:
    \begin{itemize}
        \item The answer NA means that the paper poses no such risks.
        \item Released models that have a high risk for misuse or dual-use should be released with necessary safeguards to allow for controlled use of the model, for example by requiring that users adhere to usage guidelines or restrictions to access the model or implementing safety filters. 
        \item Datasets that have been scraped from the Internet could pose safety risks. The authors should describe how they avoided releasing unsafe images.
        \item We recognize that providing effective safeguards is challenging, and many papers do not require this, but we encourage authors to take this into account and make a best faith effort.
    \end{itemize}

\item {\bf Licenses for existing assets}
    \item[] Question: Are the creators or original owners of assets (e.g., code, data, models), used in the paper, properly credited and are the license and terms of use explicitly mentioned and properly respected?
    \item[] Answer: \answerYes{} 
    \item[] Justification: We cite the CIFAR-10 and ImageNet.
    \item[] Guidelines:
    \begin{itemize}
        \item The answer NA means that the paper does not use existing assets.
        \item The authors should cite the original paper that produced the code package or dataset.
        \item The authors should state which version of the asset is used and, if possible, include a URL.
        \item The name of the license (e.g., CC-BY 4.0) should be included for each asset.
        \item For scraped data from a particular source (e.g., website), the copyright and terms of service of that source should be provided.
        \item If assets are released, the license, copyright information, and terms of use in the package should be provided. For popular datasets, \url{paperswithcode.com/datasets} has curated licenses for some datasets. Their licensing guide can help determine the license of a dataset.
        \item For existing datasets that are re-packaged, both the original license and the license of the derived asset (if it has changed) should be provided.
        \item If this information is not available online, the authors are encouraged to reach out to the asset's creators.
    \end{itemize}

\item {\bf New Assets}
    \item[] Question: Are new assets introduced in the paper well documented and is the documentation provided alongside the assets?
    \item[] Answer: \answerNA{} 
    \item[] Justification: No new assets in this paper.
    \item[] Guidelines:
    \begin{itemize}
        \item The answer NA means that the paper does not release new assets.
        \item Researchers should communicate the details of the dataset/code/model as part of their submissions via structured templates. This includes details about training, license, limitations, etc. 
        \item The paper should discuss whether and how consent was obtained from people whose asset is used.
        \item At submission time, remember to anonymize your assets (if applicable). You can either create an anonymized URL or include an anonymized zip file.
    \end{itemize}

\item {\bf Crowdsourcing and Research with Human Subjects}
    \item[] Question: For crowdsourcing experiments and research with human subjects, does the paper include the full text of instructions given to participants and screenshots, if applicable, as well as details about compensation (if any)? 
    \item[] Answer: \answerNA{} 
    \item[] Justification: Not this kind of research.
    \item[] Guidelines:
    \begin{itemize}
        \item The answer NA means that the paper does not involve crowdsourcing nor research with human subjects.
        \item Including this information in the supplemental material is fine, but if the main contribution of the paper involves human subjects, then as much detail as possible should be included in the main paper. 
        \item According to the NeurIPS Code of Ethics, workers involved in data collection, curation, or other labor should be paid at least the minimum wage in the country of the data collector. 
    \end{itemize}

\item {\bf Institutional Review Board (IRB) Approvals or Equivalent for Research with Human Subjects}
    \item[] Question: Does the paper describe potential risks incurred by study participants, whether such risks were disclosed to the subjects, and whether Institutional Review Board (IRB) approvals (or an equivalent approval/review based on the requirements of your country or institution) were obtained?
    \item[] Answer: \answerNA{} 
    \item[] Justification: The paper does not involve crowdsourcing nor research with human subjects.
    \item[] Guidelines:
    \begin{itemize}
        \item The answer NA means that the paper does not involve crowdsourcing nor research with human subjects.
        \item Depending on the country in which research is conducted, IRB approval (or equivalent) may be required for any human subjects research. If you obtained IRB approval, you should clearly state this in the paper. 
        \item We recognize that the procedures for this may vary significantly between institutions and locations, and we expect authors to adhere to the NeurIPS Code of Ethics and the guidelines for their institution. 
        \item For initial submissions, do not include any information that would break anonymity (if applicable), such as the institution conducting the review.
    \end{itemize}

\end{enumerate}

\end{document}